\newcommand{\ind}{{1}}
\newcommand{\Ep}{\mathbb{E}}
\newcommand{\Prb}{\mathbb{P}}
\newcommand{\real}{\mathbb{R}} 
\newcommand{\prn}[1]{\left({#1}\right)} 
\newcommand{\brk}[1]{\left[{#1}\right]} 
\newcommand{\constant}{\mathsf{C}}  
\newcommand{\cstMT}{\constant_{\mathsf{MT}}}
\newcommand{\cat}{\mathsf{Cat}}  
\newcommand{\multinom}{\mathsf{Multinom}}  
\newcommand{\sgn}{\mathsf{sgn}}  
\newcommand{\pred}{\mathsf{d}}  
\newcommand{\dstr}{P}
\newcommand{\lr}{^{\textup{lr}}}  
\newcommand{\lre}{^{\epsilon}}  
\newcommand{\sX}{\mathcal{X}} 
\newcommand{\sY}{\mathcal{Y}} 
\newcommand{\tloss}{\ell}  
\newcommand{\sloss}{\varphi}  
\newcommand{\extloss}{\delta_\tloss}
\newcommand{\exsloss}{\delta_\sloss}
\newcommand{\exslossagg}{\delta_{\sloss,A}}
\newcommand{\exslossaggm}{\delta_{\sloss,A_m}}
\newcommand{\epsargmin}[1][]{
  \ifthenelse{\isempty{#1}}{%
    \mathop{\epsilon\textup{-argmin}}
  }{%
    \mathop{\epsilon_{#1}\textup{-argmin}}
  }
}
\newcommand{\sups}[1]{^{({#1})}} 
\newcommand{\knnriskgap}{\delta_{\sloss,n,K}}  
\title{Some Robustness Properties of Label Cleaning}
\author{Chen Cheng \and John Duchi}
\begin{document}

\maketitle

\begin{abstract}
  We demonstrate that learning procedures that rely on aggregated labels, e.g.,
  label information distilled from noisy responses, enjoy robustness properties
  impossible without data cleaning.
  This robustness appears in several ways.
  In the context of risk consistency---when one takes the standard approach
  in machine learning of minimizing a surrogate (typically convex) loss in
  place of a desired task loss (such as the zero-one mis-classification
  error)---procedures using label aggregation obtain stronger consistency
  guarantees than those even possible using raw labels.
  And while classical statistical scenarios of fitting perfectly-specified
  models suggest that incorporating all possible information---modeling
  uncertainty in labels---is statistically efficient, consistency fails for
  ``standard'' approaches as soon as a loss to be minimized is even slightly
  mis-specified.
  Yet procedures leveraging aggregated information still converge to optimal
  classifiers, highlighting how incorporating a fuller view of the data
  analysis pipeline, from collection to model-fitting to prediction time,
  can yield a more robust methodology by refining noisy signals.
\end{abstract}

\section{Introduction}

Consider the data collection pipeline in a supervised learning problem.
Naively, we say that we collect pairs $(X_i, Y_i)_{i = 1}^n$ of features
$X_i$ and labels $Y_i$, fit a model, and away we go~\citep{HastieTiFr09}.
But this belies the complexity of modern datasets~\citep{DengDoSoLiLiFe09,
  KrizhevskyHi09, RussakovskyDeSuKrSaMaHuKaKhBeBeFe15}, which require
substantial data cleaning, filtering, often crowdsourcing multiple labels
and then denoising them.
The crowdsourcing community has intensively studied such data cleaning,
especially in the context of obtaining ``gold standard''
labels~\citep{DawidSk79, WhitehillWuBeMoRu09, WelinderBrPeBe10, Vaughan18,
  PlataniosAlXiMi20}.
We take a complementary view of this process, investigating the
ways in which data aggregation fundamentally and necessarily
improves the consistency of models we fit.

In a sense, this paper argues that label cleaning, or aggregating labels
together, provides robustness that is impossible to achieve
without aggregating labels.
There are two faces to this robustness.
First, we improve consistency of estimation:
when minimizing a surrogate loss (e.g., the multiclass logistic
loss) instead of a \emph{task} loss (e.g., the zero-one error),
procedures that use aggregated labels can achieve consistent and optimal
prediction in the limit when this is impossible without data aggregation.
Second, even in finite-dimensional statistical problems, this
aggregation can provide consistent classifiers when
standard methods fail.


Important contributions to the theory of surrogate risk consistency trace to
the 2000s~\citep{Zhang04b,LugosiVa04, Steinwart07}, with
\citet*{BartlettJoMc06} characterizing when fitting a model using a convex
surrogate is consistent for binary classification for the zero-one error.
%
%
Since this work, there has been an abundance of work on
surrogate risk consistency, including on multi-label
classification~\citep{Zhang04, TewariBa07, GaoZh11, ZhangAg20,
  AwasthiFrMaMoZh21}, ranking problems~\citep{DuchiMaJo10, DuchiMaJo12,
  PiresSzGh13}, structured prediction~\citep{OsokinBaLa17, CabannesRuBa20,
  NowakBaRu20}, ordinal regression~\citep{PedregosaBaGr17}, restricted hypothesis class~\citep{AwasthiMaMoZh22,
  AwasthiMaMoZh22b, MaoMoZh24, MaoMoZh25} and general theory~\citep{Steinwart07}.
On the one hand,
these analyses, which consider the standard supervised learning scenario of
data pairs $(X, Y)$,  enable us to fully exploit the entire
statistical theory of empirical processes~\citep{VanDerVaartWe96,
  BartlettBoMe05, Koltchinskii06a, BartlettJoMc06}.
On the other, they do not address the data aggregation machinery now
common in modern dataset creation.

It is thus natural to ask about the interaction between consistency
and data aggregation---to begin with, do we need to aggregate at all? If we
can achieve surrogate consistency without data aggregation, we should
perhaps just rely on our mature theoretical
understanding of processes with $(X,Y)$ pairs.
Going one step further, if we aggregate, does aggregation help
consistency, and in what sense does it help?
These main questions motivate this paper.

To further underpin the importance of studying consistency and aggregated
labels, we propose concrete examples---in ranking and binary- and
multiclass-classification with linear estimators---where estimators using
only pairs $(X, Y)$ necessarily fail, but label aggregation methods yield
consistency.
We develop new notions and theory for surrogate consistency with data
aggregation.
In fully nonparametric scenarios, we show how the
number of samples aggregated combine with noise conditions to
improve consistency.
Aggregation will also allow us to demonstrate surrogate risk consistency
under only weak conditions
the surrogate loss;
in the language of the field, losses using aggregated labels
admit (approximate) linear comparison inequalities.
%
Additionally, in contrast to conventional risk consistency theory, which
requires taking a hypothesis class $\mc{F}$ consisting of all measurable
functions, we will show results in classification problems where aggregating
labels guarantees consistency even over restricted hypothesis classes, which
may fail without aggregation.


\section{Preliminaries}

We first review classical surrogate risk minimization.
Let $\sX$ be the input space and $\sY$ be the output space, with
data $(X_i,Y_i)_{i = 1}^n \in \sX \times \sY$ drawn i.i.d.\ $\dstr$.
Consider learning a scoring function $f: \mathcal{X} \to
\mathbb{R}^d$ that maps an input $x \in \sX$ to a score $s \in \mathbb{R}^d$
for some $d \geq 1$, where a decoder $\pred: \mathbb{R}^d \to \sY$
determines the final prediction via $\what{y} = \pred \circ f(x)$.
Given a loss $\tloss: \sY \times \mathcal{Y} \to \R_+$ and
hypothesis class $\mathcal{F}$, the goal is to minimize
the \emph{task risk} over $f \in \mathcal{F}$
\begin{align}
  R(f) := \mathbb{E}_P \brk{\tloss(\pred \circ f(X), Y)}.
  \label{eq:task-loss-problem-basic}
\end{align}
For example, in binary classification, $d = 1$,
$\pred(s) = \sgn(s)$, and $\tloss(y, y') = \indic{yy' \le 0}$,
yielding $R(f) = \P(Y f(X) \le 0)$.
The challenge of minimizing $R(f)$ is that the task loss $\tloss$ can be
nonsmooth, nonconvex, and---even more---uninformative: the loss landscape of
the $0$-$1$ loss is flat almost everywhere.
This makes even practical (e.g., first-order) optimization impossible.
We will consider a slightly more sophisticated
version of the problem~\eqref{eq:task-loss-problem-basic},
where instead of the loss $\tloss$ being defined only
in terms of the instantaneous label $Y$, we will allow it to depend
on $P(Y \in \cdot \mid X)$, so that
we investigate
\begin{equation}
  \label{eq:task-loss-problem}
  R(f) \defeq \E_{P}\left[\tloss(\pred \circ f(X), P(\cdot \mid X))
    \right],
\end{equation}
whose minimizers frequently coincide with the original
problem~\eqref{eq:task-loss-problem}, but which allows more sophistication.
(For example, in multiclass classification, $Y \in \{1, \ldots, k\}$,
and taking $\tloss(\what{y}, P) = \sum_{y} P(Y = y) \indics{\what{y} \neq
y}$, the risk coincides with the standard 0-1 error rate.)

Instead of the task loss $\tloss$, we thus consider an easier to optimize
surrogate $\sloss: \R^d \times \mc{Y} \to \R$.
Then rather than attacking the risk~\eqref{eq:task-loss-problem} directly,
we minimize surrogate risk
\begin{equation*}
  R_\sloss(f) := \mathbb{E}_P \brk{\sloss(f(X), Y)}.
\end{equation*}
For this to be sensible, we must exhibit some type of consistency with the
task problem~\eqref{eq:task-loss-problem}.
In this paper, we particularly study in two scenarios, which
we will make more formal:
\begin{enumerate}[(i)]
\item The ``classical'' case of Fisher consistency, where $\mc{F}$ contains
  all Borel functions;
\item Statistical scenarios in which the hypothesis class $\mc{F}$ is
  parametric but may be mis-specified.
\end{enumerate}
Our main message is that label aggregation improves
consistency in both scenarios, demonstrating the robustness of label
cleaning.

\subsection{Label aggregation}

Instead of obtaining $(X_i, Y_i)$ pairs, consider
the case that we replace the output $Y$ with a more abstract
variable $Z \in \mathcal{Z}$.
For example, in the motivating scenario in the introduction in which we
collect multiple (say, $m$) noisy labels for each example $X$, we
take $Z = (Y_1, \ldots, Y_m) \in \mc{Y}^m$.
\citet{MaoMoZh24} consider similar multilabeled 
data structure for multilabel losses, while we focus on the label aggregation pipeline.
For an abstract ``aggregation space'' $\mc{A}$, let $A : \mc{Z} \to \mc{A}$
be an aggregating function (e.g., majority vote),
and
let $\sloss : \R^d \times \mc{A} \to \R_+$ be a surrogate loss
defined on this aggregation space.
We then define the aggregated surrogate risk
\begin{align}
  R_{\sloss,A}(f) := \mathbb{E} \brk{\sloss(f(X),
    A(Z))}, \label{eq:agg-surogate-loss-problem}
\end{align}
asking when minimizing the surrogate
problem~\eqref{eq:agg-surogate-loss-problem} is sufficient to
minimize the actual task risk~\eqref{eq:task-loss-problem}.
%
%
Two concrete examples may make this clearer.

\begin{example}[Majority vote] \label{example:mv}
  In the repeated sampling regime, data collection takes the
  form $Z=(Y_1,\cdots,Y_m)$, $Y_i \mid X = x \simiid
  \dstr_{Y \mid X=x}$.
  Define
  $A_m(Z)$ to be the 
  empirical minimizer
  \begin{align*}
    A(Z) = A(\{Y_1, \ldots, Y_m\})
    =\argmin_{y \in \mc{Y}}
    \sum_{l = 1}^m \tloss(y, Y_l). 
  \end{align*}
  When $\tloss(\what{y}, y) = \indics{y \neq \what{y}}$,
  this corresponds exactly to majority vote; the more
  general form allows more abstract procedures.
\end{example}

We can also (roughly) capture $K$-nearest neighbor aggregation procedures:

\begin{example}[$K$-nearest neighbors] \label{example:knn}
  Consider an abstract repeated sampling scenario in which an example $X$
  comes with a label $Y$ and an additional draw $(X_i, Y_i)_{i = 1}^m
  \simiid P$, where $m$ is the number of additional examples, so $Z = (Y,
  (X_i, Y_i)_{i = 1}^m)$.  Let $\dist: \sX \times \sX \to \R_+$ be a
  distance metric on $\sX$.
  Let $\{X_{(1)}, \ldots, X_{(m)}\}$ order the input sample $\{X_i\}_{i=1}^m$
  by distance, $\dist(X,X_{(1)}) \leq \ldots \leq \dist(X,X_{(m)})$ (and let
  $X_{(0)} = X$).
  For $K \geq 0$, we can aggregate the
  $K$-nearest neighbors of $X$, for example, by choosing
  \begin{align*}
    A_{m,K}(Z) \defeq
    \argmin_{y \in \mc{Y}} \sum_{l = 0}^K \tloss(y, Y_{(l)}). 
  \end{align*}
  In Appendix~\ref{sec:discussion-knn}, we leverage the results in the
  coming sections to move beyond this population-level scenario to address
  aggregation from a single sample $(X_i, Y_i)_{i = 1}^n$.
\end{example}

\section{Surrogate consistency}

The standard framework for surrogate consistency~\citep{Steinwart07}
assumes that $\mc{F}$ consists of all Borel
measurable functions $f: \mathcal{X} \to \mathbb{R}^d$.
%
%
Working in the abstract setting in the preliminaries, define the
conditional task risk $R(s \mid x)$ and the conditional surrogate risk
$R_\sloss(s \mid x)$, $s \in \mathbb{R}^d$ by
\begin{align*}
  R(s \mid x)
  \defeq \tloss(\pred \circ s, P(Y \in \cdot \mid X = x))
  ~~ \mbox{and} ~~
  R_\sloss(s \mid x) := \mathbb{E}\brk{\sloss(s, Y) \mid X = x}.
\end{align*}
We then define the pointwise excess risks
\begin{align*}
  \extloss(s, x) \defeq R(s \mid x) - \inf_{s' \in \R^d} R(s' \mid x) ,
  \qquad
  \exsloss(s, x) \defeq R_\sloss(s \mid x) -
  \inf_{s' \in \R^d} R_\sloss(s' \mid x),
\end{align*}
as well as the minimal risks $R\opt := \inf_{f \in \mathcal{F}} R(f)$ and
$R_\sloss\opt := \inf_{f \in \mathcal{F}} R_\sloss(f)$.
We follow the standard~\citep{Steinwart07,BartlettJoMc06,Zhang04b}
that consistency requires at least (i) Fisher consistency and,
if possible, a stronger and quantitative (ii) uniform comparison inequality:
respectively, that for all data distributions $\dstr$,
\begin{enumerate}[(i)]
\item \label{item:pointwise-comparison}
  For any sequence of functions $f_n \in \mathcal{F}$, $R_\sloss(f_n)
  \to R_\sloss\opt$ implies $R(f_n) \to R\opt$.
\item \label{item:uniform-comparison}
  For a non-decreasing $\psi: \R_+ \to \R_+$, $\psi (R(f) - R\opt) \leq
  R_\sloss(f) - R_\sloss\opt$ for all $f \in \mathcal{F}$, where $\psi$
  satisfies $\psi(\epsilon) > 0$ for all $\epsilon > 0$,
\end{enumerate}

In the case of binary classification when $\sloss$ is margin-based and
convex, the two consistency notions coincide~\citep{BartlettJoMc06}.
The stronger uniform guarantee~\eqref{item:uniform-comparison} need not
always hold, the \emph{calibration function} $\wb{\psi}$ provides a
canonical construction through the excess risk:
\begin{align*}
  \psi(\epsilon, x)
  \defeq \inf_{s \in \R^d}
  \left\{\exsloss(s, x) \mid \extloss(s, x) \ge \epsilon\right\}
  ~~ \mbox{and} ~~
  \wb{\psi}(\epsilon)
  \defeq \inf_{x \in \mc{X}} \psi(\epsilon, x).
\end{align*}
Consistency and comparison inequalities follow
from the calibration functions (see~\citet[Prop.~25]{Zhang04a} and
\citet[Thm.~2.8 and Lemma~2.9]{Steinwart07}):
\begin{corollary}
  \label{cor:comparison-inequality-classical}
  The surrogate $\sloss$ is Fisher consistent~\eqref{item:pointwise-comparison}
  for $\tloss$ if and only if $\psi(\epsilon, x) > 0$ for all
  $x \in \mc{X}$ and $\epsilon > 0$.
  Let $\psi$ be the Fenchel biconjugate of $\wb{\psi}$.
  Then $\wb{\psi}(\epsilon) > 0$ if and only
  if $\psi(\epsilon) > 0$, and for all measurable $f$,
  \begin{align*}
    \psi(R(f) - R\opt) \leq R_\sloss(f)  - R_\sloss\opt.
  \end{align*}
\end{corollary}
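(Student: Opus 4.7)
My plan is to derive the comparison inequality via a pointwise bound plus a Jensen step, and then read off both the positivity equivalence and Fisher consistency as corollaries.

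The starting point is that the definition of $\psi(\epsilon,x)$ already encodes a pointwise comparison. For any score $s$ and input $x$, setting $\epsilon = \extloss(s,x)$ in the infimum defining $\psi(\epsilon,x)$ makes $s$ itself feasible, so
\begin{align*}
  \exsloss(s,x) \;\geq\; \psi\!\prn{\extloss(s,x), x} \;\geq\; \wb\psi\!\prn{\extloss(s,x)}.
\end{align*}
Integrating over $X\sim P$ and using that the Fenchel biconjugate $\psi = \wb\psi^{**}$ is the largest convex lower-semicontinuous minorant of $\wb\psi$---so $\wb\psi \geq \psi$ pointwise and $\psi$ is convex---Jensen's inequality yields
\begin{align*}
  R_\sloss(f) - R_\sloss\opt \;=\; \Ep\brk{\exsloss(f(X), X)} \;\geq\; \Ep\brk{\psi(\extloss(f(X), X))} \;\geq\; \psi(R(f) - R\opt),
\end{align*}
which is the claimed comparison inequality.

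For the equivalence $\wb\psi(\epsilon) > 0 \Leftrightarrow \psi(\epsilon) > 0$ on $\epsilon > 0$, the direction $\Leftarrow$ is immediate from $\psi \leq \wb\psi$. For $\Rightarrow$, I would first note that $\psi(\cdot, x)$ is nondecreasing (its defining feasible set shrinks with $\epsilon$), hence so is $\wb\psi$, and that $\wb\psi(0) = 0$ (any minimizer of $R_\sloss(\cdot \mid x)$ is feasible at $\epsilon = 0$). If $\psi(\epsilon_0) = 0$ for some $\epsilon_0 > 0$, then convexity of $\psi$ combined with $\psi(0) \leq \wb\psi(0) = 0$ forces $\psi \equiv 0$ on $[0, \epsilon_0]$; but one can then construct a convex minorant of $\wb\psi$ of the form $g(\epsilon) = \max(0, \alpha(\epsilon - \epsilon_0/2))$ with sufficiently small $\alpha > 0$, leveraging monotonicity and strict positivity of $\wb\psi$ on $(0, \infty)$, that strictly exceeds $\psi$ at $\epsilon_0$---contradicting maximality of $\psi$ as a convex minorant.

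Fisher consistency then drops out. The $\Leftarrow$ direction follows from the comparison inequality: $R_\sloss(f_n) \to R_\sloss\opt$ forces $\psi(R(f_n) - R\opt) \to 0$, and by positivity and convex monotonicity of $\psi$ on $(0, \infty)$ this forces $R(f_n) \to R\opt$. For $\Rightarrow$, I would argue contrapositively: if $\psi(\epsilon_0, x_0) = 0$, pick a sequence $s_n \in \R^d$ with $\extloss(s_n, x_0) \geq \epsilon_0$ and $\exsloss(s_n, x_0) \to 0$, put unit mass at $x_0$ with the prescribed conditional $P(Y \in \cdot \mid X = x_0)$, and take $f_n \equiv s_n$ to exhibit a failure of Fisher consistency for this $\dstr$. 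The main obstacle will be the biconjugate positivity step---ruling out that the convex envelope $\wb\psi^{**}$ flattens to zero on some neighborhood of the origin---while the remaining pieces are essentially definitional bookkeeping together with one application of Jensen's inequality.
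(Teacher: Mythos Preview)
Your derivation of the comparison inequality is correct and matches the paper's approach (see the proof of the aggregated analogue, Proposition~\ref{prop:comparison-inequality-agg}): the pointwise bound $\exsloss(s,x) \ge \wb\psi(\extloss(s,x)) \ge \psi(\extloss(s,x))$ followed by Jensen is exactly the route taken there. Your argument for the positivity equivalence $\wb\psi > 0 \Leftrightarrow \psi > 0$ is also essentially the standard one (and works cleanly once you use that the relevant domain for $\epsilon$ is bounded by the range of the task loss).

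There is, however, a genuine gap in your $\Leftarrow$ direction for Fisher consistency. You claim it ``drops out'' from the comparison inequality because $\psi$ is positive on $(0,\infty)$, but the hypothesis you are given is only the \emph{pointwise} condition $\psi(\epsilon,x) > 0$ for each $x$. This does not force $\wb\psi(\epsilon) = \inf_{x} \psi(\epsilon,x) > 0$, and hence does not force $\psi = \wb\psi^{**} > 0$. For instance, if $\psi(\epsilon,x_n) = \epsilon/n$ over a countable $\mc{X}$, each pointwise calibration is strictly positive yet $\wb\psi \equiv 0$, so your comparison inequality degenerates to $0 \le R_\sloss(f) - R_\sloss\opt$ and yields nothing.

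The paper (in its proof of Proposition~\ref{prop:comparison-inequality-agg}, which is the version it actually proves) handles this correctly by a different mechanism: from
\[
  R_\sloss(f) - R_\sloss\opt \;\ge\; \int_{\{\extloss(f(x),x)\ge \epsilon\}} \psi(\epsilon,x)\, dP(x),
\]
pointwise positivity of $\psi(\epsilon,\cdot)$ makes the measure $b(x)\,dP(x)$ absolutely continuous with respect to $\psi(\epsilon,x)\,dP(x)$. The $\epsilon$--$\delta$ form of absolute continuity then gives that small surrogate excess forces $\int_{\{\extloss \ge \epsilon\}} b\,dP$ to be small, and splitting $R(f)-R\opt$ over $\{\extloss \ge \epsilon\}$ and its complement finishes the job. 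You should replace your ``drops out'' step with an argument of this type; the global comparison inequality alone is too coarse to recover Fisher consistency from the pointwise hypothesis.
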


In the general risk minimization problem~\eqref{eq:task-loss-problem} we
would like at least a Fisher consistent~\eqref{item:pointwise-comparison}
surrogate for $\tloss$, so that minimizing $R_\sloss(f) = \E[\sloss(f(X),
  Y)]$ would imply minimizing $R(f)$.
Given such a result, using only paired observations $(X, Y)$ rather than
tuples $(X, Y_1, \ldots, Y_m)$, we could bring the entire theory of
empirical processes and related statistical tools~\citep{VanDerVaartWe96,
  BartlettBoMe05, Koltchinskii06a, BartlettJoMc06} to bear on the problem.
Moreover, data collection procedures would be simpler, necessitating only
single pairs $(X, Y)$ for consistent estimation.
Unfortunately, such results are generally impossible, as we detail in the
next extended example, necessitating the necessity of a theory of
aggregation that we pursue in Sec.~\ref{sec:consistency-agg}.

\subsection{Fisher consistency failure without label aggregation: ranking}
\label{sec:counter-example}

Consider the problem of ranking $k$ items using pairwise comparison
data~\citep{Keener93, DworkKuNaSi01, DuchiMaJo12, NegahbanOhSh16},
where the space $\mathcal{Y}$ consists of all pairwise comparisons of these
items, $\mc{Y} = \{(i,j): i \neq j, 1 \leq i, j \leq k\}$, i.e. $(i,j)$ denotes the $i$-th item has higher ranking than the $j$-th item.
The (population) rank aggregation problem is, for each $x$,
to transform the
probabilities $p_{ij} = P(Y = (i,j) \mid x)$ into a
ranking of the $k$ items.
While numerous possibilities exist for such aggregation, we consider a
simple comparison-based aggregation scheme~\citep[cf.][]{Keener93}; similar
negative results to the one we show below hold for more sophisticated
schemes.
Define the normalized transition matrix $C_x \in \R_+^{k \times
  k}$ with entries $(C_x)_{ii} = 0$ and
\begin{align*}
  (C_x)_{ij} = 
  \frac{p_{ij}}{\sum_{l \neq j} p_{lj}}
  ~~ \mbox{for~} i \neq j,
\end{align*}
where we let $0/0 = 1/(k-1)$ so that $C_x$ is stochastic, satisfying $C_x^T
\ones = \ones$.
We then rank the items by the vector $C_x \ones \in \R_+^k$,
which measures how often a given item is preferred to others.
(One may also
take higher powers $C_x^p \ones$ or Perron vectors~\citep{Keener93}; similar
results to ours below hold in such cases.)
Tacitly incorporating the decoding $\pred$ into the task loss $\tloss$,
we
\begin{equation*}
  \tloss(s, C) \defeq
  \max_{i < j} \indic{ (s_i - s_j) (e_i - e_j)^T C \ones \le 0,
  ~ (e_i - e_j)^T C \ones \neq 0 },
\end{equation*}
which penalizes mis-ordered scores between $s$ and $C$.
The population task risk~\eqref{eq:task-loss-problem} is thus
\begin{align}
  R(f) & \defeq
  \P\left(f(X) \text{ and } C_X\ones \text{ order differently}\right)
  \label{eq:task-risk-ranking}
\end{align}


Now consider a convex surrogate $\sloss: \R^k \times \mathcal{Y} \to
\R$.
We restrict to $s \in \R^k$ for which $s^T\ones = 0$, a
minor restriction familiar from multiclass classification
problems~\citep{Zhang04a,TewariBa07}, which is natural as for decoding a
ranking we require only the ordering of the $s_i$.
Unfortunately, there is no convex Fisher consistent surrogate for the
problem~\eqref{eq:task-risk-ranking} (see
Appendix~\ref{proof:failure-ranking}).
\begin{proposition}
  \label{proposition:failure-ranking}
  Consider the ranking problem with task risk~\eqref{eq:task-risk-ranking}
  over $k \ge 3$ outcomes.
  If $\sloss : \R^k \times \mc{Y} \to \R$ is convex in its first argument,
  it is not Fisher consistent.
\end{proposition}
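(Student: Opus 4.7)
I would argue by contradiction: assume a convex surrogate $\sloss : \R^k \times \mc{Y} \to \R$ is Fisher consistent for $\tloss$ and derive a contradiction by constructing a conditional distribution on $\mc{Y}$ that violates consistency. Since Fisher consistency is pointwise in $x$ and any three-item counterexample embeds into $k > 3$ by placing vanishing mass on comparisons involving items $\{4,\ldots,k\}$, I fix $k = 3$ and work with a single $p \in \Delta(\mc{Y})$. Write the conditional surrogate risk $L(s;p) := \sum_{(i,j)} p_{ij}\sloss(s,(i,j))$, which is convex in $s$ and linear in $p$, and let $S(p) := \argmin_s L(s;p)$. Fisher consistency then demands that every $s^\star \in S(p)$ order its coordinates compatibly with $C(p)\ones$.

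The proof exploits two incompatible convexity properties. For each fixed $s^\star$, the cell $\mathcal{C}(s^\star) := \{p : s^\star \in S(p)\} = \{p : 0 \in \sum_{ij} p_{ij}\partial_s\sloss(s^\star,(i,j))\}$ is \emph{convex} in $p$, being the projection onto $p$ of the convex set $\{(p,\{g_{ij}\}) : g_{ij} \in \partial_s\sloss(s^\star,(i,j)),\, \sum_{ij} p_{ij} g_{ij} = 0\}$. Consequently Fisher consistency partitions $\Delta(\mc{Y})$ into convex argmin cells. On the other hand, each entry $(C(p)\ones)_i = \sum_{j\ne i} p_{ij}/\sum_{\ell\ne j} p_{\ell j}$ is a sum of rational functions whose denominators depend on column sums of $C$, and a direct algebraic calculation exhibits $p^{(0)}, p^{(1)}$ in the decision region $\mc{P}_{12} := \{p : (C(p)\ones)_1 > (C(p)\ones)_2\}$ whose midpoint lies in $\mc{P}_{21}$---so $\mc{P}_{12}$ is \emph{nonconvex}. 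Concretely, one tunes $p^{(0)}$ to concentrate mass on $p_{32}$ (shrinking the denominator of the first term of $(C\ones)_1$) and $p^{(1)}$ to concentrate on $p_{31}$ (shrinking the first-term denominator of $(C\ones)_2$); the midpoint's balanced denominators then flip the ranking of items $1$ and $2$.

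Given such $p^{(0)}, p^{(1)}$ with midpoint $\bar p \in \mc{P}_{21}$, Fisher consistency forces $S(p^{(0)}), S(p^{(1)}) \subseteq H := \{s : s_1 > s_2\}$ while $S(\bar p) \subseteq H^c$. I would close via the convex-cell observation: if one can arrange (by perturbing $p^{(0)}, p^{(1)}$ within $\mc{P}_{12}$ while keeping the midpoint in $\mc{P}_{21}$) that some $s^\star \in H$ is a common minimizer, i.e., $s^\star \in S(p^{(0)}) \cap S(p^{(1)})$, then by convexity of $\mathcal{C}(s^\star)$ the midpoint $\bar p$ also lies in $\mathcal{C}(s^\star)$, placing $s^\star \in S(\bar p) \cap H$ and contradicting $S(\bar p) \subseteq H^c$.

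\textbf{Main obstacle.} The delicate step is engineering a common minimizer $s^\star \in H$ for both $p^{(0)}$ and $p^{(1)}$: one has freedom to choose $p^{(0)}, p^{(1)}$ within $\mc{P}_{12}$, but must also ensure $\mathcal{C}(s^\star)$ intersects both. I expect the cleanest route is a dimension-counting argument---the simplex $\Delta(\mc{Y})$ has dimension $5$ for $k=3$, while the joint constraint ``$s^\star$ is a common minimizer and the midpoint lies in $\mc{P}_{21}$'' cuts out a positive-measure configuration for generic convex $\sloss$---combined with a continuity argument to push $p^{(0)}, p^{(1)}$ into the needed configuration. If the dimension count fails for some degenerate $\sloss$ (e.g., when argmins are large flat sets), I would instead run a compactness / upper-semicontinuity argument along the segment $p(t) := (1-t)p^{(0)} + t p^{(1)}$, exploiting that any measurable selection $s(t) \in S(p(t))$ must cross the hyperplane $\{s_1 = s_2\}$ in a pattern incompatible with the two sign flips of $(C(p(t))\ones)_1 - (C(p(t))\ones)_2$ along the segment.
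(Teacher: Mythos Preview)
Your proposal has a genuine gap at precisely the step you flag as the ``main obstacle,'' and neither of the fixes you sketch closes it.

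The convexity of the cells $\mathcal{C}(s^\star)$ is correct (indeed $\mathcal{C}(s^\star)=\bigcap_s\{p:L(s^\star;p)\le L(s;p)\}$ is an intersection of half-spaces in $p$; your projection argument via the bilinear constraint is actually not quite right, but the conclusion holds). The nonconvexity of $\mathcal{P}_{12}$ is also plausible. But the reduction to a contradiction requires producing $p^{(0)},p^{(1)}$ with a \emph{common} surrogate minimizer, and under the hypothesis of Fisher consistency this is exactly what cannot happen: if $s^\star\in S(p^{(0)})\cap S(p^{(1)})$ with $p^{(0)},p^{(1)}\in\mathcal{P}_{12}$, then $s^\star\in H$ by consistency, so $\mathcal{C}(s^\star)\cap\mathcal{P}_{21}=\emptyset$, whence the midpoint $\bar p\in\mathcal{C}(s^\star)$ is automatically outside $\mathcal{P}_{21}$. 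In other words, Fisher consistency directly forbids the configuration you need, so you must exhibit it from properties of $\sloss$ alone, and you have not. Your dimension count is only a heuristic: for strictly convex $\sloss$ the map $p\mapsto s(p)$ is a single-valued function, and nothing prevents the level sets $\{p:s(p)=s^\star\}$ from being small convex sets well inside $\mathcal{P}_{12}$ for every $s^\star\in H$. Your segment fallback does not give a contradiction either: along $p(t)$ the task margin $(C(p(t))\ones)_1-(C(p(t))\ones)_2$ changes sign twice, and a continuous (or even upper-semicontinuous) selection $s(t)$ can perfectly well have $s_1(t)-s_2(t)$ change sign twice in lockstep; there is no parity or monotonicity obstruction for general convex $\sloss$.

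The paper's argument proceeds along a different axis. Rather than exploiting nonconvexity of a single decision region, it works at a \emph{tie point}: for the cyclic distribution $p$ supported on $(1,2),(2,3),\ldots,(k,1)$ with arbitrary positive weights $q_1,\ldots,q_k$, one has $C(p)\ones=\ones$, so every ordering is task-optimal. By perturbing $p$ to make any prescribed permutation the unique task-optimum and invoking epi-convergence of the convex conditional risks, the argmin set of $R_\sloss(\cdot\mid p)$ must contain a vector compatible with \emph{every} permutation, forcing $0\in\argmin_s R_\sloss(s\mid p)$. Since this holds for every $q>0$, first-order conditions at $s=0$ then give $0\in\partial_s\sloss(0,(i,j))$ for each pair $(i,j)$ by sending one $q_l\uparrow 1$, so $0$ minimizes every $\sloss(\cdot,(i,j))$, which is incompatible with consistency on any two-item distribution. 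The key move you are missing is this use of the tie locus together with the free parameter $q$ to propagate the zero-minimizer to individual losses.
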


Nonetheless, a reasonably straightforward argument yields consistency when
we allow aggregation methods as soon as $m$, the number of collected
comparisons, satisfies $m \ge k$.
The idea is simple: we regress predicted scores $f(x)$ on frequencies
of label orderings.
We assume multiple independent pairwise comparisons $Z = (Y_1, \ldots, Y_m)$
conditioned on $X$, and letting $m_{ij} = \sum_{y \in \mc{Y}^m} \indic{y =
  (i,j)}$ and $m_j = \sum_{i = 1}^k m_{ij}$, we define
the aggregation
\begin{align*}
  A(Z) & = \begin{cases}
    \star, & \mbox{if} ~ m_j = 0 \text{ for some }j\text{ in }[k], \\
    (\frac{m_{i1}}{m_1} + \frac{m_{i2}}{m_2} + \cdots + \frac{m_{ik}}{m_k})_{
      i \in [k]}
    & \mbox{otherwise, i.e.~if}~ m_j > 0 \text{ for all }j \in [k].
  \end{cases}
\end{align*}
Regressing directly on $A(Z)$ when $A(Z) \neq \star$ yields consistency, as
the next proposition demonstrates (see
Appendix~\ref{sec:proof-success-ranking} for a proof):
\begin{proposition}
  \label{proposition:success-ranking}
  Define $\sloss(s, q) = \ltwo{s - q}^2$ for $s, q \in \R^k$ and
  $\sloss(s, \star) = 0$.
  Then if $m \ge k$, $\sloss$ is Fisher consistent
  for the ranking risk~\eqref{eq:task-risk-ranking}.
\end{proposition}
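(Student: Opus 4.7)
The plan is to invoke Corollary~\ref{cor:comparison-inequality-classical}, which reduces Fisher consistency to showing that the pointwise calibration satisfies $\psi(\epsilon,x) > 0$ for every $\epsilon > 0$ and $x \in \mc{X}$. The crux is an exact closed form for the conditional surrogate risk. Set $q(x) := \Prb(A(Z) \neq \star \mid X = x)$ and, when $q(x) > 0$, let $\mu(x) := \E[A(Z) \mid X = x, A(Z) \neq \star]$. Because $\sloss(s,\star) = 0$, a bias--variance decomposition gives
\begin{align*}
  R_\sloss(s \mid x) = q(x) \, \|s - \mu(x)\|_2^2 + C(x),
\end{align*}
with $C(x)$ independent of $s$. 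Hence $\exsloss(s,x) = q(x) \, \|s - \mu(x)\|_2^2$, uniquely minimized at $s = \mu(x)$ (or, under the constraint $s^\top \ones = 0$, at the projection of $\mu(x)$ onto this hyperplane, which induces the same ranking).

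The second and most subtle step is to identify $\mu(x) = C_x \ones$. I would condition on the count vector $(N_1,\ldots,N_k)$, where $N_j := m_j$ counts the comparisons whose second coordinate equals $j$. On $\{N_j > 0\}$, those $N_j$ comparisons are i.i.d.\ with first-coordinate distribution $p_{ij}/\sum_{l \neq j} p_{lj} = (C_x)_{ij}$, so $\E[m_{ij}/m_j \mid N_1,\ldots,N_k, X = x] = (C_x)_{ij}$ on this event. Since $\{A(Z) \neq \star\} = \{N_j > 0 \text{ for all } j\}$ is measurable with respect to $(N_1,\ldots,N_k)$, the tower property gives $\E[m_{ij}/m_j \mid X = x, A(Z) \neq \star] = (C_x)_{ij}$; summing in $j$ yields $\mu(x)_i = (C_x\ones)_i$. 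A short coupon-collector argument ensures $q(x) > 0$ whenever $m \geq k$ and each second-coordinate marginal $\sum_{i \neq j} p_{ij}$ is positive, which is the nondegenerate regime of interest.

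To conclude, I would convert this identification into quantitative calibration. Let $\delta(x) := \min\{|(C_x\ones)_i - (C_x\ones)_j| : (C_x\ones)_i \neq (C_x\ones)_j\}$, with $\delta(x) := +\infty$ when $C_x \ones$ is constant. Any $s$ with $\tloss(s, C_x) = 1$ must reverse or tie at least one strictly ordered pair of $C_x \ones$, forcing $\|s - C_x\ones\|_2 \geq \delta(x)/2$ by an elementary coordinate bound. Hence $\exsloss(s,x) \geq q(x) \, \delta(x)^2 / 4 > 0$ whenever $\extloss(s,x) \geq \epsilon$; when $C_x \ones$ is constant, $\extloss \equiv 0$ at $x$ and the condition is vacuous. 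Thus $\psi(\epsilon,x) > 0$ for all $x$ and $\epsilon > 0$, and Fisher consistency follows from Corollary~\ref{cor:comparison-inequality-classical}.

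The main obstacle I anticipate is the identification $\mu(x) = C_x \ones$: the aggregation $A(Z)$ sums ratios $m_{ij}/m_j$ of dependent multinomial counts, and taking expectations naively conflates the conditioning event $\{A(Z) \neq \star\}$ with the denominators. The clean resolution is to condition first on the count vector $(N_1,\ldots,N_k)$ and to exploit the i.i.d.\ structure of comparisons within each ``column'' $j$; the hypothesis $m \geq k$ enters precisely to make $q(x) > 0$ possible via a coupon-collector bound.
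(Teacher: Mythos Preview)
Your proposal is correct and follows essentially the same route as the paper: identify the conditional surrogate minimizer as $\mu(x)=C_x\ones$ by conditioning on the column counts $(m_1,\ldots,m_k)$ to expose the multinomial structure, and use $m\ge k$ to ensure $\P(A(Z)\neq\star\mid x)>0$. The paper's proof stops at ``$s^\star=C_x\ones$ is unique, so Fisher consistency follows,'' whereas you supply the missing calibration step via the gap $\delta(x)$ and the bound $\|s-C_x\ones\|_2\ge\delta(x)/2$; this is a genuine (and welcome) elaboration rather than a different idea. One minor remark: since the loss here acts on the aggregated variable $A(Z)$, the result you want to invoke is Proposition~\ref{prop:comparison-inequality-agg} (the aggregated calibration criterion) rather than Corollary~\ref{cor:comparison-inequality-classical}, though the pointwise argument you give is exactly what that proposition requires.
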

We point out the reason Proposition~\ref{proposition:failure-ranking} fails to guarantee consistency, whereas Proposition~\ref{proposition:success-ranking} does ensure consistency, lies in the expansion of the response space from $\mc{Y}$ to $\R^k$ induced by aggregating multiple labels.


\subsection{Label aggregation obtains stronger surrogate consistency}
\label{sec:consistency-agg}

The extended ranking example in ranking suggests potential benefits of
aggregating labels, and it is natural to ask how aggregation interacts with
surrogate consistency more generally.
Thus, we present two results here: one that performs an essentially
basic extension of standard surrogate-risk consistency, and the second
that shows how aggregation-based methods can ``upgrade'' what might
nominally be inconsistent losses into consistent losses,
as Proposition~\ref{proposition:success-ranking} suggests may be possible.
%

\subsubsection{Basic extensions of surrogate consistency}

We begin by making the more or less obvious generalization
of calibration functions for standard cases, extending
the classical comparison
inequalities in Corollary~\ref{cor:comparison-inequality-classical}.
For an arbitrary aggregation method $A : \mc{Z} \to \mc{A}$, define the
conditional surrogate risk with data aggregation
\begin{align*}
  R_{\sloss,A}(s\mid x) \defeq \E\brk{\sloss(s, A(Z)) \mid X = x}.
\end{align*}
As in the non-aggregated case, the pointwise excess risk
\begin{align*}
  \exslossagg (s,x) := R_{\sloss,A}(s\mid x)
  - \inf_{s \in \R^d} R_{\sloss,A}(s\mid x)
\end{align*}
then defines the pointwise and uniform calibration functions
\begin{equation}
  \label{eqn:pointwise-agg-calibration}
  \wb{\psi}_A(\epsilon, x) \defeq \inf_{s \in \R^d}
  \left\{\exslossagg(s, x) \mid \extloss(s, x) \geq \epsilon\right\}
  ~~ \mbox{and} ~~
  \wb{\psi}_A(\epsilon) \defeq \inf_{s \in \R^d} \wb{\psi}_A(\epsilon, x).
\end{equation}
A consistency result then follows, similar to
Corollary~\ref{cor:comparison-inequality-classical}, under appropriate
measurability conditions (we will leave these tacit as they are not central
to our results).
Then more or less as a corollary of \citet[Thm.~2.8]{Steinwart07}, we have
the following consistency result.
(We include a proof for completeness in
Appendix~\ref{sec:proof-comparison-inequality-agg}.)

\begin{proposition}
  \label{prop:comparison-inequality-agg}
  Assume there exists $b : \mc{X} \to \R_+$ with
  $\int b(x) dP(x) < \infty$ such that
  $\extloss(f(x), x) \le b(x)$.
  The surrogate $\sloss$ is Fisher consistent~\eqref{item:pointwise-comparison}
  for the task risk~\eqref{eq:task-loss-problem}
  if and only if $\wb{\psi}_A(\epsilon, x) > 0$ for all $x \in \mc{X}$
  and $\epsilon > 0$.
  Additionally, if $\psi_A = (\wb{\psi}_A)^{**}$ is the Fenchel
  biconjugate of $\wb{\psi}_A$,
  then
  \begin{align*}
    \psi_A(R(f) - R\opt) \leq R_{\sloss,A}(f) - R_{\sloss,A}\opt.
  \end{align*}
\end{proposition}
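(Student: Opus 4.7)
The plan is to mirror the standard argument of Steinwart's Thm.~2.8 (essentially the same as in Corollary~\ref{cor:comparison-inequality-classical}), with $Y$ replaced by the aggregated variable $A(Z)$. Because the entire aggregation mechanism is encapsulated inside $R_{\sloss, A}(s\mid x) = \E[\sloss(s, A(Z)) \mid X = x]$, the calibration function $\wb{\psi}_A(\cdot, x)$ is a direct analogue of the classical pointwise calibration, and the usual machinery applies once we fix $x$ and work pointwise.

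For the comparison inequality, I would first establish the pointwise chain: by construction of the calibration function, $\wb{\psi}_A(\extloss(s,x), x) \le \exslossagg(s,x)$; taking the infimum over $x$ gives $\wb{\psi}_A(\epsilon) \le \wb{\psi}_A(\epsilon, x)$; and the Fenchel biconjugate satisfies $\psi_A \le \wb{\psi}_A$ by construction. Chaining these with $\epsilon = \extloss(s,x)$ yields $\psi_A(\extloss(s,x)) \le \exslossagg(s,x)$ for every $s$ and $x$. Integrating against $P$ and applying Jensen's inequality (using convexity of $\psi_A$, together with the integrability ensured by $\extloss(f(x),x) \le b(x)$) gives
\[
\psi_A(R(f) - R\opt) = \psi_A\!\left(\int \extloss(f(x),x)\, dP(x)\right) \le \int \psi_A(\extloss(f(x),x))\, dP(x) \le R_{\sloss,A}(f) - R_{\sloss,A}\opt,
\]
which is the desired inequality.

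For the Fisher consistency equivalence, sufficiency proceeds by subsequence extraction: if $\wb{\psi}_A(\epsilon, x) > 0$ for every $x$ and $\epsilon > 0$, then $R_{\sloss, A}(f_n) \to R_{\sloss, A}\opt$ implies $\exslossagg(f_n(x), x) \to 0$ in $L^1(P)$; along a $P$-a.e.\ convergent subsequence, positivity of $\wb{\psi}_A(\cdot, x)$ forces $\extloss(f_n(x), x) \to 0$ pointwise, and the envelope $b$ permits dominated convergence to conclude $R(f_n) \to R\opt$ along the subsequence, which a standard subsequence-of-subsequence argument promotes to the full sequence. For necessity, if $\wb{\psi}_A(\epsilon_0, x_0) = 0$ for some $x_0$ and $\epsilon_0 > 0$, then by definition of the infimum I can choose scores $s_n$ with $\exslossagg(s_n, x_0) \to 0$ while $\extloss(s_n, x_0) \ge \epsilon_0$; taking $P$ concentrated near $x_0$ (or a point mass, handling measurability tacitly as the proposition does) produces constant predictors witnessing inconsistency.

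The main obstacle, as one might expect, is the interchange from pointwise to integrated statements: the envelope hypothesis on $b$ is precisely what makes both the Jensen step in the comparison inequality and the dominated-convergence step in the sufficiency direction go through. The other delicate point is that $\psi_A$ inherits the ``$\epsilon > 0 \Rightarrow \psi_A(\epsilon) > 0$'' property from $\wb{\psi}_A$ via the biconjugate identity invoked in Corollary~\ref{cor:comparison-inequality-classical}; that argument is entirely formal and carries over unchanged since it only uses nonnegativity and the definition of the biconjugate, not any specifics of the label space.
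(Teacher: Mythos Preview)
Your proposal is correct. The comparison inequality argument---chaining $\psi_A \le \wb{\psi}_A \le \wb{\psi}_A(\cdot,x)$ pointwise, integrating, and invoking Jensen---is exactly what the paper does.

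The sufficiency direction of the Fisher consistency equivalence differs in technique. You argue via $L^1$-convergence $\Rightarrow$ a.e.\ convergence along a subsequence $\Rightarrow$ pointwise calibration forces $\extloss\to 0$ a.e.\ $\Rightarrow$ dominated convergence using $b$ $\Rightarrow$ subsequence-of-subsequence. The paper instead gives a direct $\epsilon$--$\delta$ argument: fixing $\epsilon>0$, it observes that the finite measure $d\nu = b\,dP$ is absolutely continuous with respect to $d\mu = \wb{\psi}_A(\epsilon,\cdot)\,dP$ (both have densities against $P$, and $\wb{\psi}_A(\epsilon,x)>0$ everywhere), so there is a $\delta$ with $\mu(C)\le\delta \Rightarrow \nu(C)\le\epsilon$; taking $C=\{x:\extloss(f(x),x)\ge\epsilon\}$, the bound $R_{\sloss,A}(f)-R_{\sloss,A}\opt \ge \int_C \wb{\psi}_A(\epsilon,x)\,dP = \mu(C)$ then yields $R(f)-R\opt \le \nu(C)+\epsilon \le 2\epsilon$. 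The paper's route is slightly stronger in that it produces a uniform modulus (a single $\delta$ works for all $f$, not just along a sequence), while yours is perhaps more transparent and closer to the standard Steinwart template. Both lean on the envelope $b$ in the same essential way.
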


The result captures the classical consistency guarantees---nothing
particularly falls apart because of aggregation---but it provides no
specific guarantees of improved consistency.
We turn to this now.

\subsubsection{Identifying surrogates and consistency}

We now turn under essentially minimal conditions on the surrogate, there is
a generic aggregating strategy that (asymptotically in the number of
observations $y$) guarantees consistency for any task loss that seeks to
minimize $\tloss(f(x), y)$, i.e.,
$R(f) = \E[\tloss(f(X), Y)]$.
We assume that $\card(\mc{Y}) = k < \infty$\footnote{We emphasize that the example discussed in Sec.~\ref{sec:counter-example} belongs to this regime, where $k$ denotes the ranking length and $\card(\mc{Y}) = k(k-1)$. With a slight abuse of notation, we will henceforth use $k$ to denote the cardinality of the label space.},
and we impose a minimal identifiability assumption on the surrogate
loss.

\begin{definition}[Identifying surrogate]
  \label{def:feasible}
  A surrogate $\sloss : \R^d \times \mc{A} \to \R$ is
  \emph{$(\constant_{\sloss, 1}, \constant_{\sloss, 2})$-identifying
  for $\mc{Y}$},
  $0 < \constant_{\sloss,1} \le \constant_{\sloss,2} < \infty$
  if there exist
  $\{a_y\}_{y \in \mc{Y}} \subset \mathcal{A}$ and
  vectors $\{s_y\}_{y \in \mc{Y}} \subset \real^d$ such that
  $\pred(s_y) = y$ and for which for all $y \neq y'$,
  \begin{subequations}
    \begin{align}
      \sloss(s_y, a_y) + \constant_{\sloss, 1}
      & \leq \inf_{\pred \circ s \neq y} \sloss(s, a_y),
      \label{eq:feasible-surrogate-1} \\
      \sloss(s_y, a_{y'}) - \constant_{\sloss, 2}
      & \leq \inf_{s \in \R^d} \sloss(s, a_{y'}). \label{eq:feasible-surrogate-2}
    \end{align}
  \end{subequations}
\end{definition}
\noindent
Inequality~\eqref{eq:feasible-surrogate-1} captures
that for each class $y \in \mc{Y}$,
there exists a parameter $a_y \in \mc{A}$ such that the minimizer
of $\sloss(\cdot, a_y)$ identifies $y$.
A finite
$\constant_{\sloss, 2}$ exists for~\eqref{eq:feasible-surrogate-2}
if $\sloss(\cdot, a)$ has a finite lower bound.
Notably, Definition~\ref{def:feasible}
does not require that $\sloss(\cdot, a)$ is convex or that it is
consistent when $\mc{A} = \mc{Y}$ and $Z = Y$, i.e., without label
aggregation. Another benefit is that we also do not require the decoder $\pred$, such as requiring it to coincide with the pointwise $\argmax$ operator.

\begin{example}
  Consider the binary hinge loss $\sloss(s,a) = \max \{1 - sa, 0\}$ for
  $\mc{A} = \mc{Y} = \{\pm 1\}$.
  For $y \in \{-1, 1\}$, take $a_y = s_y = y$, so that
  $\sloss(s_1, a_1) = \sloss(s_{-1}, a_{-1}) = 0$, while $\inf_{sa \le 0}
  \sloss(s, a) = 1$.
  Similarly, $\sloss(s_1, a_{-1}) = \sloss(s_{-1}, a_1) = 2$, so the hinge
  loss is $(1, 2)$-identifying.
\end{example}

We defer further concrete examples to Sec.~\ref{sec:surrogate-example} where we study the consistency amplification by label aggregation. Given an identifying surrogate with parameters $\{a_y\}_{y \in \mc{Y}}$, we
consider a naive aggregation strategy: the generalized majority vote
\begin{equation}
  \label{eqn:naive-mv-aggregation}
  A_m(y_1, \ldots, y_m) \defeq a_{\hat{y}}
  ~~ \mbox{for} ~~
  \hat{y} = \argmin_{y \in \mc{Y}}
  \sum_{i = 1}^m \tloss(y, y_i)
\end{equation}
(breaking ties arbitrarily).
As $m \to \infty$, because $\mc{Y}$ is finite, whenever
$Y_i$ are i.i.d.\ there necessarily exists a (random) $M < \infty$ such
that $m \ge M$ implies
\begin{equation*}
  \argmin_{y \in \mc{Y}}\bigg\{ \sum_{i = 1}^m \tloss(y, Y_i)\bigg\}
  \subset y\opt(x) \defeq \argmin_{y \in \mc{Y}}
  \E\left[\tloss(y, Y) \mid X = x\right].
\end{equation*}
From this, we expect that as $m \to \infty$, the
surrogate $\sloss(\cdot, A_m)$ ought to be consistent.
In fact, we have the following corollary of our coming results,
guaranteeing (asymptotic) consistency:
\begin{corollary}
  \label{corollary:always-consistent}
  Let $m = m(n) \to \infty$
  and $\sloss$ be identifying (Def.~\ref{def:feasible}).
  Then
  \begin{equation*}
    R_{\sloss,A_m}(f_n) - R\opt_{\sloss, A_m} \to 0
    ~~ \mbox{implies} ~~
    R(f_n) - R\opt \to 0.
  \end{equation*}
\end{corollary}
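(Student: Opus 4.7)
The plan is to proceed by contraposition: if $R(f_{n_k}) - R\opt \geq \epsilon$ along a subsequence, I will show that $R_{\sloss, A_{m(n_k)}}(f_{n_k}) - R_{\sloss, A_{m(n_k)}}\opt$ is bounded away from zero once $m(n_k)$ is large, contradicting the hypothesis. Writing $R(f_n) - R\opt = \E[\extloss(f_n(X), X)]$ and using that $\extloss$ is bounded by some $\tau$ (since $\card(\mc{Y}) < \infty$ and the task loss is bounded), Markov's inequality yields $\Prb(X \in E_n) \geq \epsilon/(2\tau)$ for $E_n \defeq \{x : \extloss(f_n(x), x) \geq \epsilon/2\}$. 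On $E_n$, the predicted label $\pred(f_n(x))$ is not Bayes-optimal, and the ``task margin''
\begin{equation*}
  \Delta(x) \defeq \min_{y \notin y\opt(x)} \E[\tloss(y, Y) \mid X = x] - \min_y \E[\tloss(y, Y) \mid X = x]
\end{equation*}
(with $y\opt(x) \defeq \argmin_y \E[\tloss(y, Y) \mid X = x]$) is at least $\epsilon/2$.

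The main step is a uniform concentration estimate for the empirical label $\hat{y}_m$ inside~\eqref{eqn:naive-mv-aggregation}. Because $\Delta(x) \geq \epsilon/2$ uniformly on $E_n$, Hoeffding's inequality applied to $Y_1, \ldots, Y_m \mid X = x$ gives $\Prb(\hat{y}_m \in y\opt(x) \mid X = x) \geq 1 - \delta$ for all $m \geq M(\epsilon, \tau, \delta)$, with $M$ independent of $x \in E_n$. On this event $A_m(Z) = a_{y^\star}$ for some $y^\star \in y\opt(x)$, and since $\pred(f_n(x)) \neq y^\star$, inequality~\eqref{eq:feasible-surrogate-1} yields the pointwise gap $\sloss(f_n(x), A_m(Z)) \geq \inf_{s'} \sloss(s', A_m(Z)) + \constant_{\sloss, 1}$. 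Conversely, fixing (by a deterministic tie-breaking rule) a single $y^\dagger \in y\opt(x)$ with $\Prb(\hat{y}_m = y^\dagger \mid X = x) \geq 1 - \delta$, evaluating $R_{\sloss, A_m}(\cdot \mid x)$ at $s_{y^\dagger}$ and using inequality~\eqref{eq:feasible-surrogate-2} on the $\delta$-mass complement gives $\inf_{s'} R_{\sloss, A_m}(s' \mid x) \leq \E[\inf_{s'} \sloss(s', A_m(Z)) \mid X = x] + \constant_{\sloss, 2}\delta$. Assuming boundedness of $\sloss$ (say by $B$, otherwise standard truncation applies), combining the two estimates yields $\exslossaggm(f_n(x), x) \geq \constant_{\sloss, 1} - (B + \constant_{\sloss, 2})\delta$ uniformly on $E_n$ for $m \geq M$. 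Choosing $\delta$ small and integrating,
\begin{equation*}
  R_{\sloss, A_m}(f_n) - R_{\sloss, A_m}\opt
  \geq \tfrac{\constant_{\sloss, 1}}{2} \Prb(X \in E_n)
  \geq \frac{\constant_{\sloss, 1}\, \epsilon}{4\tau},
\end{equation*}
which contradicts $R_{\sloss, A_m}(f_n) - R_{\sloss, A_m}\opt \to 0$ for $m = m(n) \geq M$.

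The main obstacle is arranging uniformity in $x$: pointwise concentration of $\hat{y}_m$ on $y\opt(x)$ follows from the law of large numbers, but its rate depends on the margin $\Delta(x)$, which can be arbitrarily small on sets of indeterminate labels. Restricting to $E_n$ is exactly what rescues uniformity, since $\extloss(f_n(x), x) \geq \epsilon/2$ automatically forces $\Delta(x) \geq \epsilon/2$. A secondary subtlety is the non-unique case $|y\opt(x)| > 1$, which is handled by the deterministic tie-breaking rule in~\eqref{eqn:naive-mv-aggregation}: it makes $\hat{y}_m$ converge to a fixed $y^\dagger \in y\opt(x)$ for $m$ large, so the $\constant_{\sloss, 2}\delta$-slack from~\eqref{eq:feasible-surrogate-2} remains $O(\delta)$ and does not swamp the $\constant_{\sloss, 1}$-gain from~\eqref{eq:feasible-surrogate-1}.
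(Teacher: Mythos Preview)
There is a genuine gap in the step ``$\extloss(f_n(x), x) \geq \epsilon/2$ automatically forces $\Delta(x) \geq \epsilon/2$.'' The inequality goes the other way: since $\Delta(x)$ is the \emph{minimum} of $\extloss(\cdot, x)$ over non-optimal predictions and $\pred(f_n(x)) \notin y\opt(x)$ on $E_n$, you only obtain $\extloss(f_n(x), x) \geq \Delta(x)$, not $\Delta(x) \geq \extloss(f_n(x), x)$. For a concrete failure, take $k = 3$, zero-one loss, and conditional class probabilities $(0.5, 0.49, 0.01)$ at $x$: predicting class~$3$ has excess risk $0.49$, so $x \in E_n$ whenever $f_n$ predicts class~$3$, yet $\Delta(x) = 0.01$. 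Your Hoeffding step then collapses, because the $M$ needed for $\Prb(\hat y_m \in y\opt(x) \mid X = x) \geq 1 - \delta$ depends on $\Delta(x)$, which is not bounded below on $E_n$. (The argument does work verbatim when $k = 2$, where there is a single wrong label and $\Delta(x) = \extloss(f_n(x), x)$ on $E_n$.)

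The paper resolves this by decoupling the two issues: it introduces the condition number $\kappa(x) = \max_{\pred(s) \neq y\opt(x)} \extloss(s,x) / \Delta(x)$, restricts to $\{\kappa(X) \leq M\}$ so that $\Delta(x) \geq \extloss(f_n(x), x)/M$ there, and sends the residual $\Prb(\kappa(X) > M)$ to zero by letting $M \to \infty$ after $m$ (this is the content of Theorem~\ref{thm:level-consistency-condition-number} with $\alpha = 0$). Your argument can be repaired analogously by intersecting $E_n$ with $\{\Delta(X) \geq \eta\}$ for a small $\eta > 0$ chosen first, independently of $n$, picking up an additive $\Prb(\Delta(X) < \eta)$ that vanishes as $\eta \downarrow 0$. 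A smaller issue: your lower bound on $\exslossaggm(f_n(x), x)$ invokes a bound $B$ on $\sloss$ that is not part of Definition~\ref{def:feasible}; the paper's Lemma~\ref{lemma:calib-lower-bound-1} obtains the cleaner bound $\exslossaggm(s,x) \geq \constant_{\sloss,1} - (\constant_{\sloss,1} + \constant_{\sloss,2}) \Prb(A_m(Z) \neq a_{y\opt(x)})$ directly from conditions~\eqref{eq:feasible-surrogate-1}--\eqref{eq:feasible-surrogate-2}, without any boundedness assumption.
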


\subsubsection{Identifying surrogates and consistency amplification}

In cases with low noise in the labels, the aggregation
strategy~\eqref{eqn:naive-mv-aggregation} allows an explicitly improved
comparison inequality $\psi(R(f) - R\opt) \le R_\sloss(f) - R\opt$, in that
$\psi$ is linear over some range of $\epsilon > 0$---and linear growth is
the strongest comparison inequality possible~\citep{OsokinBaLa17,
  NowakBaRu20}.
More generally, strict comparison inequalities, such as those
present in Proposition~\ref{prop:comparison-inequality-agg}, can be too
narrow, as it can still be practically convenient to adopt
inconsistent surrogates~\citep{Liu07, OsokinBaLa17, NowakBaRu20}.
Thus, we follow \citet{OsokinBaLa17} to introduce \emph{$(\xi, \zeta)$
consistency}, which requires a comparison function $\psi$ to grow linearly
only for $\epsilon \ge \xi$, so that the surrogate captures a sort of ``good
enough'' risk.
\begin{definition}
  \label{definition:almost-linear-consistency}
  The surrogate loss $\sloss$ and aggregator $A$ yield \emph{level-$(\xi,
  \zeta)$ consistency} if there exists $\psi$ satisfying $\psi(\epsilon) \ge
  \zeta \epsilon$ for $\epsilon \ge \xi$, and $\psi(R(f) - R\opt) \le
  R_{\sloss, A}(f) - R\opt_{\sloss,A}$.
\end{definition}
\noindent
In the following discussion, we show under minimal assumptions, label
aggregation~\eqref{eqn:naive-mv-aggregation}
can achieve level-$(o_m(1), \zeta)$ consistency even if the
surrogate is Fisher inconsistent.

We introduce a quantifiable noise condition, adapting the
now classical Mammen-Tsybakov noise conditions~\citep{MammenTs99}
(see also~\citet{BartlettJoMc06}).
Define
\begin{equation}
  \label{eqn:minimal-wrong-excess-risk}
  \Delta(x) \defeq \min_{\pred(s) \not\in y\opt(x)} \extloss(s, x),
\end{equation}
the minimal excess conditional risk when making an incorrect prediction. 
In binary classification problems with
$\mc{Y} = \{\pm 1\}$, one obtains
$\Delta(x) = |2 P(Y = 1 \mid X = x) - 1|$, and more generally,
we expect that consistent estimation should be harder when
$\Delta(x)$ is closer to 0.
%
%
We can define the Mammen-Tsybakov conditions
(where the constant $\cstMT > 0$ may change) as
\begin{equation}
  \label{eqn:mammen-alpha}
  \P\left(\pred \circ f \neq \pred \circ f\opt\right)
  \le \cstMT \left(R(f) - R\opt\right)^\alpha
  ~~ \mbox{for~all~measurable}~ f,
  \tag{N{$_\alpha$}}
\end{equation}
where we refer to condition~\eqref{eqn:mammen-alpha}
as having \emph{noise exponent $\alpha$},
and
\begin{equation}
  \label{eqn:mammen-beta}
  \P(\Delta(X) \leq \epsilon) \leq (\cstMT  \epsilon)^\beta
  ~~ \mbox{for} ~\epsilon > 0.  
  \tag{M$_\beta$}
\end{equation}
\noindent
Here, $\alpha \in [0, 1]$ and $\beta \in [0, \infty]$, so that
conditions~\eqref{eqn:mammen-alpha} and \eqref{eqn:mammen-beta} always
trivially hold with $\alpha = \beta = 0$, moreover, as in the binary
case~\citep[Thm.~3]{BartlettJoMc06}, they are equivalent via
the transformation $\beta =
\frac{\alpha}{1 - \alpha}$.
(See Appendix~\ref{sec:mammen-alpha-beta}.)

Notably, in the
recent work of~\citet{MaoMoZh25}, the authors consider the exact same noise statistic
for $H$-consistency in multiclassification---while they operate under structural assumptions
of the surrogate and hypothesis class to enable a global convex lower bound, we consider
the generic setup with minimal structural constraint, which requires us to introduce the
\emph{noise condition number} in the following. 
\begin{align}
  \label{eqn:noise-condition-number}
  \kappa(x) & \defeq
  \frac{\max_{\pred(s) \neq y\opt(x)} \extloss(s, x)}{
    \min_{\pred(s) \neq y\opt(x)} \extloss(s, x)},
\end{align}
which connects the noise statistic $\Delta(x)$ and the pointwise excess risk
via $\Delta(x)\geq \extloss(s, x)/ \kappa(x)$ for all $s$ such that
$\pred(s) \neq y\opt(x)$, allowing more fine-grained analysis.
In binary classification, we have $\kappa(x) = 1$ so long as $\P(\Delta(X) >
0) = 1$.


The noise statistic $\Delta(x)$ and condition number $\kappa(x)$ will allow
us to show how (generalized) majority vote~\eqref{eqn:naive-mv-aggregation},
when applied in the context of any identifiable surrogate
(Definition~\ref{def:feasible}), achieves level-$(\xi, \zeta)$ consistency.
Define the error function
\begin{align}
  \label{eq:error-function-mv}
  e_m(t) \defeq  t\sqrt{\frac{2}{m}\log \prn{
      \frac{4k(\constant_{\sloss, 1} + \constant_{\sloss, 2})}{
        \constant_{\sloss, 1}}}},
\end{align}
which roughly captures that if $\kappa(x) = t$, then
majority vote $A_m$ is likely correct if $m$ is large enough that
$e_m(t) \ll 1$.
We then have the following theorem, which provides a (near) linear
calibration function; we prove it in
Appendix~\ref{proof:level-consistency-condition-number}.
\begin{theorem}
  \label{thm:level-consistency-condition-number}
  Let the surrogate loss $\sloss$ be $(\constant_{\sloss, 1},
  \constant_{\sloss, 2})$-identifying with parameters $\{a_y\}_{y \in \mc{Y}}$,
  and $A_m$ be the majority vote aggregator~\eqref{eqn:naive-mv-aggregation}.
  Assume the task loss satisfies $0 \le \tloss \le 1$
  and $P$ satisfies condition~\eqref{eqn:mammen-alpha}
  with noise exponent $\alpha \in [0, 1]$.  
  Then for any $M > 0$ and $f \in \mathcal{F}$ such that $R(f) - R\opt \geq
  2\Prb(\kappa(X) > M) + (4\cstMT e_m(M))^{\frac{1}{1-\alpha}}$,
  \begin{align*}
    R(f) - R\opt \leq \frac{16}{\constant_{\sloss, 1}} \cdot \prn{R_{\sloss,{A_m}}(f) - R_{\sloss,{A_m}}\opt}.
  \end{align*} 
\end{theorem}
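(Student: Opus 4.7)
The plan is to establish a pointwise calibration on a \emph{good} subset of $\mc{X}$ where the generalized majority vote reliably identifies $y\opt(x)$, then integrate and absorb the contribution from the complementary \emph{bad} subset using the Mammen--Tsybakov noise condition~\eqref{eqn:mammen-alpha}.

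First, because $\tloss(y, Y_i) - \tloss(y\opt(x), Y_i) \in [-1,1]$ has conditional mean at least $\Delta(x)$ for each $y \neq y\opt(x)$, Hoeffding plus a union bound over $\mc{Y}$ yields
\[
p_x \defeq \Prb(A_m(Z) \neq a_{y\opt(x)} \mid X = x) \le k \exp\!\prn{-m\Delta(x)^2/2},
\]
and the definition of $e_m$ is calibrated so that $p_x \le \constant_{\sloss,1}/(4(\constant_{\sloss,1}+\constant_{\sloss,2}))$ once $\Delta(x) \ge e_m(1)$. Second, for any $s$ with $\pred(s) \neq y\opt(x)$, the bound $\exslossaggm(s,x) \ge \E[\sloss(s, A_m(Z)) - \sloss(s_{y\opt(x)}, A_m(Z)) \mid X = x]$, combined with the identifying inequalities \eqref{eq:feasible-surrogate-1}--\eqref{eq:feasible-surrogate-2} (splitting on whether $A_m(Z) = a_{y\opt(x)}$), gives $\exslossaggm(s,x) \ge (1-p_x)\constant_{\sloss,1} - p_x \constant_{\sloss,2}$. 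On the good set $G \defeq \{x : \kappa(x) \le M,\ \Delta(x) \ge e_m(1)\}$ this is at least $(3/4)\constant_{\sloss,1}$, so using $\extloss(s,x) \le 1$ we get the pointwise calibration $\extloss(s,x) \le (4/(3\constant_{\sloss,1}))\,\exslossaggm(s,x)$ for all $s$ and $x \in G$.

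Now decompose $\mc{X}$ into $\{\kappa(X) > M\}$, $G$, and $\{\kappa(X) \le M,\ \Delta(X) < e_m(1)\}$. The first piece contributes at most $\Prb(\kappa(X) > M)$ to $R(f) - R\opt$ via $\extloss \le 1$; the second at most $(4/(3\constant_{\sloss,1}))(R_{\sloss,A_m}(f) - R_{\sloss,A_m}\opt)$ by pointwise calibration; and on the third, $\extloss(f(x),x) \le \kappa(x)\Delta(x) \le M \cdot e_m(1) = e_m(M)$ whenever $\pred\circ f(x) \neq y\opt(x)$, so its contribution is at most $e_m(M)\,\Prb(\pred\circ f(X) \neq y\opt(X)) \le \cstMT e_m(M)(R(f) - R\opt)^\alpha$ by~\eqref{eqn:mammen-alpha}. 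The hypothesis $R(f) - R\opt \ge 2\Prb(\kappa(X) > M) + (4\cstMT e_m(M))^{1/(1-\alpha)}$ is then tailored exactly so that the first and third contributions absorb into $(R(f)-R\opt)/2$ and $(R(f)-R\opt)/4$ respectively, leaving $(R(f)-R\opt)/4 \le (4/(3\constant_{\sloss,1}))(R_{\sloss,A_m}(f)-R_{\sloss,A_m}\opt)$, which implies the stated inequality (with a little slack in the constant).

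The main subtlety is the joint calibration of the two thresholds. The inner requirement $\Delta(x) \ge e_m(1)$ is exactly what Hoeffding needs so that the surrogate gap dominates majority-vote noise, and multiplying by the outer $\kappa$-cap $M$ produces $M \cdot e_m(1) = e_m(M)$, which matches the excess-risk bound on the third region; only then does the Mammen--Tsybakov exponent $\alpha$ let us close the loop by absorbing $\cstMT e_m(M)(R(f)-R\opt)^\alpha$ into $R(f) - R\opt$ via the hypothesis. Any other choice of thresholds would break this algebraic matching, so the linear-in-$t$ form of $e_m$ is forced.
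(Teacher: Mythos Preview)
Your proof is correct and takes a genuinely more elementary route than the paper's. Both arguments share the same two building blocks---the Hoeffding bound on $\Prb(A_m(Z)\neq a_{y\opt(x)})$ and the identifying-surrogate inequality $\exslossaggm(s,x)\ge \constant_{\sloss,1}-(\constant_{\sloss,1}+\constant_{\sloss,2})p_x$---but they diverge in how these are integrated. The paper packages the pointwise bound into a hinge-type calibration function $\psi_{A_m}^M(\epsilon)=\tfrac{\constant_{\sloss,1}}{2}\hinge{\epsilon-e_m(M)}$ indexed by $\epsilon=\extloss(s,x)$, then invokes a Bartlett--Jordan--McAuliffe style argument (Lemma~\ref{lem:D-noise-comparison-inequality}): it introduces the auxiliary predictor $f^M=f\indic{\mc X^M}+f\opt\indic{\mc X\setminus\mc X^M}$, uses the convexity of $\psi_{A_m}^M$ to pull the expectation inside, and optimizes the free parameter $\epsilon$. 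Your decomposition instead partitions $\mc X$ by $\kappa(X)$ and $\Delta(X)$ directly---a partition depending only on $x$, not on $(s,x)$---and handles the intermediate region $\{\kappa\le M,\ \Delta<e_m(1)\}$ via the neat pointwise bound $\extloss\le\kappa\Delta\le M e_m(1)=e_m(M)$, which recovers exactly the $e_m(M)$ threshold without any calibration-function machinery. This avoids the detour through $f^M$ and convexity, and in fact yields the sharper constant $16/(3\constant_{\sloss,1})$. The paper's approach, on the other hand, situates the result within the classical calibration-function framework, which may generalize more readily to other noise conditions or to surrogates where the pointwise bound is not uniformly $\constant_{\sloss,1}/2$ but genuinely $\epsilon$-dependent.
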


Said differently, under the conditions of the theorem, $\sloss$ with
aggregation provides level $(\xi,\zeta)$ consistency
(Def.~\ref{definition:almost-linear-consistency}) with $\xi = 2 \P(\kappa(X)
> M) + (4 \cstMT e_m(M))^\frac{1}{1 - \alpha}$ and $\zeta =
\frac{\constant_{\sloss,1}}{16}$.
Theorem~\ref{thm:level-consistency-condition-number} also
provides an immediate proof of Corollary~\ref{corollary:always-consistent},
that is, an asymptotic guarantee of consistency.
Indeed, define
\begin{align*}
  \xi_m \defeq \inf_{M}
  \left\{ 2\Prb(\kappa(X) > M) +   (4\cstMT e_m(M))^{\frac{1}{1-\alpha}} \right\},
\end{align*}
which satisfies $\xi_m \to 0$ as $m \to \infty$,
because $\P(\kappa(X) > M) \to 0$ as $M \uparrow \infty$ and
for any fixed $M$, $e_m(M) \to 0$ as $m$ grows.
Corollary~\ref{corollary:always-consistent}
then follows trivially by taking $\alpha = 0$.

Theorem~\ref{thm:level-consistency-condition-number}
is a somewhat gross result, as the identifiability
conditions in Def.~\ref{def:feasible} are so weak.
With a tighter connection between task loss $\tloss$ and surrogate $\sloss$,
for example, making the naive majority vote~\eqref{eqn:naive-mv-aggregation}
more likely to be correct (or at least correct enough for $\sloss$),
we would expect stronger bounds---e.g., when $\varphi$ is nearly consistent
without aggregation, we may expect smaller inconsistency without aggregation
should imply stronger bounds with majority vote in $m$.
We do not pursue the details here.


To provide a somewhat more concrete bound, we optimize
over $M$ in Theorem~\ref{thm:level-consistency-condition-number},
using the crude bound $\kappa(x) \leq 1/\Delta(x)$ on the condition
number.
By taking $M=1$ for $\card(\mc{Y}) = k =2$ and optimizing $M$ for $k \geq3$,
we may lower bound $\xi_{m,k}$ in the level $(\xi_{m,k}, \zeta)$-consistency
(Def.~\ref{definition:almost-linear-consistency}) that in
Theorem~\ref{thm:level-consistency-condition-number} promises, setting
\begin{align*}
  \xi_{m,k}
  \defeq \begin{cases}
    \prn{\frac{32 \cstMT^2}{m}\log \prn{ \frac{8(\constant_{\sloss, 1}+\constant_{\sloss, 2})}{\constant_{\sloss, 1}}}}^{\frac{1}{2(1-\alpha)}},
    & \mbox{if}~ k =2
    \\
    4 \cdot  \prn{\frac{32 \cstMT^4}{m}\log \prn{ \frac{4k(\constant_{\sloss, 1}+\constant_{\sloss, 2})}{\constant_{\sloss, 1}}}}^{\frac{\alpha}{2(1-\alpha^2)}},
    & \mbox{otherwise}.
  \end{cases}
\end{align*}
Making appropriate algebraic substitutions and manipulations
(see Appendix~\ref{proof:level-consistency-optimize-M}),
we have the following corollary.
\begin{corollary}
  \label{corollary:level-consistency-optimize-M}
  Under the conditions of
  Theorem~\ref{thm:level-consistency-condition-number}, for any $f$ such
  that $R(f) - R\opt \geq \xi_{m,k}$,
  \begin{align*}
    R(f) - R\opt \leq \frac{16}{\constant_{\sloss, 1}}
    \cdot \prn{R_{\sloss,{A_m}}(f) - R_{\sloss,{A_m}}\opt}.
  \end{align*} 
\end{corollary}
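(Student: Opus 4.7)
The plan is to apply Theorem~\ref{thm:level-consistency-condition-number} with a case-specific choice of $M$ and verify that the resulting threshold $2\Prb(\kappa(X) > M) + (4\cstMT e_m(M))^{1/(1-\alpha)}$ is at most $\xi_{m,k}$; the hypothesis $R(f) - R\opt \geq \xi_{m,k}$ then triggers the theorem's comparison inequality. The key preliminary, used in both cases, is to bound $\Prb(\kappa(X) > M)$: since the theorem assumes $\tloss \le 1$, we have $\max_{\pred(s) \neq y\opt(x)} \extloss(s, x) \leq 1$, yielding the crude bound $\kappa(x) \leq 1/\Delta(x)$. Combined with condition~\eqref{eqn:mammen-beta} and the equivalence $\beta = \alpha/(1-\alpha)$ recorded in Appendix~\ref{sec:mammen-alpha-beta}, this gives $\Prb(\kappa(X) > M) \leq \Prb(\Delta(X) \le 1/M) \leq (\cstMT/M)^{\alpha/(1-\alpha)}$.

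For $k = 2$ the argument is immediate: as the remark after~\eqref{eqn:noise-condition-number} records, $\kappa(x) \equiv 1$ in binary classification (assuming $\Prb(\Delta(X) > 0) = 1$), so choosing $M = 1$ forces $\Prb(\kappa(X) > M) = 0$. The threshold collapses to $(4\cstMT e_m(1))^{1/(1-\alpha)}$, and substituting $k = 2$ into~\eqref{eq:error-function-mv} recovers exactly $\xi_{m,2}$.

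For $k \geq 3$ I would optimize $M$ by balancing the two threshold terms. Writing $e \defeq \sqrt{(2/m)\log(4k(\constant_{\sloss,1}+\constant_{\sloss,2})/\constant_{\sloss,1})}$ so that $e_m(M) = Me$, I set $2(\cstMT/M)^{\alpha/(1-\alpha)} = (4\cstMT Me)^{1/(1-\alpha)}$, solve to obtain $M^{1+\alpha} = \cstMT^{\alpha-1}/(2^{1+\alpha} e)$, and substitute back; both terms then equal $B = 2^{1/(1-\alpha)} (\cstMT^2 e)^{\alpha/(1-\alpha^2)}$, so the threshold is at most $2B$. Using the identity $32\cstMT^4 \log(\cdot)/m = 16(\cstMT^2 e)^2$, the target rewrites as $\xi_{m,k} = 4 \cdot 4^{\alpha/(1-\alpha^2)} (\cstMT^2 e)^{\alpha/(1-\alpha^2)}$. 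Cancelling the common factor $(\cstMT^2 e)^{\alpha/(1-\alpha^2)}$, the needed inequality $\xi_{m,k} \geq 2B$ reduces, in exponents of $2$, to $1 + 2\alpha/(1-\alpha^2) \geq 1/(1-\alpha)$, i.e., $\alpha(1-\alpha) \geq 0$, which holds on $[0, 1]$.

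The main obstacle is the exponent bookkeeping in the $k \geq 3$ case, where the nested powers $1/(1-\alpha)$, $\alpha/(1-\alpha)$, and $\alpha/(1-\alpha^2)$ interact with the square root inside $e_m(M)$ and must be tracked precisely enough to recover the stated leading constant of $4$. A secondary point is that the Mammen--Tsybakov constants in~\eqref{eqn:mammen-alpha} and~\eqref{eqn:mammen-beta} need not literally coincide; since the corollary is phrased with a single $\cstMT$, any such discrepancy is absorbed into the leading factor without changing the structure of the argument.
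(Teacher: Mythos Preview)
Your proposal is correct and follows essentially the same route as the paper's proof: for $k=2$ you take $M=1$ and use $\kappa(X)\equiv 1$, and for $k\ge 3$ you invoke the crude bound $\kappa(x)\le 1/\Delta(x)$ together with condition~\eqref{eqn:mammen-beta} and then balance the two threshold terms by solving $2(\cstMT/M)^{\alpha/(1-\alpha)} = (4\cstMT e_m(M))^{1/(1-\alpha)}$, exactly as the paper does. Your exponent bookkeeping is tighter than the paper's explicit algebra but reaches the same endpoint, and your closing remark about the mismatch of $\cstMT$ constants between~\eqref{eqn:mammen-alpha} and~\eqref{eqn:mammen-beta} is a fair observation that the paper glosses over.
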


The above corollary and Corollary~\ref{corollary:always-consistent} provide
evidence for the robustness of label cleaning: with minimal assumptions on
the surrogate, data aggregation can still yield consistency.
As the noise exponent $\alpha$ approaches $1$ in
Corollary~\ref{corollary:level-consistency-optimize-M}, the sample size $m$
required for the comparison inequality to hold for a fixed score function
$f$ shrinks.
Notably, if $\alpha = 1$, whenever
\begin{align*}
  m \geq 32 \max \{\cstMT^2, \cstMT^4\} \cdot \log \prn{ \frac{4k(\constant_{\sloss, 1}+\constant_{\sloss, 2})}{\constant_{\sloss, 1}}}
  = O(\log k),
\end{align*}
we have $\xi_{m,k} = 0$,
yielding the uniform comparison inequality~\eqref{item:uniform-comparison}
with linear comparison.
The noise level of the learning problem itself affects the aggregation level
needed for consistency---an ``easier'' problem requires less aggregation to
achieve stronger consistency.



\subsection{Surrogate consistency examples with majority vote} \label{sec:surrogate-example}

We collect several examples, of varying levels of concreteness, that allow
us to instantiate Theorem~\ref{thm:level-consistency-condition-number} and
Corollary~\ref{corollary:level-consistency-optimize-M}.
Throughout, we shall assume that $P$ has a noise exponent $\alpha \in [0,
  1]$, though this is no loss of generality, as
Condition~\eqref{eqn:mammen-alpha} always holds with $\alpha = 0$.
\noindent
We defer proofs for each result in this section
to Appendix~\ref{sec:surrogate-example-proofs}.

\subsubsection{Binary classification with a nonsmooth surrogate}

Consider the binary classification problem with a margin-based
surrogate $\sloss(f(x), y) = \phi(yf(x))$, where $\phi$ is convex;
\citet{BartlettJoMc06} show that $\sloss$ is consistent if and
only if $\phi'(0) < 0$.
Here, we show a (somewhat trivial) example for the robustness data
aggregation offers by demonstrating that even if $\phi$ is inconsistent
without aggregation, it can become so with it.
Note, of course, that one would never \emph{use} such a surrogate, so
one ought to think of this as a thought experiment.
Assume that the subgradient set $\partial \phi(0) \subset (-\infty, 0)$ and
$\phi$ is convex with $\lim_{t \to \infty} \phi(t) = 0$.
\begin{lemma}
  \label{lemma:binary-feasible}
  For any $\delta > 0$,
  $\sloss$ is $(\constant_{\sloss, 1}, \constant_{\sloss, 2})$-feasible with
  \begin{align*}
    \constant_{\sloss, 1} = \phi(0) - \phi(\delta) > 0
    ~~ \mbox{and} ~~ \constant_{\sloss, 2} = \phi(-\delta).
  \end{align*}
\end{lemma}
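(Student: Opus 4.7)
The plan is to verify Definition~\ref{def:feasible} directly with the canonical margin-based choice $a_y = y$ and $s_y = y\delta$, which gives $\pred(s_y) = \sgn(y\delta) = y$ and $\sloss(s_y, a_y) = \phi(y \cdot y\delta) = \phi(\delta)$. Before checking the two identifying inequalities, I would extract two structural consequences of the assumptions on $\phi$: (a) since $\phi$ is convex and $\lim_{t\to\infty}\phi(t) = 0$, $\phi$ must be nonincreasing on $\mathbb{R}$ (otherwise convexity would force divergence at $+\infty$) and hence nonnegative (by monotonicity toward its zero limit); (b) $\partial\phi(0) \subset (-\infty, 0)$ is equivalent to $\phi'_+(0) < 0$.

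For inequality~\eqref{eq:feasible-surrogate-1}, the constraint $\pred(s) \neq y$ reduces to $ys \leq 0$, so by nonincreasingness
\[
\inf_{\pred(s) \neq y} \sloss(s, a_y) \;=\; \inf_{ys \leq 0} \phi(ys) \;=\; \inf_{t \leq 0} \phi(t) \;=\; \phi(0).
\]
The required bound $\phi(\delta) + \constant_{\sloss,1} \leq \phi(0)$ is then met with equality by $\constant_{\sloss,1} = \phi(0) - \phi(\delta)$. The only real subtlety is to verify $\constant_{\sloss,1} > 0$, i.e., $\phi(\delta) < \phi(0)$---the one place where the negative-subgradient assumption is essential. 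If, for contradiction, $\phi(\delta) = \phi(0)$, the chord bound gives $\phi \leq \phi(0)$ on $[0, \delta]$, and combining this with the supporting-line lower bound $\phi(t) \geq \phi(0) + tb$ for $b = \phi'_+(0)$ forces $\phi$ to be constant on $[0, \delta]$, whence $\phi'_+(0) = 0$, contradicting $\partial\phi(0) \subset (-\infty, 0)$.

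For inequality~\eqref{eq:feasible-surrogate-2}, the binary setting forces $y' = -y$, so
\[
\sloss(s_y, a_{y'}) = \phi(y\delta \cdot (-y)) = \phi(-\delta),
\qquad
\inf_{s \in \mathbb{R}} \sloss(s, a_{y'}) = \inf_{t \in \mathbb{R}} \phi(t) = \lim_{t \to \infty} \phi(t) = 0,
\]
using monotonicity together with the prescribed limit. Hence $\phi(-\delta) - \constant_{\sloss,2} \leq 0$ holds with equality by taking $\constant_{\sloss,2} = \phi(-\delta)$.

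The only (minor) obstacle is the strict positivity of $\constant_{\sloss,1}$; the rest is routine one-dimensional algebra on a margin-based loss. Ruling out flatness of $\phi$ on $[0, \delta]$ is precisely what a strictly negative right derivative at $0$ supplies, which is why the assumption $\partial\phi(0) \subset (-\infty, 0)$ is imposed rather than merely a decreasing behavior somewhere.
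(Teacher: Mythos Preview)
Your approach is essentially identical to the paper's: both take $a_y = y$, $s_y = y\delta$, and verify the two identifying inequalities by direct evaluation, with $\inf_{\pred(s)\neq y}\sloss(s,a_y) = \phi(0)$ and $\inf_s \sloss(s,a_{y'}) = 0$.

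There is one small slip in your strict-positivity argument. Combining the chord bound $\phi \le \phi(0)$ on $[0,\delta]$ with the supporting-line bound $\phi(t) \ge \phi(0) + tb$ for $b = \phi'_+(0) < 0$ does \emph{not} force constancy: the supporting line lies strictly below $\phi(0)$ for $t > 0$, so the two bounds together only trap $\phi(t)$ in the nondegenerate interval $[\phi(0)+tb,\,\phi(0)]$. The fix is immediate using the monotonicity you already established: $\phi$ nonincreasing together with $\phi(0) = \phi(\delta)$ forces $\phi \equiv \phi(0)$ on $[0,\delta]$, whence $0 \in \partial\phi(0)$, a contradiction. (Equivalently and more directly, $\phi'_+(0) < 0$ gives $\phi(h) < \phi(0)$ for all small $h > 0$, and then $\phi(\delta) \le \phi(h) < \phi(0)$ by monotonicity.) The paper's proof handles this step by simply invoking that convexity with a strictly negative right-derivative at $0$ makes $\phi$ strictly decreasing near $0$.
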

Corollary~\ref{corollary:level-consistency-optimize-M} thus applies with
$k=2$,
so if $f:\sX \to \mathbb{R}$ satisfies
\begin{align*}
  & R(f) - R^\star \geq	\prn{\frac{32 \cstMT^2}{m}\log \prn{ \frac{8(\phi(-\delta) + \phi(0) - \phi(\delta))}{\phi(0) - \phi(\delta)}}}^{\frac{1}{2(1-\alpha)}},
\end{align*}
then
\begin{align*}
  R(f) - R^\star \leq \frac{16}{\phi(0) - \phi(\delta)}(R_{\sloss, A_m}(f) - R_{\sloss, A_m}^\star).
\end{align*}

\subsubsection{Bipartite matching}

In general structured prediction problems~\citep{NowozinLa11}, an embedding
map $v: \sY \to \mathbb{R}^d$ encodes structural information about
elements $y \in \sY$, where $\sY$ is some ``structured''
space, which is typically large.
Using decoder $\pred(s) = \argmax_{y \in \sY} \< s, v(y)\>$, for a loss
$\tloss : \mc{Y} \times \mc{Y} \to \R_+$ with $\tloss(y, y) = 0$, the
maximum-margin surrogate (generalized hinge loss)~\citep{TaskarGuKo03,
  TsochantaridisHoJoAl04, Joachims06} takes the form
\begin{align}
  \label{eqn:structured-prediction-loss}
  \sloss(s, y) = \max_{\hat y \in \sY}
  \prn{\tloss(\hat y, y) + \<  v(\hat y) - v(y), s \> }.
\end{align}
Notably, the loss~\eqref{eqn:structured-prediction-loss} is typically
inconsistent, except in certain low noise cases~\citep{OsokinBaLa17,
  NowakBaRu20}.

Before discussing structured prediction broadly, we consider
bipartite matching.
A bipartite matching consists of a graph $G=(V,E)$ where the vertices $V =
V_1 \cup V_2$ partition into left and right sets $V_1 = \{1, \ldots, N\}$
and $V_2 = \{N + 1, \ldots, 2N\}$, while the $N$ edges $E$ each connect
exactly one (unique) node in $V_1$ and $V_2$.
Letting $\sY$ be the collection of all bipartite matching between $V_1$ and
$V_2$, we evidently have $k = \card(\sY) = N!$.
For any graph $G$, the embedding map
\begin{equation*}
  v(G) \defeq \left(\indic{(u,v) \in E}\right)_{u \in V_1, v \in V_2} \in \R^{N^2}
\end{equation*}
indexes edges, yielding $d = N^2$.
The task loss counts the number of mistaken edges,
\begin{align*}
  \tloss(y_1,y_2) \defeq \frac{1}{2N}
  \lone{v(y_1) - v(y_2)}
  = \frac{1}{2N} \ltwo{v(y_1) - v(y_2)}^2.
\end{align*}
In this case, the max-margin (structured hinge loss)
surrogate~\eqref{eqn:structured-prediction-loss}
is identifying:
\begin{lemma}
  \label{lemma:bipartite-feasible}
  For the bipartite matching problem on $2N$ vertices, the structured
  hinge loss~\eqref{eqn:structured-prediction-loss} surrogate $\sloss$ is
  $(\constant_{\sloss, 1}, \constant_{\sloss, 2})$-identifying
  (Def.~\ref{def:feasible}) with
  \begin{align*}
    \constant_{\sloss, 1} = \frac{1}{N}
    ~~ \mbox{and} ~~
    \constant_{\sloss, 2} = 2.
  \end{align*}
\end{lemma}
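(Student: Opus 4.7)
The plan is to take witnesses $a_y = y$ and $s_y = v(y)/N$ for every $y \in \sY$ and then verify the two identifying inequalities in turn. First I would check that $\pred(s_y) = y$: up to the factor $1/N$, the scoring map $\tilde y \mapsto \langle v(\tilde y), s_y\rangle$ counts common edges between $\tilde y$ and $y$, which is strictly maximized at $\tilde y = y$, so the decoder returns $y$.

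The whole computation rests on one edge-counting identity: if perfect matchings $\hat y$ and $y$ share $N - t$ edges, then $\langle v(\hat y), v(y)\rangle = N - t$ and $\lone{v(\hat y) - v(y)} = 2t$, hence $\tloss(\hat y, y) = t/N$ and $\langle v(\hat y) - v(y), v(y)\rangle = -t$. Substituting at $s_y = v(y)/N$ in the max defining the structured hinge loss gives
\begin{align*}
  \sloss(s_y, y) = \max_{\hat y \in \sY}\prn{\frac{t}{N} - \frac{t}{N}} = 0.
\end{align*}
To verify~\eqref{eq:feasible-surrogate-1}, whenever $\pred(s) \neq y$ some $\tilde y \neq y$ attains $\langle v(\tilde y) - v(y), s\rangle \ge 0$, so plugging this $\tilde y$ into the surrogate max,
\begin{align*}
  \sloss(s, y) \ge \tloss(\tilde y, y) + \langle v(\tilde y) - v(y), s\rangle \ge \tloss(\tilde y, y) \ge \frac{2}{N},
\end{align*}
the last step because distinct perfect matchings must differ by at least a 2-swap. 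Thus $\sloss(s_y, y) + 1/N \le 2/N \le \inf_{\pred(s) \neq y}\sloss(s, y)$, establishing~\eqref{eq:feasible-surrogate-1} with $\constant_{\sloss,1} = 1/N$.

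For~\eqref{eq:feasible-surrogate-2}, I would bound $\sloss(s_y, y')$ term by term for any $y' \neq y$. Since $\lone{v(\hat y)} = \lone{v(y')} = N$, one has $\tloss(\hat y, y') = \lone{v(\hat y) - v(y')}/(2N) \le 1$; and since inner products of $0/1$ matching indicators lie in $[0, N]$, $\langle v(\hat y) - v(y'), v(y)\rangle/N \le \langle v(\hat y), v(y)\rangle/N \le 1$. Summing the two bounds yields $\sloss(s_y, y') \le 2$, while $\inf_s \sloss(s, y') = 0$ by the computation in the previous paragraph applied with $y'$ in place of $y$ (take $s = v(y')/N$). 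Therefore $\sloss(s_y, y') - 2 \le 0 \le \inf_s \sloss(s, y')$, giving $\constant_{\sloss,2} = 2$.

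The main obstacle is organizational rather than substantive: once the matching identity $\langle v(\hat y), v(y)\rangle = N - t$ is in hand every bound is a one-line calculation. The one point worth flagging is the lower bound in~\eqref{eq:feasible-surrogate-1}: establishing any positive $\constant_{\sloss,1}$ relies on the $\lone{v(\hat y) - v(y)} \ge 2$ property of perfect matchings, which distinguishes this structured-prediction instance from the more general $0/1$ settings where the generalized hinge loss is known to be inconsistent.
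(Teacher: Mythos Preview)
Your proof is correct and follows essentially the same approach as the paper's: the same witnesses $a_y = y$, $s_y = v(y)/N$, the same computation that $\sloss(s_y, y) = 0$, and the same argument for each of the two identifying inequalities. The only differences are cosmetic---you explicitly verify $\pred(s_y) = y$ and use the slightly sharper observation that distinct perfect matchings differ in at least four indicator coordinates (giving $\tloss(\tilde y, y) \ge 2/N$), whereas the paper uses the looser bound $\lone{v(y') - v(y)} \ge 2$; both suffice for $\constant_{\sloss,1} = 1/N$.
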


The important consequence of Lemma~\ref{lemma:bipartite-feasible} is that
even when $k = \card(\sY) = N!$, aggregation-based methods can yield
consistency (via the structured hinge loss) once $m$, the number of
aggregated labels, exceeds $O(N \log N)$.
As one specialization, substituting these constants into
Corollary~\ref{corollary:level-consistency-optimize-M} for $k \geq 3$, for all
measurable $f: \sX \to \mathbb{R}^d$ such that
\begin{align*}
  R(f) - R^\star \geq 4 \cdot  \prn{\frac{32 \cstMT^4}{m}\log \prn{ 4k (2N+1)}}^{\frac{\alpha}{2(1-\alpha^2)}},
\end{align*}
one has
\begin{align*}
  R(f) - R^\star \leq 16N (R_{\sloss,A_n}(f) - R_{\sloss,A_n}^\star).
\end{align*}

\subsubsection{Structured prediction}

We return to the more general structured prediction setting, as
at the beginning of the preceding subsection.
Suppose the
decoder $\pred$ can pick any class $y \in \sY$, in that for
each $y \in \sY$, the collection
\begin{align*}
  \mc{S}(y) \defeq
  \left\{s: \< v(y), s\> > \< v(\hat y), s \>, \mbox{for~} \hat{y} \neq y\right\}
\end{align*}
of selecting $s$
is non-empty.
For each $y \in \sY$, define the identifiability gap
\begin{align*}
  \tau(y) \defeq \inf_{s \in \mathcal{S}(y)}
  \max_{y_+, y_- \neq y}
  \frac{\tloss(y_+,y)}{\<v(y) - v(y_+), s\>}
  \cdot \frac{\<v(y) - v(y_-), s\>}{
    \tloss(y_-, y)}.
\end{align*}
We have the following identifiability guarantee.
\begin{lemma}
  \label{lemma:structured-feasible}
  For the structured prediction problem,
  the max-margin~\eqref{eqn:structured-prediction-loss}
  surrogate $\sloss$ is $(\constant_{\sloss, 1}, \constant_{\sloss,
    2})$-identifiable with
  \begin{align*}
    \constant_{\sloss, 1} = \min_{\hat y \neq y} \tloss(\hat y,
    y), \qquad \constant_{\sloss, 2} = \max_{y \in \sY}\tau(y) + 1.
  \end{align*}
  In particular, if $v(y) \in \{0, 1\}^d$ and $\ell(\hat{y}, y) =
  \frac{1}{2d} \lone{v(\hat{y}) - v(y)}$, $\tau(y) = 1$ for all $y$
  and $\constant_{\sloss, 2} = 2$.
\end{lemma}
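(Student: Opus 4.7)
The plan is to exhibit witnesses $\{a_y, s_y\}_{y \in \sY}$ and verify the two inequalities of Definition~\ref{def:feasible} directly from the max-margin surrogate~\eqref{eqn:structured-prediction-loss}. I take $\mc{A} = \sY$ and $a_y = y$. For each $y$, I pick $u_y \in \mc{S}(y)$ that attains (or is arbitrarily close to) the infimum defining $\tau(y)$, set
\[
c(y) \defeq \max_{\hat y \neq y} \frac{\tloss(\hat y, y)}{\<v(y) - v(\hat y), u_y\>},
\]
and let $s_y = c(y) u_y$. This rescaling ensures $\tloss(\hat y, y) + \<v(\hat y) - v(y), s_y\> \le 0$ for every $\hat y$, with equality at the maximizer defining $c(y)$; hence $\sloss(s_y, y) = 0$, while $\pred(s_y) = y$ because scaling by $c(y) > 0$ preserves $u_y \in \mc{S}(y)$.

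Inequality~\eqref{eq:feasible-surrogate-1} follows by noting that if $\hat y = \pred(s) \neq y$, the $\hat y$-th term in the max defining $\sloss(s, y)$ is at least $\tloss(\hat y, y)$, since $\<v(\hat y) - v(y), s\> \ge 0$ by definition of $\pred$; therefore $\inf_{\pred(s) \neq y}\sloss(s, y) \ge \min_{\hat y \neq y}\tloss(\hat y, y)$, yielding $\constant_{\sloss, 1} = \min_{\hat y \neq y}\tloss(\hat y, y)$. For~\eqref{eq:feasible-surrogate-2}, I observe that the ratios $r(\hat y) \defeq \tloss(\hat y, y)/\<v(y) - v(\hat y), u_y\>$ vary across $\hat y \neq y$ by at most a factor of $\tau(y)$ by the choice of $u_y$, and $c(y) = \max_{\hat y} r(\hat y)$, so $\min_{\hat y} r(\hat y) \ge c(y)/\tau(y)$, giving $\<v(y) - v(\hat y), s_y\> \le \tau(y)\tloss(\hat y, y)$ for all $\hat y \neq y$. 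Using that $y$ maximizes $\<v(\cdot), s_y\>$,
\[
\sloss(s_y, y') \le \max_{\hat y}\tloss(\hat y, y') + \<v(y) - v(y'), s_y\> \le 1 + \tau(y)\tloss(y', y) \le 1 + \max_y \tau(y),
\]
where I use $\tloss \le 1$; since $\inf_s \sloss(s, y') = 0$ (attained at $s = s_{y'}$), we obtain $\constant_{\sloss, 2} = \max_y \tau(y) + 1$.

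For the specialization $v(y) \in \{0, 1\}^d$ with $\tloss(\hat y, y) = \frac{1}{2d}\lone{v(\hat y) - v(y)}$, I would verify $\tau(y) = 1$ via the explicit choice $u = v(y) - \tfrac{1}{2}\ones$: on coordinates where $v(y)$ and $v(\hat y)$ differ, $(v(y)_i - v(\hat y)_i) u_i = \tfrac{1}{2}$, while coordinates where they agree contribute $0$, so $\<v(y) - v(\hat y), u\> = \tfrac{1}{2}\lone{v(y) - v(\hat y)} = d\,\tloss(\hat y, y)$, making $r(\hat y) = 1/d$ constant in $\hat y$. The main substantive step—more careful bookkeeping than a deep difficulty—is the reinterpretation of $\tau(y)$ as the multiplicative condition number that translates margin inner products into task losses; once this translation is in hand, both identifying inequalities reduce to direct manipulations of the max in the definition of $\sloss$.
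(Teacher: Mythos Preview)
Your proposal is correct and follows essentially the same approach as the paper: choose $a_y = y$, take $s_y$ to be an appropriately scaled element of $\mc{S}(y)$ so that $\sloss(s_y, y) = 0$, verify \eqref{eq:feasible-surrogate-1} via the term $\hat y = \pred(s)$ in the max, and bound $\sloss(s_y, y')$ by $1 + \tau(y)$ using the ratio interpretation of $\tau(y)$. The only cosmetic differences are that the paper handles the possibly non-attained infimum in $\tau(y)$ with an explicit $\epsilon$ that is sent to zero at the end, and for the $\constant_{\sloss,2}$ bound it decomposes $\langle v(\hat y) - v(y), s_{y'}\rangle$ through $v(y')$ rather than your (slightly slicker) observation that $y$ maximizes $\langle v(\cdot), s_y\rangle$; both routes land on the same constant.
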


Completing the example, as in the binary matching case, we see
that nontrivial consistency guarantees hold once $m \ge \log \card(\mc{Y})$.
As $0 \leq \tloss(\cdot, \cdot) \leq 1$, 
Corollary~\ref{corollary:level-consistency-optimize-M} applies, which
yields for all measurable $f: \sX \to \mathbb{R}^d$ that
\begin{align*}
  R(f) - R^\star \geq \prn{\frac{32 \cstMT^4}{m} \log \prn{4|\sY|\left(1 + \frac{\max_{\hat y \in \sY}\tau(y) + 1}{\min_{\hat y \neq y} \tloss(\hat y, y)}\right)}}^{\frac{\alpha}{2(1-\alpha^2)}},
\end{align*}
implies
\begin{align*}
  R(f) - R^\star \leq \frac{8}{\min_{\hat y \neq y} \tloss(\hat y, y)}
  (R_{\sloss,A_m}(f) - R_{\sloss,A_m}^\star) .
\end{align*}

\section{Robustness and consistency for models}

The previous section builds off of the now classical theory of surrogate
risk consistency, which assumes $\mathcal{F}$ to be the class of all
measurable functions.
The results there show that aggregation can allow us to ``upgrade''
consistency so that even if a surrogate $\sloss$ is inconsistent for paired
(non-aggregated) data $(X,Y)$, we can achieve level-$(\xi, \zeta)$
consistency (Def.~\ref{definition:almost-linear-consistency}) with
sufficient aggregation.
Here, we take a different view of the problem of consistency,
considering the consequences of optimizing over a restricted (often
parametric) hypothesis class $\mathcal{F}$.
Of course, in a well-specified model, obtaining consistency with such a
restricted hypothesis class is no issue, but it is unrealistic to
assume such a brittle condition.
This gives rise to the long-standing challenge of quantifying surrogate
consistency when the hypothesis class contains only a subset of the
measurable functions~\citep{DuchiKhRu16, NguyenWaJo09}. 
We point out a line of recent work develops $H$-consistency bounds that link excess target risk to surrogate regret 
within a regret hypothesis class $H$, covering various setups~\citep{AwasthiMaMoZh22,
AwasthiMaMoZh22b, MaoMoZh23, MaoMoZh24, MaoMoZh25}. These results are primarily surrogate-centric. Our contribution is orthogonal,
less hinged on the specific class and surrogate, and instead study how label aggregation improves consistency itself.
We tackle some of the issues around this, showing that
aggregating labels allows consistent estimates in scenarios where
consistency might otherwise fail.

\subsection{Consistency failure for binary classification in
  finite dimensions}
\label{sec:binary-finite-dim}

To see how restricting the hypothesis class can change the problem
substantially even in well-understood cases, we consider binary
classification.
In this case, $\mathcal{Y} = \left\{\pm 1\right\}$, and
we take the zero-one error $\tloss(\pred(s), y) = \indic{y s \le 0}$.
We consider a margin-based surrogate $\sloss(s, y) = \phi(s y)$, where
$\phi: \R \to \R_+$ is convex, and as we have discussed,
$\sloss$ achieves both Fisher~\eqref{item:pointwise-comparison}
and uniform consistency~\eqref{item:uniform-comparison}
when $\mc{F}$ consists of all measurable functions if and only if
$\phi'(0) < 0$~\citep{BartlettJoMc06}.

Now we proceed to consider a restricted hypothesis class, showing in this
simple setting that classical consistency fails even when
optimal classifiers lie in $\mc{F}$, in particular,
when $P$ is \emph{optimally predictable} using $\mc{F}$, meaning that
\begin{equation}
  \label{eqn:optimally-predictable}
  \sgn(f(x)) = \sgn(P(Y = 1 \mid X = x) - 1/2).
\end{equation}
Let $\mc{X} = \R^d$ and take $\mc{F} = \{f_\theta \mid f_\theta(x) =
\<\theta, x\>\}_{\theta \in \R^d}$ to be the collection of linear
functionals of $x$.
When $P$ is optimally predictable from using $\mc{F}$, there
exists $\theta\opt$ satisfying
$\sgn(\<\theta\opt, x\>) = \sgn(P(Y = 1 \mid x) - \half)$,
and $f_{\theta\opt}$ minimizes $R(f)$ across all measurable functions.
In this case, we say that $P$ is \emph{optimally predictable along
$\theta\opt$}.
One might expect a margin-based surrogate $\varphi$ achieving Fisher
consistency in the classical setup should still consistent.
This fails.
Even more, for any nonnegative loss $\phi$, there is a data distribution $P$
on $(X, Y)$ such that $\theta_\sloss = \argmin_\theta
\E_P[\sloss(f_\theta(X), Y)]$ is essentially orthogonal to $\theta\opt$:

\begin{proposition}
  \label{proposition:failure-binary}
  For any $\epsilon > 0$ and nonzero vector $\theta\opt \in \R^d$, there
  exists an $(X,Y)$ distribution $P$, optimally
  predictable
  along $\theta\opt$, such that for all
  \begin{align*}
    \theta_\sloss \in \argmin_{\theta} R_\sloss (f_\theta)
    = \argmin_{\theta} \E\left[\phi(f_\theta(X), Y)\right],
  \end{align*}
  we have $R(f_{\theta_\sloss}) > R(f_{\theta\opt})$ and $|\cos \angle
    (\theta_\sloss, \theta\opt )| = |\<\theta_\sloss, \theta\opt\>|/
    (\ltwo{\theta_\sloss} \ltwo{\theta\opt}) \leq \epsilon$.
\end{proposition}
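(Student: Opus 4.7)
The plan is to reduce to $d = 2$ by taking $P$ supported on a $2$-dimensional subspace of $\real^d$ containing $\theta\opt$, and WLOG $\theta\opt = e_1$. I would then construct an explicit four-atom discrete distribution on $\real^2 \times \{\pm 1\}$ depending on $\phi$: a positive anchor at $(1, 0)$ with deterministic label $+1$ and mass $q$; a negative anchor at $(-M, 0)$ with label $-1$ and mass $q$; a heavy ``confounder'' at $(-a, L)$ with label $-1$ and mass $p$, with $p \gg q$, $a > 0$ small, and $L$ large but $M \gg L$; and a light ``distinguisher'' at $(b, 1)$ with label $+1$ and small mass $s$, where $b > 0$ is small. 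Since every $x_1 > 0$ atom has $\eta = 1$ and every $x_1 < 0$ atom has $\eta = 0$, optimal predictability along $e_1$ holds trivially and in fact $R(f_{\theta\opt}) = 0$.

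To locate $\theta_\sloss$ I would solve the first-order optimality conditions for $R_\sloss(\theta)$. Taking $\phi$ strictly convex with finite minimizer $t\opt$ for concreteness (say the squared margin loss $\phi(t) = (1 - t)^2$, where $t\opt = 1$), the $\theta_2$-stationarity is driven by the heavy confounder and enforces $a\theta_1 - L\theta_2 \approx t\opt$; the $\theta_1$-stationarity, combined with the two anchor contributions, yields $\theta_1 \approx t\opt(1 + M)/(1 + M^2) = O(1/M)$. Hence $\theta_\sloss \approx (t\opt/M, \, -t\opt/L)$, and
\[
  |\cos \angle(\theta_\sloss, \theta\opt)|
  = \frac{|(\theta_\sloss)_1|}{\ltwo{\theta_\sloss}}
  = O(L/M),
\]
which can be driven below any prescribed $\epsilon > 0$ by choosing $M/L$ large. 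The explicit calculation carries over to general strictly convex $\phi$ via the same first-order analysis.

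Finally I would verify $R(f_{\theta_\sloss}) > R(f_{\theta\opt}) = 0$: the two anchors and the confounder are correctly classified by $\theta_\sloss$, since the signs of $\theta_\sloss^\top x$ match the deterministic labels at those three atoms. At the distinguisher $(b, 1)$, however, $f_{\theta_\sloss}(b, 1) = b (\theta_\sloss)_1 + (\theta_\sloss)_2 \approx -t\opt/L < 0$ for $b$ small, giving the wrong prediction while $Y = +1$ deterministically; therefore $R(f_{\theta_\sloss}) \geq s > 0$.

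The main obstacle will be handling losses $\phi$ without a finite minimizer (such as logistic or exponential), where the infimum of $R_\sloss$ is not attained and $\argmin R_\sloss$ is empty in the usual sense. There I would interpret $\theta_\sloss$ as the asymptotic direction of a minimizing sequence $\theta_n$ with $\ltwo{\theta_n} \to \infty$, and show via rate-comparison of per-atom contributions (or by introducing a vanishing $\ell_2$ regularizer and taking a limit) that every such asymptotic direction lies within an $\epsilon$-cone of $-e_2$. A secondary subtlety is the ``for all $\theta_\sloss \in \argmin$'' quantifier: strict convexity of $\phi$ guarantees uniqueness, but for non-strict losses one must either enrich the atom set to pin down the argmin, or argue directly that every element of the argmin set satisfies both the orthogonality bound and the risk gap.
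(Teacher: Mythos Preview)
Your construction is clean for the squared loss, but there is a genuine gap for the class of losses the proposition actually targets: general convex $\phi$ with $\phi'(0) < 0$.  Because every atom in your distribution has $\eta(x) \in \{0,1\}$, the data are linearly separable (indeed, $\theta\opt = e_1$ separates them perfectly).  For any monotone loss---logistic, exponential, hinge---this forces $\inf_\theta R_\sloss(f_\theta) = 0$ with the infimum \emph{not attained}: sending $\theta = t e_1$ with $t \to \infty$ drives all four margins to $+\infty$ and hence $R_\sloss \to 0$.  So $\argmin_\theta R_\sloss(f_\theta) = \emptyset$, and your conclusion is vacuous rather than substantive.  Your proposed remedies (asymptotic directions, vanishing regularization) change the meaning of $\theta_\sloss$ and do not prove the proposition as stated.

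The paper's proof avoids this precisely by taking non-deterministic labels, choosing two points $x_1, x_2$ with $\eta(x_1) = \tfrac{2}{3}$ and $\eta(x_2) = \tfrac{1}{3}$.  The conditional surrogate risk then becomes $g_\phi(t) = \tfrac{2}{3}\phi(t) + \tfrac{1}{3}\phi(-t)$, which is coercive for any convex $\phi \ge 0$ with $\phi'(0) < 0$, so minimizers exist.  The mechanism for inconsistency is also different from yours: rather than a heavy confounder dragging $\theta_\sloss$ sideways, the paper arranges $x_1, x_2$ so that the scalars $t_1^\star, t_2^\star$ minimizing $g_\phi(\langle \cdot, x_i\rangle)$ along $\linspan\{\theta\opt\}$ are incompatible (their ratio lies outside the interval $\argmin g_\phi$ allows), forcing any simultaneous minimizer to lie off the ray.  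A small Gaussian perturbation then makes $\theta\opt$ the unique task-optimal direction.  If you want to salvage your four-atom picture, the fix is to replace deterministic labels with $\eta \in (0,1)$ at each atom so that $R_\sloss$ is coercive; but once you do that, the first-order computation you sketch becomes substantially more involved, and you will find yourself rediscovering the paper's $g_\phi$-based argument.
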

\noindent
We postpone the proof to Appendix~\ref{proof:failure-binary}.

Data aggregation methods provide one way to circumvent the inconsistency
Proposition~\ref{proposition:failure-binary} highlights.
To state the result, define the approximate minimizers
\begin{equation*}
  \epsargmin g = \epsargmin_\theta g(\theta) \defeq
  \Big\{\theta \mid g(\theta) \le
  \inf_\theta g(\theta) + \epsilon \Big\}.
\end{equation*}
Suppose the data collection consists of independent samples $Z=(Y_1, \ldots,
Y_m)$ and we take $A_m(Z)$ to be majority vote.
For a sequence $\epsilon_m$ take
\begin{align*}
  \theta_m \in \epsargmin[m]_{\theta}
  R_{\sloss,A_m}(f_\theta)
  = \epsargmin[m]_{\theta} \E \brk{\phi(A_m(Y_1, \ldots, Y_m) f_\theta(X))}.
\end{align*}
Then as a corollary to the coming Theorem~\ref{thm:restricted-class},
$f_{\theta_m}$ are asymptotically consistent when $m \to \infty$.

\begin{corollary}
  \label{cor:agg-consistency-binary}
  Let $P$ be optimally predictable along $\theta\opt$.
  Then if $\epsilon_m \to 0$ as $m \to \infty$,
  $R(f_{\theta_m}) \to R(f_{\theta\opt})$
  and $\cos \angle (\theta_m, \theta\opt ) \to 1$.
\end{corollary}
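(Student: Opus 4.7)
The plan is to view Corollary~\ref{cor:agg-consistency-binary} as the concrete binary-classification instantiation of Theorem~\ref{thm:restricted-class}: the role of majority-vote aggregation is to push the effective label, as $m$ grows, toward the Bayes label $Y\opt(X) \defeq \sgn(2\P(Y = 1 \mid X) - 1)$. Under optimal predictability along $\theta\opt$, we have $Y\opt(X) = \sgn(\<\theta\opt, X\>)$ $P_X$-a.s., so the ``oracle'' surrogate risk
\[
  R\opt_\sloss(f_\theta) \defeq \E\brk{\phi(Y\opt(X) \<\theta, X\>)}
\]
is minimized in direction by $\theta\opt$: by construction $Y\opt(X) \<\theta\opt, X\> = |\<\theta\opt, X\>| \ge 0$, so with $\phi$ convex and $\phi'(0) < 0$, rescalings $t \theta\opt$ drive $R\opt_\sloss$ to its infimum as $t \to \infty$.

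First I would show that aggregation converges the surrogate risk to its oracle. Conditional on $X = x$ with $\Delta(x) > 0$, a Hoeffding argument gives $\P(A_m \neq Y\opt(x) \mid X = x) \le 2 \exp(-m \Delta(x)^2 / 2)$, so whenever $\phi$ is integrable along the candidate directions and $\Delta(X) > 0$ a.s., dominated convergence yields $R_{\sloss, A_m}(f_\theta) \to R\opt_\sloss(f_\theta)$ pointwise in $\theta$, and uniformly enough on compact sets of directions for the epi-convergence machinery inside Theorem~\ref{thm:restricted-class} to bite. The approximate-minimization slack $\epsilon_m \to 0$ is absorbed into this convergence.

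Next I would handle the directional claim on the unit sphere. Since the infimum of $R\opt_\sloss$ is generally not attained at a finite $\theta$ in the separable limit, the comparison of minimizers must be run on $\theta / \ltwo{\theta}$. For any unit vector $u$ with $|\<u, \theta\opt / \ltwo{\theta\opt}\>| < 1$, the set $\{x : \sgn(\<u, x\>) \neq \sgn(\<\theta\opt, x\>)\}$ has positive $P_X$-mass for the distributions at hand, which keeps $R\opt_\sloss(tu)$ bounded away from $\inf R\opt_\sloss$ as $t \to \infty$. Hence $\theta\opt / \ltwo{\theta\opt}$ is the \emph{unique} direction realizing the oracle infimum, so any cluster point of $\theta_m / \ltwo{\theta_m}$ must equal it, which is exactly $\cos \angle(\theta_m, \theta\opt) \to 1$.

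Finally, the task-risk convergence $R(f_{\theta_m}) \to R(f_{\theta\opt})$ follows because $\pred = \sgn$ is scale-invariant and $\sgn(\<\theta_m, x\>) \to \sgn(\<\theta\opt, x\>)$ at every $x$ with $\<\theta\opt, x\> \neq 0$, combined with dominated convergence on the indicator $\indic{Y \<\theta, X\> \le 0}$. The main obstacle is exactly the non-attainment issue in the third paragraph: one must argue directionally rather than comparing minimizers, and must rule out degenerate $P_X$ for which multiple directions tie at the oracle infimum; the hypotheses inherited from Theorem~\ref{thm:restricted-class} (and the optimal-predictability assumption, which forces a nontrivial $\theta\opt$-component in $X$) are what make this uniqueness go through.
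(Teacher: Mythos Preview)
Your proposal is essentially correct and tracks the paper's treatment: the paper states Corollary~\ref{cor:agg-consistency-binary} as a direct consequence of Theorem~\ref{thm:restricted-class}, with no separate proof. That theorem, applied to the cone $\mc{F} = \{x \mapsto \<\theta, x\>\}$ under optimal predictability along $\theta\opt$ (and the implicit hypothesis $P(\Delta(X) > 0) = 1$), yields $R(f_{\theta_m}) \to R(f_{\theta\opt})$; the angle claim is left implicit.

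Two remarks on where your sketch diverges from the paper. First, the proof of Theorem~\ref{thm:restricted-class} does not use epi-convergence; it works through the calibration function $\gapfunc$ restricted to well-separated simplex points $\simplex_{k,c}$ and the bound~\eqref{eqn:majority-risk-gap}, together with $R_{\sloss,\infty}(t f\opt) \to 0$ as $t \to \infty$. Your oracle-surrogate-plus-pointwise-convergence route is a legitimate alternative, but you would need to supply the uniformity you allude to, since $\ltwo{\theta_m}$ will typically diverge (cf.\ Lemma~\ref{lemma:norms-biggeroo}) and pointwise convergence on compact sets of \emph{directions} does not by itself control $R_{\sloss,A_m}(f_{\theta_m})$.

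Second, for the angle convergence the cleaner route is the one you hint at but then bypass: once Theorem~\ref{thm:restricted-class} gives $R(f_{\theta_m}) \to R\opt$, observe that $R(f_\theta)$ depends only on $\theta/\ltwo{\theta}$, and under a mild non-degeneracy on $P_X$ (no mass on hyperplanes through the origin, so $u \mapsto R(f_u)$ is continuous on $\sphere^{d-1}$ and $\theta\opt/\ltwo{\theta\opt}$ is its unique minimizer) the angle convergence follows immediately by compactness of the sphere. You are right that this uniqueness is not guaranteed by the stated hypotheses alone; the paper is silent on this point, and your flag is warranted.
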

\noindent
So without aggregation, surrogate risk minimization is (by
Proposition~\ref{proposition:failure-binary}) essentially arbitrarily
incorrect when restricting to the class of linear predictors, while with
aggregation, we retain consistency.

\subsection{Aggregation, consistency, and restricted hypothesis classes}

As Proposition~\ref{proposition:failure-binary} shows,
surrogate risk consistency reposes quite fundamentally on
$\mc{F}$ containing all measurable functions.
We now consider multiclass classification problems, where
$\mc{Y} = \{1, \ldots, k\}$, and in which $\mc{F}$ forms a
linear cone satisfying
\begin{equation*}
  f(x)^T \ones = 0
  ~~ \mbox{and} ~~ t f \in \mc{F}
  ~ \mbox{for~} t > 0 ~ \mbox{if}~ f \in \mc{F}.
\end{equation*}
We consider the zero-one loss $\tloss(y, y') = \indic{y \neq y'}$ and
$\pred(s) = \argmax_{y \in [k]} s_y$, making the restriction to predictors
normalized to have $f(x)^T \ones = 0$ immaterial.
Assume the surrogate $\sloss : \R^k \times [k] \to \R_+$ is
Fisher consistent~\eqref{item:pointwise-comparison} and satisfies the
limiting loss condition
\begin{equation}
  \label{eqn:multiclass-surrogate-limit}
  \sloss(s, y) \to 0
  ~~ \mbox{if} ~~
  s_y - s_j \to +\infty
  ~ \mbox{for~all}~ j \neq y.
\end{equation}

Many familiar surrogate losses are Fisher
consistent and satisfy~\eqref{eqn:multiclass-surrogate-limit}, including the
multiclass logistic loss $\sloss(s, y) = \log(\sum_{j = 1}^k e^{s_j - s_y})$
and any loss of the form
\begin{align*} 
  \sloss(s, y) = \sum_{i \neq y} \phi(s_y - s_i)
\end{align*}
for $\phi$ convex, non-increasing with $\phi'(0) < 0$,
and $\inf_t \phi(t) = 0$.
\citet[Thm.~8]{Zhang04a} shows that any such loss is
consistent over the class $\mc{F} = \{f : \mc{X} \to \R^k \mid \ones^T f = 0\}$.
Clearly, the margin-based binary setting in Sec.~\ref{sec:binary-finite-dim}
falls into this scenario when we take $f(x) = (g(x), -g(x))$ for a
measurable $g$.
Additionally, in a parametric setting when $\sX = \R^d$, if $\mc{F}$
consists of linear functions $f(x) = (\<\theta_1, x\>, \ldots, \<\theta_k,
x\>)$ with $\sum_{i = 1}^k \theta_i = 0$, then $\mc{F}$ is a (convex) cone. However, those examples typically require strong structural assumptions such as convexity for consistency without aggregation. As we shall see next, with the minimal limiting loss condition~\eqref{eqn:multiclass-surrogate-limit}, we can achieve asymptotic consistency by aggregation.

Extending the definition~\eqref{eqn:optimally-predictable}
of optimal predictability in the obvious way,
we shall say $\mc{F}$ can optimally predict $P$ if there exists
$f \in \mc{F}$, $f : \mc{X} \to \R^k$, for which
\begin{equation*}
  \argmax_y f_y(x) \in
  \argmax_y P(Y = y \mid X = x)
  ~~ \mbox{for~all~} x.
\end{equation*}
The next theorem shows if $\mathcal{Z} = \mathcal{Y}^m$, we aggregate via
majority vote $A_m$, and there is a unique $y\opt(x) = \argmax_y P(Y=y \mid
x)$, then surrogate risk minimization is consistent whenever $\mc{F}$ can
optimally predict $P$.
\begin{theorem}
  \label{thm:restricted-class}
  Let $\mc{F}$ be a cone that optimally predicts $P$,
  and assume that the
  minimal excess risk~\eqref{eqn:minimal-wrong-excess-risk}
  satisfies $P(\Delta(X) > 0) = 1$.
  Let $\epsilon_m \ge 0$ satisfy $\epsilon_m \to 0$.
  Then for any sequence
  \begin{align*}
    f_m \in \epsargmin[m]_{f \in \mathcal{F}}
    R_{\sloss,A_m}(f) = \epsargmin[m]_{f \in \mc{F}}
    \E\brk{\sloss(f(X), A_m(Y_1, \ldots, Y_m))},
  \end{align*}
  we have $R(f_m) \to R\opt$.
\end{theorem}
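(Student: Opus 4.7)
The plan is to show $R_{\sloss, A_m}(f_m) \to 0$ and then translate this into $R(f_m) \to R\opt$ via pointwise Fisher consistency at a degenerate class distribution. Throughout, let $y\opt(x) = \argmax_y P(Y = y \mid X = x)$, well-defined almost surely since $\P(\Delta(X) > 0) = 1$, and let $f\opt \in \mc{F}$ be a function that optimally predicts $P$, so that $f\opt_{y\opt(x)}(x) > f\opt_j(x)$ strictly for $j \neq y\opt(x)$ almost surely.

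For the first step, I would argue that the aggregated surrogate infimum vanishes. Observe that $A_m(Y_1, \ldots, Y_m) \to y\opt(X)$ almost surely as $m \to \infty$, by the law of large numbers for the conditional multinomial distribution of $Y \mid X$. For any fixed $f \in \mc{F}$ with $\E[\max_y \sloss(f(X), y)] < \infty$, reverse Fatou then gives $\limsup_m R_{\sloss, A_m}(f) \leq \E[\sloss(f(X), y\opt(X))] =: R_\sloss^\infty(f)$. The cone structure of $\mc{F}$ lets me scale $f\opt \mapsto tf\opt$ for $t > 0$, and the limiting loss condition~\eqref{eqn:multiclass-surrogate-limit} gives $\sloss(tf\opt(x), y\opt(x)) \to 0$ pointwise as $t \to \infty$, so monotone convergence yields $R_\sloss^\infty(tf\opt) \to 0$. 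Interchanging limit and infimum via $\limsup_m \inf_f R_{\sloss, A_m}(f) \leq \inf_t \limsup_m R_{\sloss, A_m}(tf\opt) \leq \inf_t R_\sloss^\infty(tf\opt) = 0$ gives $R_{\sloss, A_m}^\star \to 0$, and combining with the $\epsilon_m$-approximate minimality of $f_m$ yields $R_{\sloss, A_m}(f_m) \leq R_{\sloss, A_m}^\star + \epsilon_m \to 0$.

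For the second step, I would fix $\delta > 0$ and set $B_\delta \defeq \{x : \Delta(x) > \delta\}$. Hoeffding's inequality applied to the empirical label frequencies gives $\P(A_m \neq y\opt(X) \mid X = x) \leq c_1 e^{-c_2 \delta^2 m}$ uniformly on $B_\delta$, so for $m$ large, $\P(A_m = y\opt(X) \mid X) \geq 1/2$ on $B_\delta$. Consequently,
\begin{align*}
R_{\sloss, A_m}(f_m) \geq \tfrac{1}{2}\, \E\brk{\indic{X \in B_\delta}\, \sloss(f_m(X), y\opt(X))},
\end{align*}
which forces $\E[\indic{X \in B_\delta}\, \sloss(f_m(X), y\opt(X))] \to 0$ for each $\delta > 0$. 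Next, apply Fisher consistency of $\sloss$ to the degenerate class distribution $\delta_{y\opt(x)}$ at point $x$: under this distribution $\extloss(s, x) = \indic{\pred(s) \neq y\opt(x)}$ and $\exsloss(s, x) = \sloss(s, y\opt(x))$, so Fisher consistency's requirement $\psi(1, x) > 0$ translates into the strictly positive gap
\begin{align*}
\zeta(x) \defeq \inf\left\{\sloss(s, y\opt(x)) : \pred(s) \neq y\opt(x)\right\} > 0.
\end{align*}
Hence $\{\pred(f_m(X)) \neq y\opt(X)\} \subseteq \{\sloss(f_m(X), y\opt(X)) \geq \zeta(X)\}$, and splitting on $\{\zeta(X) \leq a\}$ with Markov's inequality gives
\begin{align*}
\P(\pred(f_m(X)) \neq y\opt(X),\, X \in B_\delta) \leq \P(\zeta(X) \leq a) + a^{-1}\, \E\brk{\indic{X \in B_\delta}\, \sloss(f_m(X), y\opt(X))}.
\end{align*}
Taking $m \to \infty$, then $a \to 0$ (using $\zeta > 0$ a.s.), then $\delta \to 0$ (using $\P(\Delta(X) \leq \delta) \to 0$), yields $\P(\pred(f_m(X)) \neq y\opt(X)) \to 0$. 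Since the zero-one task loss satisfies $\extloss(s, x) \leq \indic{\pred(s) \neq y\opt(x)}$, I conclude $R(f_m) - R\opt \to 0$.

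The main obstacle is the limit-interchange step showing $R_{\sloss, A_m}^\star \to 0$, which tacitly requires integrable domination $\E[\max_y \sloss(tf\opt(X), y)] < \infty$ for each $t > 0$ to invoke reverse Fatou; this is a mild regularity condition consistent with the paper's stance of leaving such measurability conditions implicit. A secondary subtlety is extracting $\zeta(x) > 0$ from Fisher consistency at the degenerate conditional $\delta_{y\opt(x)}$, but this follows directly from Fisher consistency holding uniformly across all class-conditional distributions in the definition of $\psi(\epsilon, x)$.
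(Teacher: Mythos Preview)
Your proof is correct and proceeds along a genuinely different---and more elementary---path in the second step.

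Step~1 (showing $R_{\sloss,A_m}(f_m)\to 0$) matches the paper's argument essentially verbatim: both use $A_m\to y\opt(X)$ a.s., dominated convergence at $tf\opt$, the limiting loss condition~\eqref{eqn:multiclass-surrogate-limit}, and the cone property to drive $R_{\sloss,A_m}(tf\opt)\to 0$.

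Step~2 is where the two proofs diverge. The paper works with the full majority-vote conditional $P_m(\cdot\mid x)$, shows that for $m$ large this distribution lies in a well-separated simplex $\simplex_{k,c}$, invokes a uniform lower bound $\inf_{p\in\simplex_{k,c}}\gapfunc(\epsilon,p)>0$ from~\cite{Zhang04a}, and applies Jensen's inequality to the biconjugate $\convgapfunc$ to obtain a comparison inequality in terms of the excess risk under $P_m$. You instead bound $R_{\sloss,A_m}(f_m)$ below by $\tfrac{1}{2}\E[\indic{X\in B_\delta}\sloss(f_m(X),y\opt(X))]$ and then only need Fisher consistency at the \emph{degenerate} conditionals $e_{y}$, yielding the gap $\zeta(x)=\inf\{\sloss(s,y\opt(x)):\pred(s)\neq y\opt(x)\}>0$. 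Since $\zeta$ depends only on $y\opt(x)$, it takes at most $k$ values, so $\zeta(X)\ge\min_y\zeta_y>0$ almost surely and your Markov/$a\downarrow 0$ step becomes trivial. Your route avoids both the external citation and the Jensen step; the paper's approach, on the other hand, yields a quantitative comparison inequality~\eqref{eqn:majority-risk-gap} valid at each finite $m$, which your argument does not directly produce. Two minor remarks: the convergence $R_\sloss^\infty(tf\opt)\to 0$ need not be monotone in $t$, so dominated (not monotone) convergence is the right tool there, as you acknowledge; and your integrable-domination caveat for reverse Fatou is exactly the tacit regularity the paper also leaves implicit.
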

\noindent
See Appendix~\ref{proof:restricted-class} for a proof.

Theorem~\ref{thm:restricted-class} shows that for a broad class of surrogate
problems with a hypothesis class $\mathcal{F}$ that forms a linear cone, we
can achieve consistency asymptotically by aggregation as $m \to \infty$.
In contrast, as Proposition~\ref{proposition:failure-binary} shows, even in the
``simple'' case of binary classification, consistency may fail over
subclasses $\mc{F}$, even when they include the optimal predictor, and the
surrogate can be arbitrarily uninformative.

\subsection{On finite-dimensional multiclass classification}
\label{sec:finite-dim}

The final technical content of this paper
considers a multiclass scenario in which $\mc{X} = \R^d$, $\mc{Y} = [k]$,
and we use linear predictors,
but where the predictive model may be mis-specified.
These results will provide a more nuanced and specialized view
of the situation than the negative results in
Proposition~\ref{proposition:failure-binary} and the consistency guarantee
in Theorem~\ref{thm:restricted-class}.
We will show that even when the problem is optimally predictable and the
linear model is well-specified on all except an $\epsilon$-fraction of
data, surrogate risks based only on $(X, Y)$ pairs are inconsistent;
majority vote-based methods, however, will recover the optimal linear
predictor.

To set the stage, let
$\Theta = [\theta_1 ~ \cdots ~ \theta_{k-1}] \in \R^{d \times k-1}$,
and let the labels follow a categorical distribution $Y
\mid X = x\sim \cat(p_\Theta(x))$, where $p_\Theta(x) \in \R^k_+$ satisfies
$\ones^\top p_\Theta(x) = 1$ and
\begin{align*}
  p_\Theta(x) = \begin{bmatrix}
    \sigma(\< \theta_1 , x \>, \cdots, \< \theta_{k-1} , x \>) \\
    1 - \ones^\top \sigma(\< \theta_1 , x \>, \cdots, \< \theta_{k-1} , x \>)
  \end{bmatrix}
\end{align*}
for a link $\sigma : \R^{k-1} \to \R^{k-1}_+$,
$\ones^\top \sigma \le 1$.
We assume $\sigma$ satisfies the consistency condition that for $t \in
\R^{k-1}$, setting $t_k = 0$ and $\sigma_k(t) = 1 - \ones^\top \sigma(t)$,
\begin{equation}
  \label{eqn:consistent-link}
  t_y = \max_{1 \leq i \leq k} t_i
  ~~ \mbox{if and only if} ~~
  \sigma_y(t) = \max_{1 \leq i \leq k} \sigma_i(t). 
\end{equation}
One standard example is multiclass logistic regression, where
\begin{align*}
  \sigma_i\lr(t) & = \frac{e^{t_i}}{1 + e^{t_1} + \cdots + e^{t_{k-1}}}.
\end{align*}
Let the multilabeled dataset be $\{(X_i; (Y_{1i}, \cdots,
Y_{im}))\}_{i=1}^n$ with repeated sampling $Y_{ij} \mid X_i \simiid
\cat(p_{\Theta^\star}(X_i))$, and $Y_i^+ = A_m(Y_{i1}, \cdots, Y_{im})$ be
the majority vote with ties broken arbitrarily.
Then $Y_i^+ \mid X_i \sim \cat(p_{\Theta^\star,m}(X_i))$, where if
$\rho_m(t)$ denotes the distribution of majority vote on $m$ items with
initial probabilities $\sigma(t) \in \R^{k-1}_+$, then
\begin{align*}
  p_{\Theta, m}(x) = \begin{bmatrix}
    \rho_m(\< \theta_1 , x \>, \cdots, \< \theta_{k-1} , x \>) \\
    1 - \ones^\top \rho_m(\< \theta_1 , x \>, \cdots, \< \theta_{k-1} , x \>)
  \end{bmatrix}.
\end{align*}
It is evident that $\rho_m$ satisfies link
consistency~\eqref{eqn:consistent-link}.
Consider fitting a logistic regression with loss
\begin{align}
  \sloss(\Theta^\top x, y)
  =- \< \theta_y, x \> + \log \prn{1 + \sum_{i=1}^{k-1} \exp(\<\theta_i, x\>)}, \label{eq:logistic-loss}
\end{align}
with the convention that $\theta_k = \zeros$,
and let $L_m(\Theta) = \E[\sloss(\Theta^\top X, Y_m^+)]$ be the
logistic loss with $m$-majority vote.
Then
\begin{align*}
  \nabla_{\theta_i} \sloss(\Theta^\top x, y)
  = -x (\indic{y=i} - \sigma_i\lr(\Theta^\top x)),
\end{align*}
so that the $\Theta_m$ minimizing $L_m$ satisfies
\begin{align*}
  \nabla_{\Theta} L_m(\Theta_m)
  = \E \brk{X \prn{\sigma\lr(\Theta_m^\top X)
      - \rho_m({\Theta\opt}^\top X)}^\top}
  = 0.
\end{align*}

Standard results in statistics guarantees both consistency and efficiency
when the model is well-specified without aggregation, and when $m \ge 1$ and
$k = 2$, \citet[Prop.~3]{ChengAsDu22} show there exists $t_m > 0$ such that
$\Theta_m = t_m \Theta^\star$
if $X \sim \normal(0, I_d)$ even with a mis-specified link.
This implies that in binary classification, even if the link function is
incorrect, we can still achieve consistent classification regardless of
the aggregation level $m$, as the direction $\Theta\opt / \ltwo{\Theta\opt}$
determines consistency.
However, as soon as $k \geq 3$ and the true link is slightly
mis-specified, risk consistency fails.
Fixing a set $\mc{T}_\epsilon \subset \R^{k-1}$ with Lebesgue
measure $\epsilon$, consider
\begin{align*}
  \sigma\lre (t) = \sigma\lr (t) \ind\{t \not \in \mc{T}_\epsilon\} +
  \frac{1}{k} \ones \cdot \indic{t \in \mc{T}_\epsilon},
\end{align*}
which defines a distribution on $Y \in \{1, \ldots, k\}$
conditional on $t \in \R^{k-1}$ that samples
$Y \sim \sigma\lr(t)$ if $t \in \mc{T}_\epsilon$ and
uniformly otherwise.
Clearly $\sigma\lre$ satisfies the link consistency
condition~\eqref{eqn:consistent-link} and is optimally
predictable~\eqref{eqn:optimally-predictable}.
For $\epsilon > 0$, define
\begin{equation*}
  L_{m,\epsilon}(\Theta)
  \defeq \E[\E_{\sigma\lre}[\sloss(\Theta^\top X, Y_m^+) \mid X]]
\end{equation*}
to be the (population) logistic loss, based on $m$-majority vote,
when $Y \mid X = x \sim \sigma\lre({\Theta\opt}^\top x)$.
Let $\Theta_m(\epsilon) = \argmin_\Theta L_{m,\epsilon}(\Theta)$.
Evidently, $\Theta_1(0) = \Theta\opt$; nonetheless, the next result shows
that for arbitrarily small $\epsilon > 0$, consistency fails without
aggregation.
\begin{proposition}
  \label{proposition:consistency-failure-multiindex}
  Let $k \geq 3$, $\Sigma = I$.
  Assume that for $Z \sim \normal(0, I_{k-1})$, the linear mapping $M
  \mapsto D M \defeq \E[Z (\nabla\sigma\lr({\Theta\opt}^\top Z) M Z)^\top]$ is
  invertible.
  Then there exists $\epsilon_0 > 0$ such that for any $\epsilon \in (0,
  \epsilon_0)$, there is a set $\mc{T}_\epsilon$ with
  Lebesgue measure at most $\epsilon$ and for which
  \begin{equation*}
    \Theta_1(\epsilon) / \norm{\Theta_1(\epsilon)} \neq
    \Theta^\star / \norm{\Theta^\star}.
  \end{equation*}
\end{proposition}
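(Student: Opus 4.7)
The plan is an implicit function theorem (IFT) argument on the population stationarity conditions for $\Theta_1(\epsilon)$, combined with a direct construction of $\mc{T}_\epsilon$ whose leading-order effect on the minimizer is not along $\Theta\opt$. Writing the gradient formula from the excerpt with $m=1$ (so $\rho_1 = \sigma\lre$) gives the stationarity equation
\begin{equation*}
  G(\Theta,\epsilon) \defeq \E\brk{X(\sigma\lr(\Theta^\top X) - \sigma\lre({\Theta\opt}^\top X))^\top} = 0.
\end{equation*}
At $\epsilon = 0$, $\sigma\lre \equiv \sigma\lr$ and $\Theta = \Theta\opt$ solves $G(\Theta\opt,0) = 0$. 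Differentiating,
\begin{equation*}
  D_\Theta G(\Theta\opt,0) \cdot M = \E\brk{XX^\top M \,\nabla\sigma\lr({\Theta\opt}^\top X)^\top},
\end{equation*}
which, after the change of variables using $\Sigma = I$ and the structure of $\Theta\opt$, agrees with the map $M \mapsto DM$ assumed invertible. The IFT therefore provides a unique local minimizer $\Theta_1(\epsilon)$ near $\Theta\opt$ that depends smoothly on the perturbation; convexity of $L_{1,\epsilon}$ in $\Theta$ identifies this local root as the global minimizer.

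Next, I parameterize $\mc{T}_\epsilon$ by a single point $t_0 \in \R^{k-1}$: take $\mc{T}_\epsilon = B(t_0, r_\epsilon)$, a Euclidean ball with Lebesgue measure $\epsilon$. Writing $h(t_0) \defeq (1/k)\ones_{k-1} - \sigma\lr(t_0)$, $\mu(t_0) \defeq \E[X \mid {\Theta\opt}^\top X = t_0]$, and $f(t_0)$ for the density of ${\Theta\opt}^\top X$ at $t_0$, a first-order Taylor expansion in $\epsilon$ yields
\begin{equation*}
  G(\Theta\opt,\epsilon) - G(\Theta\opt,0) = -\epsilon\,f(t_0)\,\mu(t_0)\,h(t_0)^\top + o(\epsilon),
\end{equation*}
and IFT gives
\begin{equation*}
  \Theta_1(\epsilon) - \Theta\opt = \epsilon\,f(t_0)\,D^{-1}\brk{\mu(t_0)\,h(t_0)^\top} + o(\epsilon).
\end{equation*}
It therefore suffices to exhibit a $t_0$ for which $D^{-1}[\mu(t_0)h(t_0)^\top]$ is not a scalar multiple of $\Theta\opt$, since then to leading order $\Theta_1(\epsilon)/\norm{\Theta_1(\epsilon)} \neq \Theta\opt/\norm{\Theta\opt}$ for all small $\epsilon > 0$.

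The main obstacle is this non-degeneracy step: showing that the rank-one matrices $\mu(t_0)h(t_0)^\top$ do not all lie on the one-dimensional ray $\R \cdot D\Theta\opt \subset \R^{d\times(k-1)}$. For $X \sim \normal(0, I_d)$ with $\Theta\opt$ of full column rank, $\mu(t_0) = \Theta\opt(\Theta\opt{}^\top\Theta\opt)^{-1} t_0$ is linear in $t_0$, whereas $h(t_0) = (1/k)\ones_{k-1} - \sigma\lr(t_0)$ is a strictly nonlinear function of $t_0$ whose range is genuinely $(k-1)$-dimensional for $k \geq 3$. Varying $t_0$ along independent coordinate directions therefore produces rank-one matrices $\mu(t_0)h(t_0)^\top$ spanning a subspace of dimension at least two, so they cannot all be contained in $\R \cdot D\Theta\opt$; selecting any $t_0$ for which the leading-order correction escapes this ray furnishes the required set $\mc{T}_\epsilon$ and completes the argument. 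Note that for $k=2$, $h(t)$ is scalar-valued and every rank-one correction automatically points along $\mu(t) \propto \Theta\opt$, recovering the positive direction-recovery result of \citet{ChengAsDu22} and explaining why the hypothesis $k \geq 3$ is essential.
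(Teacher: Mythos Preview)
Your approach is essentially the same as the paper's: both arguments apply the implicit function theorem to the population first-order conditions and show that the perturbation set can be chosen so that the leading-order correction to the minimizer does not lie along $\Theta\opt$. There are two execution differences worth noting.

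First, the paper explicitly reduces to a $(k-1)$-dimensional problem before invoking the IFT: it uses the ansatz $\Theta_1(\epsilon) = U^\star T_1(\epsilon)$ (borrowed from Theorem~\ref{thm:maj-multiindex}) so that the relevant derivative is exactly the assumed map $D$ acting on $\R^{(k-1)\times(k-1)}$. Your claim that the full-dimensional derivative $M \mapsto \E[XX^\top M\,\nabla\sigma\lr({\Theta\opt}^\top X)^\top]$ ``agrees'' with $D$ is not literally correct---it acts on $\R^{d\times(k-1)}$, not $\R^{(k-1)\times(k-1)}$. The conclusion you need (invertibility) still holds, since this operator is the Hessian of the strictly convex population logistic loss with full-support Gaussian covariates, but you should say that rather than identify it with $D$.

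Second, to show the perturbation escapes the ray, the paper constructs \emph{two} candidate sets $\mc{T}_\epsilon \subset \mc{T}_+$ and $\mc{T}_{-\epsilon} \subset \mc{T}_-$ on which a fixed linear functional $\tr(Q^\top\,\cdot\,)$ of the rank-one integrand takes opposite signs, so the resulting first-order corrections lie on distinct rays and at least one must miss $T\opt$. You instead localize to a single small ball $B(t_0,r_\epsilon)$ and argue that the family $\{\mu(t_0)h(t_0)^\top\}_{t_0}$ spans a subspace of dimension at least two when $k\ge 3$, so some choice of $t_0$ yields a correction off the line $\R\cdot D\Theta\opt$. Both devices work; the paper's two-set trick avoids saying anything about $D^{-1}$, while your single-ball construction is more concrete and yields the nice explanation of why $k=2$ is special.
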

\noindent
We postpone the proof to
Appendix~\ref{sec:proof-consistency-failure-multiindex}.

Majority vote, however, can address this inconsistency as $m \to \infty$
without any assumptions on the true link $\sigma$ except that it satisfies
the consistency condition~\eqref{eqn:consistent-link}.
Indeed, letting $L_{m,\sigma}(\Theta) = \E[\E_\sigma[\sloss(\Theta^\top X,
    Y_m^+) \mid X]]$ and $\Theta_m = \argmin L_{m,\sigma}(\Theta)$, we have
the following
\begin{theorem}
  \label{thm:maj-multiindex}
  Let $\Theta\opt$ have decomposition $\Theta\opt = U\opt T\opt$, where $U\opt
  \in \R^{d \times (k-1)}$ is orthogonal and $T\opt \in \R^{(k-1) \times
    (k-1)}$ is nonsingular.
  Then there exists $T_m \in \R^{(k-1) \times (k-1)}$ such that $\Theta_m =
  U^\star T_m$ for every $m$, and as $m \to \infty$,
  \begin{align*}
    \|T_m\| \to \infty
    ~~ \mbox{and} ~~ T_m / \|T_m\| \to T^\star / \|T^\star\|.
  \end{align*}
\end{theorem}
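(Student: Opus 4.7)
The plan is to prove the three claims (structural form $\Theta_m = U\opt T_m$, norm divergence $\|T_m\|\to\infty$, and direction $T_m/\|T_m\|\to T\opt/\|T\opt\|$) in succession. For the structural claim, decompose $\Theta = U\opt T + V$ with ${U\opt}^\top V = 0$, and split $X = U\opt X_U + X_\perp$, where $X_U = {U\opt}^\top X \sim \normal(0, I_{k-1})$ and $X_\perp = (I - U\opt{U\opt}^\top)X$ are independent Gaussians with $\E[X_\perp] = 0$. Since ${\Theta\opt}^\top X = {T\opt}^\top X_U$, the label $Y_m^+$ depends on $X$ only through $X_U$. The gradient of $L_{m,\sigma}$ in the orthogonal component at $V = 0$ then becomes
\begin{align*}
  \E\brk{X_\perp \prn{\sigma\lr(T^\top X_U) - e_{Y_m^+}}^\top}
  = \E[X_\perp]\,\E\brk{\prn{\sigma\lr(T^\top X_U) - e_{Y_m^+}}^\top} = 0,
\end{align*}
by independence of $X_\perp$ from $(X_U, Y_m^+)$ and zero mean of $X_\perp$. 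Combined with convexity of $L_{m,\sigma}$ in $\Theta$, this vanishing first-order condition places a minimizer in the subspace $\text{range}(U\opt)$, so $\Theta_m = U\opt T_m$ for some $T_m \in \R^{(k-1)\times(k-1)}$, and the problem reduces to minimizing $L_m(T) \defeq \E[\sloss(T^\top X_U, Y_m^+)]$.

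For the norm divergence, write $y\opt(t) \defeq \argmax(t_1,\ldots,t_{k-1},0)$. The link consistency condition~\eqref{eqn:consistent-link} and the law of large numbers give $\rho_m(t) \to \delta_{y\opt(t)}$ for every $t$ with a unique argmax, which holds $X_U$-almost surely. For $T = c T\opt$ with $c\to\infty$, pointwise $\sloss(c{T\opt}^\top X_U, y\opt({T\opt}^\top X_U)) \to 0$, and bounded convergence together with $\rho_m\to\delta_{y\opt}$ yields $L_m(cT\opt) \to 0$ as $(m,c)\to(\infty,\infty)$ suitably, hence $L_m(T_m) \to 0$. If $\{T_m\}$ had a bounded subsequence $T_{m_j} \to T^\infty$, uniform integrability (the envelope $\sloss(T_{m_j}^\top X_U, \cdot) \lesssim 1 + \|X_U\|$ is square-integrable) plus in-probability convergence $Y_{m_j}^+ \to y\opt({T\opt}^\top X_U)$ would give $L_{m_j}(T_{m_j}) \to L_\infty(T^\infty)$, where $L_\infty(T) \defeq \E[\sloss(T^\top X_U, y\opt({T\opt}^\top X_U))]$. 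But $L_\infty(T) > 0$ for every finite $T$ (its infimum $0$ is attained only as $\|T\|\to\infty$ along a partition-preserving direction), contradicting $L_m(T_m) \to 0$. Therefore $\|T_m\| \to \infty$.

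For the direction, extract any subsequence along which $\bar T_{m_j} \defeq T_{m_j}/\|T_{m_j}\| \to \bar T$ on the unit sphere. Since $L_{m_j}(T_{m_j}) \to 0$ and the integrand is nonnegative, $\sloss(T_{m_j}^\top X_U, Y_{m_j}^+) \to 0$ in $L^1$; pass to a further subsequence for a.s.\ convergence. Coupled with $Y_{m_j}^+ \to y\opt({T\opt}^\top X_U)$ in probability and the fact that $\sloss(s,y) \to 0$ forces $s_y - \max_{i\neq y} s_i \to +\infty$, we conclude that for a.e.\ $X_U$, $y\opt(\bar T^\top X_U) = y\opt({T\opt}^\top X_U)$, so $\bar T$ and $T\opt$ induce the same argmax partition of $\R^{k-1}$. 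An auxiliary uniqueness lemma---the $k$-cell partition determines the hyperplanes $\{T_i^\top x = 0\}$ and $\{(T_i - T_j)^\top x = 0\}$, which pin down the rows of a full-rank $T$ up to a common positive scalar---then gives $\bar T = \alpha T\opt$ with $\alpha > 0$, and $\|\bar T\| = 1$ forces $\bar T = T\opt/\|T\opt\|$. Since every convergent subsequence of $T_m/\|T_m\|$ has this limit, the whole sequence converges.

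The most delicate step is the a.s.\ partition-agreement in the third paragraph, which simultaneously juggles (i) the aggregate convergence $L_m(T_m)\to 0$, (ii) the stochastic convergence $Y_m^+ \to y\opt({T\opt}^\top X_U)$, and (iii) the divergence $\|T_m\|\to\infty$ needed for the logistic softmax to resolve the argmax of $\bar T^\top X_U$. Justifying it cleanly requires a diagonal subsequence extraction together with the auxiliary uniqueness-of-partition lemma; the remainder of the proof is essentially routine convex analysis on Gaussian features.
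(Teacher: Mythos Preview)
Your proposal is correct and its first two parts (the $\Theta_m=U^\star T_m$ ansatz via independence of $X_\perp$ and $(X_U,Y_m^+)$, and the norm divergence via $L_m(T_m)\to 0$ contradicting a bounded subsequence) match the paper almost exactly, modulo one generality: the paper treats arbitrary Gaussian covariance by writing $A=\Sigma U^\star({U^\star}^\top\Sigma U^\star)^{-1}$ so that $X-AZ\perp Z$, whereas you implicitly take $\Sigma=I$.

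For the direction step the two arguments genuinely differ. You extract a subsequential unit-sphere limit $\bar T$, argue via $L_{m_j}(T_{m_j})\to 0$, $Y_{m_j}^+\to y^\star$, and $\|T_{m_j}\|\to\infty$ that the argmax partitions of $\bar T$ and $T^\star$ agree a.e., and then invoke a uniqueness-of-partition lemma. The paper instead fixes the misaligned region $\mc R(\epsilon)=\{T:\|T/\|T\|-T^\star/\|T^\star\|\|\ge\epsilon\}$ and lower-bounds
\[
L_m(U^\star T)\;\ge\;\log 2\cdot\E\!\left[e_{y^\star(X)}^\top\rho_m({\Theta^\star}^\top X)\,\indic{X\in\mc X(T)}\right],
\]
where $\mc X(T)$ is the set on which the argmax from $T$ differs from that of $T^\star$; since $\rho_m$ is $T$-independent, this yields a \emph{uniform} $\liminf_m\inf_{T\in\mc R(\epsilon)}L_m>0$, and $L_m(\Theta_m)\to 0$ forces $T_m\notin\mc R(\epsilon)$. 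Both routes ultimately rest on the same fact—that $P(X\in\mc X(T))>0$ whenever $T/\|T\|\ne T^\star/\|T^\star\|$—but the paper's decoupling of the $m$-dependence (in $\rho_m$) from the $T$-dependence (in $\mc X(T)$) sidesteps exactly the simultaneous-limit juggling you flag as delicate. Your approach is sound (a Fatou argument handles the interaction cleanly once $\bar T_{m_j}\to\bar T$), but the paper's is shorter and avoids the subsequence bookkeeping.
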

\noindent
See Appendix~\ref{proof:maj-multiindex} for a proof.

Theorem~\ref{thm:maj-multiindex} shows more evidence for the robustness
properties of label aggregation, providing asymptotic consistency even in
mis-specified models so long as there is \emph{some} link function
describing the relationship between $X$ and $Y$.
The robustness is striking when $k \geq 3$: as
Proposition~\ref{proposition:consistency-failure-multiindex} highlights,
methods without label aggregation are generally inconsistent.

\section{Numerical illustrations}
In this section, we provide two experiments that corroborate the theory: (i) a non-parametric example where aggregation amplifies surrogate consistency, and (ii) a finite-dimensional multiclass example where aggregation recovers asymptotic consistency under link mis-specification.

\paragraph{Surrogate consistency amplification for truncated quadratic loss.} In binary classification, consider the classical margin-based surrogate $\sloss(f(x), y) = \phi(yf(x))$ where $\phi(\delta) = \max\{1-\delta, 0\}^2$ is the truncated quadratic loss. By~\citet[Example 2]{BartlettJoMc06}, the calibration function $\psi$ in Corollary~\ref{cor:comparison-inequality-classical} is exactly $\psi(\epsilon) = \epsilon^2$. Leveraging label aggregation through majority vote with $m$ labels, we can apply Lemma~\ref{lemma:binary-feasible} taking $\delta = 1$. In the regime with no low-noise condition (i.e. $\alpha = 0, \cstMT = 1$), taking
\begin{align*}
\psi_m(\epsilon) & = \frac{\phi(0) - \phi(\delta)}{16} \left(\epsilon - \sqrt{\frac{32}{m}\log \prn{ \frac{8(\phi(-\delta) + \phi(0) - \phi(\delta))}{\phi(0) - \phi(\delta)}}}\right)_+ = \frac{1}{16} \left(\epsilon-\sqrt{\frac{32}{m} \log 40}\right)_+,
\end{align*}
we have parallel comparison inequalities for non-aggregated data and majority-vote aggregation as
\begin{align*}
	 \psi(R(f) - R\opt) \leq R_\sloss(f)  - R_\sloss\opt, \qquad   \psi_m(R(f) - R\opt) \leq R_{\sloss,A_m}(f) - R_{\sloss,A_m}\opt.
\end{align*}
Numerically (cf.~Fig.~\ref{fig:comparison}), the aggregated curve transitions from quadratic toward linear behavior as $m \to \infty$, showing label aggregation upgrades surrogate consistency. Nonetheless, $\psi_m$ are generally conservative as they are from the general result in Theorem~\ref{thm:level-consistency-condition-number}.
\begin{figure}[htbp!]
	\centering
	\includegraphics[width=.75\linewidth]{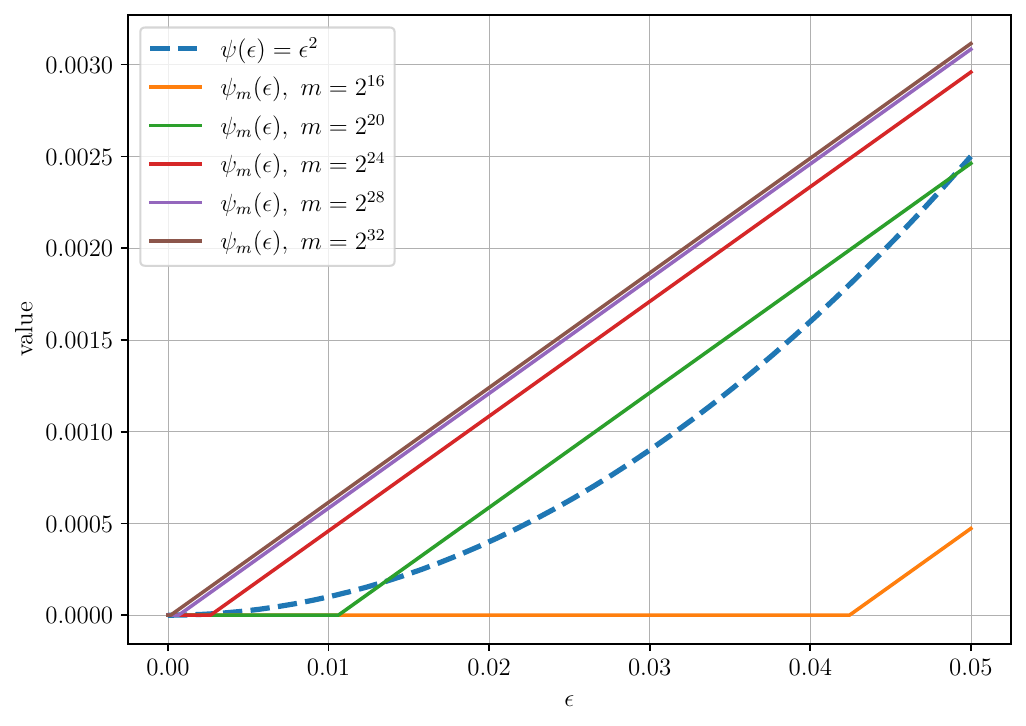}
	\caption{Numerical illustrations for truncated quadratic surrogate $\phi(\delta) = \max\{1-\delta, 0\}^2$, showing comparison inequalities for non-aggregated data $\psi$ (optimal) vs. majority vote aggregation $\psi_m$ (suboptimal) when $m \in \{2^{16}, 2^{20}, 2^{24}, 2^{28}, 2^{32}\}$.} \label{fig:comparison}
\end{figure}

\paragraph{Asymptotic consistency for mis-specified logistic model.} In the second example, we consider logistic regression in Sec.~\ref{sec:finite-dim} when $k=3$ and $d=2$. Let the loss function $\varphi$ be the logistic loss in Eq.~\eqref{eq:logistic-loss}, with a corrupted logistic link defined as
\begin{align*}
	\tilde{\sigma}(t) & = \sigma\lr(t) \ind \{ \norm{t} \leq r\} + \sigma\lr(\alpha t)\ind \{ \norm{t} > r\},
\end{align*}
a piecewise link that matches softmax near the origin and shrinks scores outside a radius $r$. The numerical results are in Fig.~\ref{fig:logistic}, showing asymptotic consistency as $m \to \infty$ with synthetic dataset from the corrupted link.
\begin{figure}[htbp!]
	\centering
    \begin{tabular}{cc}
	\includegraphics[width=.45\linewidth]{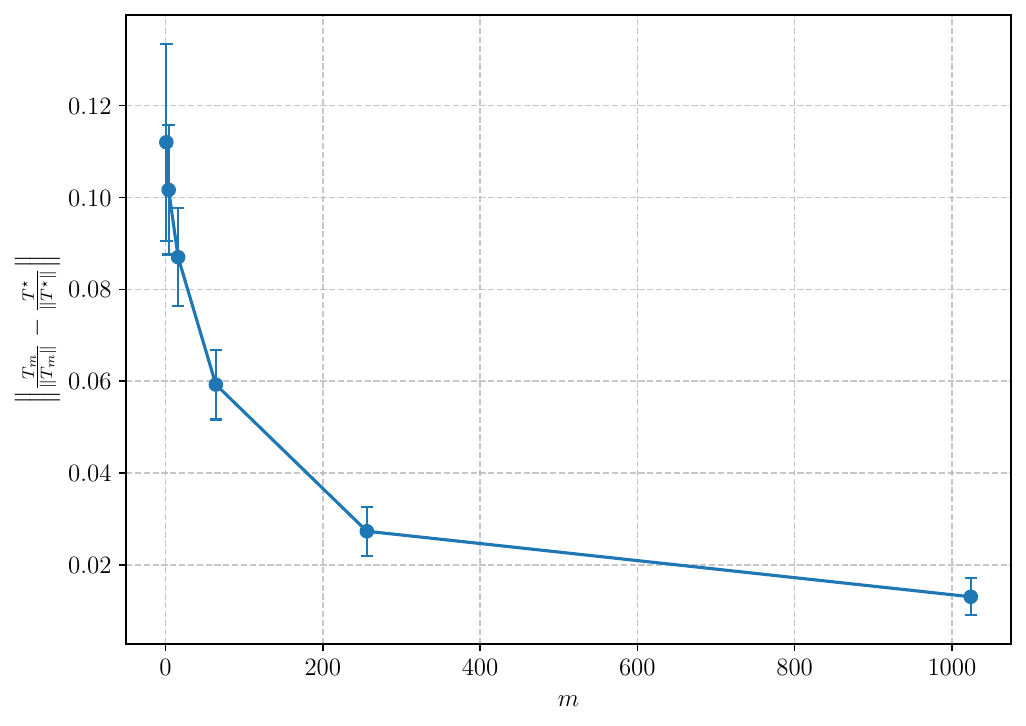} &
    \includegraphics[width=.45\linewidth]{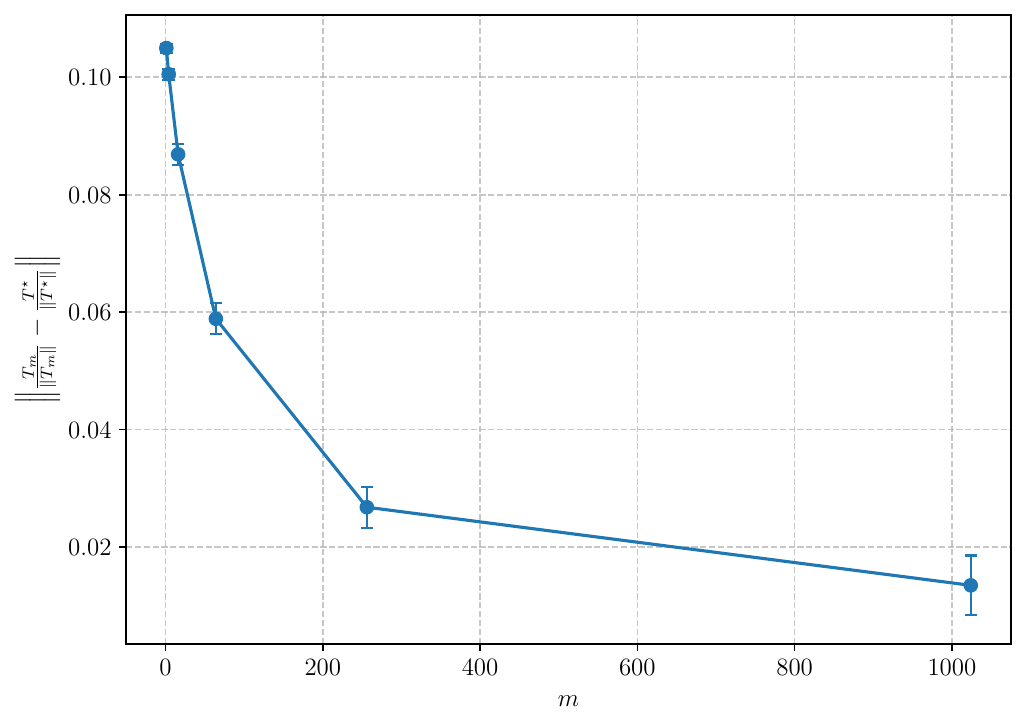} \\
    (i) Experiment with constant dataset size $n$ & (ii) Experiment with constant label size $nm$.
    \end{tabular}
	\caption{Numerical illustrations for mis-specified logistic models when $d=2$ and $k=3$. The true parameters are $\theta_1^\star = (0, 1)^\top, \theta_2^\star = (1, 1)^\top$. The corrupted link parameters are $r = 3$ and $\alpha = 0.1$. The plot shows the error of classification rule $\left\|\|T_m\| - T^\star / \|T^\star\|\right\|$ vs. number of labels $m \in \{1, 4, 16, 64, 256, 1024\}$. (i) In the left panel, for each value of $m$, the synthetic dataset consists of $n = 10{,}000$ multilabeled observations $(X_i, (Y_{i1}, \ldots, Y_{im}))$ sampled from $\tilde{\sigma}$. (ii) In the right panel, for each $m$, we set $n = 640{,}000/m$ so that the total number of labels remains constant across different values of $m$. We report a $95\%$ error bar on $T=100$ trials for each $m$.} \label{fig:logistic}
\end{figure}

\section{Discussion and future work}

The question of whether and how to clean data has
animated much of the research discussion around dataset collection.
\citet*{ChengAsDu22} provide a discussion of these issues, highlighting that
there appears to be a phenomenon that using non-aggregated data---all
available labels---leads to better statistical efficiency when models have
the power to fully represent all uncertainty, but otherwise, data cleaning
appears to be more robust.
In a similar vein, \citet{DornerHa24} argue that, in a validation setting of
comparing binary classifiers, it is better to use more noisy labels rather
than cleaned variants.
This paper contributes to this dialogue by providing evidence for both
fundamental limits to using un-cleaned, un-aggregated label information
in supervised learning while highlighting robustness improvements
that come from label cleaning.
Nonetheless, many questions remain.

\paragraph{Finite $m$ results and fundamental limits.}
Many of the consistency results we present repose on taking a limit
as $m$, the number of labels aggregated, tends to infinity.
While at some level, the purpose of this paper is to highlight ways
in which label aggregation can improve robustness, it is
perhaps unsatisfying to rely on this asymptotic setting.
In the context of ranking (Sec.~\ref{sec:counter-example}), we
can provide explicit consistency guarantees at a finite $m$,
but developing this further provides one of the most
natural and (we believe) important avenues for future work.
Providing a surrogate consistency theory that depends both on
the loss pairs $(\tloss, \sloss)$ and the available label
count $m$ would be interesting; for example,
in the context of ranking in Sec.~\ref{sec:counter-example},
if we wish to look at second or third-order comparisons
of items (e.g., powers $C_x^p$, as \citet{Keener93} suggests),
do we require increasing label counts $m$?
Precisely delineating those problems that require label cleaning and
aggregation from those that do not represents a central challenge here.

\paragraph{Fundamental limits of the noise condition number.}
Our work relies on the noise condition number~\eqref{eqn:noise-condition-number},
$\kappa(X)$, to characterize comparison inequalities for
multiclassification problems, hinting at the difficulty beyond
binary classification, where trivially $\kappa(X) = 1$.
The condition number can still be large even when the Mammen-Tsybakov noise
level~\eqref{eqn:mammen-alpha} is low---i.e.\ $\alpha \approx 1$---in cases
beyond binary classification.
This is a consequence of the
minimal assumptions on the surrogate in our setting, and it would
be beneficial to identify
connections between the loss $\tloss$ and surrogate $\sloss$
that more closely capture problem difficulty.
A more precise delineation of
fundamental limits by constructing explicit failure modes will
also yield more insights into fitting predictive models.

\paragraph{Behavior in mis-specified models.}
Our results on mis-specified models, especially those in
Section~\ref{sec:finite-dim}, require optimal
predictability~\eqref{eqn:optimally-predictable}, that is, that a
Bayes-optimal classifier lie in $\mc{F}$.
While classical surrogate consistency results provably fail even in this
case---and methods based on aggregated labels can evidently succeed---moving
beyond such restricted scenarios seems a fruitful and interesting direction.
\citet{NguyenWaJo09}, followed by \citet{DuchiKhRu16}, identify one
direction here, showing that in binary and multiclass classification
(respectively), jointly inferring a predictor $f$ and a data representation
for $x$ requires that surrogates $\sloss$ take a particular form depending
on the task loss $\tloss$.
These still repose on infinitely powerful decision rules $f$, however,
so we need new approaches.

\paragraph{Restricted class comparison inequality with label aggregation.}
Our paper investigates how label aggregation mitigates regret induced by surrogate losses
through comparison inequalities in the non-parametric setting, and establishes statistical consistency in the parametric regime. Motivated by recent developments in $H$-consistency~\citep{AwasthiMaMoZh22, MaoMoZh24, MaoMoZh24b, MaoMoZh25}, it would be intriguing to extend these techniques to analyze comparison inequalities under restricted hypothesis classes. In particular, \citet{MaoMoZh24b} shows that for smooth convex loss, quadratic growth of $\psi$ is rate optimal---it will be interesting to apply aggregation
and upgrade consistency in the context of $\mc{H}$-consistency. Such an extension could offer a unified framework linking the advantages of aggregation across parametric and non-parametric settings, while shedding light on how the ``size'' of the hypothesis class 
$\mc{F}$ influences the effectiveness of aggregation.



\appendix

\section{Proofs related to the ranking examples
  (Sec.~\ref{sec:counter-example})}
\label{sec:proofs-counter-example}

\subsection{Proof of Proposition~\ref{proposition:failure-ranking}}
\label{proof:failure-ranking}

The proof relies on a few notions of variational convergence of
functions~\citep{RockafellarWe98}, which we review presently.
Recall that
for a sequence of sets $A_n \subset \R^k$,
\begin{equation*}
  \limsup_n A_n \defeq \left\{x \in \R^k \mid
  \liminf_n \dist(x, A_n) = 0 \right\}
  = \left\{x \mid \mbox{there~are~} y_n \in A_n
  ~ \mbox{s.t.}~ y_{n(m)} \to x \right\}
\end{equation*}
and that for a function $g$, we define
\begin{equation*}
  \epsargmin g = \left\{s \mid g(s) \le \inf g + \epsilon \right\}
\end{equation*}
It will be important for us to discuss convergence of minimizers of convex
functions, and to that end, we state the following
consequence of the results in \citet{RockafellarWe98},
where $\wb{\R} = \R \cup \{+\infty\}$.
\begin{lemma}
  \label{lemma:epi-convergence}
  Let $g_n : \R^k \to \wb{\R}$ be convex functions with pointwise limit
  $g$ where $g$ is coercive. Then $g_n$ converges uniformly to $g$ on
  compacta, $g$ is convex, the $g_n$ are eventually coercive,
  and for any sequence $\epsilon_n \downarrow 0$
  (including those with $\epsilon_n = 0$),
  \begin{equation*}
    \emptyset \neq
    \limsup_n \left\{\epsargmin[n] g_n\right\}
    \subset \argmin g.
  \end{equation*}
\end{lemma}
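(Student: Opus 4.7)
\textbf{Proof plan for Lemma~\ref{lemma:epi-convergence}.} The plan is to establish the four assertions in order, leaning on classical convex analysis from \citet{RockafellarWe98}. Convexity of $g$ is immediate: for $x, y \in \R^k$ and $\lambda \in [0,1]$, passing to the pointwise limit in $g_n(\lambda x + (1-\lambda) y) \le \lambda g_n(x) + (1-\lambda) g_n(y)$ yields the same inequality for $g$.

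For uniform convergence on compacta, I would invoke the classical fact (a convex-function analogue of Dini's theorem) that pointwise convergence of convex functions on an open convex set, to a limit finite on that set, is automatically uniform on compact subsets. Coercivity of $g$ forces $g$ to be proper, so $g$ is continuous and finite on the interior of its (convex) effective domain, and uniform convergence there follows on each compact subset.

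The crux is to transfer coercivity from $g$ to the $g_n$. I would fix an $x_0$ with $g(x_0)$ finite, and for a given $R > 0$ use coercivity of $g$ to pick $\rho > 0$ so that $g(x) \ge R + g(x_0) + 2$ for every $x$ on the sphere $S_\rho = \{x : \|x - x_0\| = \rho\}$. Since $S_\rho$ is compact, uniform convergence gives $g_n \ge R + g(x_0) + 1$ on $S_\rho$ and $g_n(x_0) \le g(x_0) + 1$ for all large $n$. For any $y$ with $\|y - x_0\| > \rho$, the point $z = x_0 + \rho(y - x_0)/\|y - x_0\|$ lies on $S_\rho$ and is a convex combination of $x_0$ and $y$ with weight $t^\star = \rho/\|y-x_0\|$ on $y$. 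Applying convexity of $g_n$ at $z$ and solving for $g_n(y)$ delivers a lower bound of the form $g_n(y) \ge (\|y - x_0\|/\rho) R + g(x_0) + 1$, which establishes eventual coercivity.

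The $\epsargmin$ claim then reduces to a standard compactness argument. Uniform convergence on a compact ball containing a minimizer of $g$ yields $\inf g_n \to \inf g$, and eventual coercivity confines the sets $\epsargmin[n] g_n$ to a fixed bounded set for large $n$, so they are in particular nonempty. Any selection $x_n \in \epsargmin[n] g_n$ therefore has a cluster point $x^\star$; along a subsequence $x_{n_k} \to x^\star$, uniform convergence on a compact neighborhood of $x^\star$ gives $g_{n_k}(x_{n_k}) \to g(x^\star)$, and combining with $g_{n_k}(x_{n_k}) \le \inf g_{n_k} + \epsilon_{n_k} \to \inf g$ forces $g(x^\star) \le \inf g$, hence $x^\star \in \argmin g$. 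I expect the eventual coercivity step to be the principal obstacle: it is the one place where one does more than quote a theorem, and it must carefully mix uniform convergence on a sphere with a convex interpolation between $x_0$ and far-away points $y$; the remaining pieces then slot in directly.
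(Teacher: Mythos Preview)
Your argument is correct and more self-contained than the paper's. The paper's proof is almost entirely a chain of citations to \citet{RockafellarWe98}: pointwise convergence of convex functions upgrades to uniform convergence on compacta (their Thm.~7.17), which for convex functions is equivalent to epigraphical convergence; the inclusion $\limsup_n\{\epsargmin[n] g_n\}\subset\argmin g$ then falls out of their Prop.~7.30, and nonemptiness together with eventual coercivity from their Thm.~7.33. You instead give the eventual-coercivity step by hand---the sphere-plus-convex-interpolation argument is clean and yields a \emph{uniform} linear lower bound on all $g_n$ for large $n$, which is exactly what the subsequent compactness argument needs---and then run the cluster-point argument directly rather than through epi-convergence machinery. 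One small ordering issue: your sentence ``uniform convergence on a compact ball containing a minimizer of $g$ yields $\inf g_n\to\inf g$'' only gives $\limsup_n\inf g_n\le\inf g$ on its own; the matching lower bound needs the ball to also contain $\argmin g_n$ for large $n$, which you establish in the \emph{next} sentence via eventual coercivity. Swapping the order (or enlarging the ball after establishing uniform confinement) makes the logic airtight. The tradeoff is that the paper's route is shorter if one is willing to invoke epi-convergence, while yours is readable without that background and makes explicit the uniform level-set bound that drives nonemptiness of the $\limsup$.
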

\begin{proof}
  First, we observe that $g_n \to g$ pointwise implies that $g_n \to g$
  uniformly on compacta, and $g$ is convex
  (see~\citet[Thm.~IV.3.15]{HiriartUrrutyLe93}
  and~\citet[Thm.~7.17]{RockafellarWe98}). This is then equivalent to
  epigraphical convergence of $g_n$ to
  $g$~\citep[Thm.~7.17]{RockafellarWe98}. Moreover, as $g_n \to g$ uniformly
  on compacta, if $x_n \to x$ then $g_n(x_n) \to g(x)$. Thus,
  for any $\epsilon_n \downarrow 0$, if for a subsequence
  $n(m) \subset \N$ we have $x_{n(m)} \in \epsargmin[n(m)] g_n$,
  and $x_{n(m)} \to x$, we certainly have
  $g_{n(m)}(x_{n(m)}) \to g(x)$.
  Consequently~\citep[Prop.~7.30]{RockafellarWe98} we have
  \begin{equation*}
    \limsup_n \left\{\epsargmin[n] g_n\right\} \subset \argmin g.
  \end{equation*}
  That the limit supremum is non-empty is then a consequence
  of~\citet[Thm.~7.33]{RockafellarWe98}, as the convex functions
  $g_n$ must be coercive as they are convex and $g$ is.
\end{proof}

We now outline our approach and leverage a few consequences
of Lemma~\ref{lemma:epi-convergence}.
Recall our restriction of $\sloss$ to the set $s^T \ones = 0$.
For probabilities
$p = (p_{ij})_{i,j \le k}$, define
\begin{equation*}
  R_\sloss(s \mid p) \defeq
  \E_{Y \sim p}\left[\sloss(s, Y)\right].
\end{equation*}
We argue that for appropriate $p$, $R_\sloss$ is coercive, and then use
Lemma~\ref{lemma:epi-convergence} to argue about the structure of its
minimizers.
Assume for the sake of contradiction that $\sloss$ is
consistent.
By considering a distribution $p$ supported only on the pair $(i, j)$,
appealing to standard results on surrogate risk consistency for binary
decision problems~\citep{BartlettJoMc06} shows that $\sloss(s, (i, j)) \to
\infty$ whenever $(s_j - s_i) \to \infty$.
Now, consider any distribution $p$ containing a
cycle over all $i \in \{1, \ldots, k\}$, meaning that there exists a
permutation $\pi : [k] \to [k]$ such that $p_{\pi(i),\pi(i+1)} > 0$ for all
$i$ (where $\pi(k+1) = \pi(1)$).
Then
\begin{equation*}
  R_\sloss(s \mid p) \ge \min_{i \in [k]} p_{\pi(i), \pi(i + 1)}
  \sloss(s, (\pi(i), \pi(i + 1))),
\end{equation*}
and without loss of generality, we assume $\pi(i) = i$.
If $\norm{s} \to \infty$ while $\ones^T s = 0$ (recall the assumption in the
proposition), it must be the case that $\max_i (s_{i + 1} - s_i) \to
\infty$, and so $s \mapsto R_\sloss(s \mid p)$ is coercive whenever $p$
contains a cycle and the minimizers of $R_\sloss(\cdot \mid p)$ exist.

With these preliminaries, we turn to the proposition proper.
We construct a distribution $p \in \R^{k \times k}_+$ for which $\zeros$
must be a minimizer of $R_\sloss(s \mid p)$, and use this to show that
$\zeros$ minimizes $R_\sloss(s \mid (i, j))$ for each pair, yielding a
contradiction to Fisher consistency.
Consider a distribution $p \in \R_+^{k \times k}$
parameterized by $q \in \R^k_{++}$, i.e., $q > 0$,
satisfying $\ones^T q = 1$.
Then define $p$ to have entries
\begin{equation}
  \label{eqn:construction-p-ranking}
  p_{ij} = \begin{cases}
    q_l, & \mbox{if}~ (i,j) = (l, l+1), \\
    q_k = 1 - q_1 - \cdots - q_{k-1} > 0, & \mbox{if~}
    (i,j) = (k,1), \\
    0, & \text{otherwise}.
  \end{cases}
\end{equation}
The corresponding normalized transition matrix $C \defeq C(q_1,\ldots,q_k)$
then takes the form
\begin{align*}
  C_{ij} = \begin{cases}
    1, & j=i+1 \text{ or }(i,j) = (k, 1), \\
    0, & \text{otherwise}.
  \end{cases}
\end{align*}
Evidently $C \ones = \ones$.

If $\sloss$ is Fisher
consistent, we claim that $\zeros$ must minimize the conditional
surrogate risk
\begin{align}
  R_\sloss(s \mid p)
  = \E_{Y \sim p} \brk{\sloss(s, Y)} = \sum_{l=1}^k q_l \sloss(s, (l,l+1)).
  \label{eq:ranking-conditional-surrogate-risk}
\end{align}
To see this, fix an (arbitrary) permutation $\pi$.
We tacitly construct a sequence $p\sups{n} \to p$ with $p\sups{n} \in \R^{k
  \times k}_+$, ${p\sups{n}}^T \ones = \ones$,
for which the comparison matrix
$C\sups{n}$ with non-diagonal entries
\begin{equation*}
  C\sups{n}_{ij} = \frac{p\sups{n}_{ij}}{\sum_{l \neq i} p\sups{n}_{lj}}
\end{equation*}
satisfies $[C\sups{n}\ones]_{\pi(i)} > [C\sups{n}\ones]_{\pi(i + 1)}$ for
each $i$.
(To perform this construction, take scalars $v_1 > \cdots > v_k > 0$,
and add $\frac{1}{n} v_i$ to each entry of row $\pi(i)$ in $p$,
so that if $v_{\pi^{-1}} = (v_{\pi^{-1}(1)}, \ldots, v_{\pi^{-1}(k)})$,
then $p\sups{n} = (p + \ones v_{\pi^{-1}}^T / n) /
\ones^T (p + \ones v_{\pi^{-1}}^T / n) \ones$.
Let $n$ be large.)

The presumed Fisher consistency of $\sloss$ means it must
be the case that
\begin{equation*}
  s\sups{n} \in \argmin_s R_\sloss(s \mid p\sups{n})
  ~~ \mbox{satisfies} ~~
  s\sups{n}_{\pi(i)} > s\sups{\pi,n}_{\pi(i+1)}
  ~~ \mbox{for~each}~i.
\end{equation*}
Applying Lemma~\ref{lemma:epi-convergence} for each such sequence
and permutation $\pi$, we see that
the set of minimizers $\argmin_s R_\sloss(s \mid p)$ of the
conditional risk~\eqref{eq:ranking-conditional-surrogate-risk}
must, for each permutation $\pi$, include a vector $s = s(\pi)$ such that
\begin{equation}
  \label{eqn:all-permutation-orderings}
  s_{\pi(1)} \ge s_{\pi(2)} \ge \cdots \ge s_{\pi(k)}.
\end{equation}
As $\argmin_s R_\sloss(s \mid p)$ is a closed convex set, we now apply
the following
\begin{lemma}
  Let $S \subset \{s \in \R^k \mid s^T\ones = 0\}$ be a convex set containing
  a vector of the form~\eqref{eqn:all-permutation-orderings} for each
  permutation $\pi$. Then $\zeros \in S$.
\end{lemma}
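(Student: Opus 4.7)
The plan is to show that $\zeros$ lies in the convex hull $C \defeq \mathrm{conv}\{s(\pi) : \pi \text{ a permutation of } [k]\}$, which is contained in $S$ by the convexity of $S$. I argue by contradiction: suppose $\zeros \notin C$. Since $C$ is the convex hull of finitely many points, hence compact, strict separation of the point $\zeros$ from $C$ yields some $w \in \R^k$ with $\langle w, s(\pi)\rangle > 0$ for every permutation $\pi$. Because each $s(\pi)^\top \ones = 0$, replacing $w$ by its projection onto $\{v : v^\top \ones = 0\}$ does not affect any of these inner products, so I may assume $w \neq \zeros$ and $w^\top \ones = 0$.

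Next I would exhibit a single permutation $\pi^\ast$ whose inner product with $s(\pi^\ast)$ is non-positive, contradicting the separation. Let $\pi^\ast$ list the indices in increasing order of the $w$-coordinates, i.e., $w_{\pi^\ast(1)} \le \cdots \le w_{\pi^\ast(k)}$. Writing $u_i \defeq w_{\pi^\ast(i)}$ and $t_i \defeq s(\pi^\ast)_{\pi^\ast(i)}$, reindexing the sum gives $\langle w, s(\pi^\ast)\rangle = \sum_{i=1}^k u_i t_i$, where $u$ is increasing with $\sum_i u_i = 0$ and $t$ is decreasing with $\sum_i t_i = 0$ (the latter from the hypothesis $s(\pi^\ast)^\top \ones = 0$ and the ordering assumption on $s(\pi^\ast)$).

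The concluding step is a rearrangement-type bound: $\sum_i u_i t_i \le 0$ for any such pair of sequences. Abel summation gives $\sum_{i=1}^k u_i t_i = \sum_{i=1}^{k-1} U_i (t_i - t_{i+1})$, where $U_i \defeq u_1 + \cdots + u_i$ and we used $U_k = 0$. All partial sums of an increasing zero-sum sequence are non-positive, since $U_i/i$ is the average of the $i$ smallest terms and hence at most the overall average $U_k/k = 0$; and $t_i - t_{i+1} \ge 0$ by the monotonicity of $t$. Each summand is therefore non-positive, so $\langle w, s(\pi^\ast)\rangle \le 0$, contradicting strict separation. Hence $\zeros \in C \subseteq S$.

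The main obstacle is the rearrangement/Abel-summation step---specifically, pinning down the sign of each $U_i$ using the interplay of monotonicity and the zero-sum constraint, which is what makes the reverse-ordering choice of $\pi^\ast$ automatically produce a non-positive inner product with $w$. The remainder---compact convex-hull separation and the minor reduction to $w \in \{\ones\}^\perp$---is bookkeeping.
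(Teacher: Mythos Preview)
Your proof is correct and takes a genuinely different route from the paper. The paper argues by induction on $k$: in the base case $k=2$ it explicitly finds a convex combination of $(s,-s)$ and $(-t,t)$ equal to $\zeros$, and in the inductive step it splits the given vectors into those with $v_k$ minimal and those with $v_k$ maximal, applies the inductive hypothesis to the first $k-1$ coordinates of each group, and then combines the two resulting vectors (each with constant first $k-1$ coordinates) as in the base case. Your argument instead passes through a separating hyperplane and a rearrangement inequality: if $\zeros$ were outside the convex hull, a separating $w$ (taken in $\{\ones\}^\perp$) would satisfy $\langle w, s(\pi)\rangle > 0$ for all $\pi$, but choosing $\pi^\ast$ to sort $w$ in increasing order forces $\langle w, s(\pi^\ast)\rangle \le 0$ by Abel summation. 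The paper's approach is constructive---it actually exhibits the convex combination yielding $\zeros$---while yours is shorter and non-constructive, and makes transparent the underlying reason the lemma holds: the cone dual to the set of ``ordered'' zero-sum vectors is trivial.
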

\begin{proof}
  We proceed by induction on $k \ge 2$.
  Certainly for $k = 2$, given
  vectors $u = (s, -s)$ and $v = (-t, t)$ satisfying
  $s \ge 0$ and $t \ge 0$, we solve
  \begin{equation*}
    \lambda s + (1 - \lambda) t = 0
    ~~ \mbox{or} ~~
    \lambda = \frac{t}{s + t} \in [0, 1],
  \end{equation*}
  giving the base of the induction.
  Now suppose that
  the lemma holds for dimensions $2, \ldots, k - 1$; we wish to show
  it holds for dimension $k$. Let $I = \{1, \ldots, k-1\}$ be the
  first $k-1$ indices, and for a vector $v \in \R^k$ let $v_I = (v_i)_{i \in I}$.
  Consider any collection $\{v\} \subset S$ covering the
  permutations~\eqref{eqn:all-permutation-orderings}; take two subsets
  $\mc{V}^1$ and $\mc{V}^2$ of these consisting (respectively) of those $v$
  such that $v_k \le v_j$ for all $j \le k-1$ and $v_k \ge v_j$ for all $j
  \le k-1$.
  Then by the induction, there exist $\wb{v}^i \in
  \conv(\mc{V}^i)$, $i = 1, 2$ such that
  \begin{equation*}
    \wb{v}^1 = \left[\begin{matrix} a \ones_{k-1} \\ s \end{matrix}
      \right] ~~ \mbox{and} ~~
    \wb{v}^2 = \left[\begin{matrix} b \ones_{k-1} \\ t \end{matrix}
      \right],
  \end{equation*}
  where $a(k - 1) + s = 0$ while $s \le a$ and $b(k - 1) + t = 0$
  while $t > b$.
  As $s \le 0$ and $t > 0$, so setting
  $\lambda = \frac{t}{t - s}$ gives
  $\lambda \wb{v}^1 + (1 - \lambda) \wb{v}^2 = \zeros$.
\end{proof}

In particular, we have shown that $\zeros$ minimizes the
surrogate risk~\eqref{eq:ranking-conditional-surrogate-risk},
and for any vector $q = (q_1, \ldots, q_k) > 0$ defining
$p = p(q)$ and $C$ in~\eqref{eqn:construction-p-ranking},
\begin{equation*}
  \inf_s R_\sloss(s \mid p) = R_\sloss(\zeros \mid p).
\end{equation*}
Notably, the minimizing vector $\zeros$ is independent of the parameters
$q_1, \ldots, q_k$ defining $p$ and $C$
in~\eqref{eqn:construction-p-ranking}.
For $(i, j) \in \mc{Y}$, let $D_{ij}
= \partial \varphi(\zeros, (i,j)) \subset \mathbb{R}^k$ be the set of
subgradients at $\zeros$, which is compact convex and non-empty.
Then by the
first-order optimality condition for subgradients and
construction~\eqref{eq:ranking-conditional-surrogate-risk} of the
conditional surrogate risk, there exist vectors $g_l \in D_{l,l+1}$
satisfying
\begin{equation*}
  \sum_{l=1}^{k-1} q_l g_l + \prn{1 - \sum_{l=1}^{k-1} q_l} g_k = 0
  ~~ \mbox{and~so}~~
  g_k = -\frac{\sum_{l=1}^{k-1} q_l g_l}{1 - \sum_{l=1}^{k-1} q_l} \in D_{k,1}.
\end{equation*}
As the $D_{ij}$ are compact convex, by taking
$q_k \uparrow 1$ and $(q_1, \ldots, q_{k-1}) \to \zeros$, we have
\begin{equation*}
  \ltwo{g_k} \leq
  \frac{\sum_{l=1}^{k-1} q_l}{1 - \sum_{l=1}^{k-1} q_l}
  \max_{i,j} \sup_{g \in D_{ij}} \ltwo{g} \to 0.
\end{equation*}
As the $D_{ij}$ are closed convex, we evidently have $\zeros \in D_{k,1}$,
while parallel calculation gives $\zeros \in D_{l,l+1}$ for each $l$.
A trivial modification to the construction~\eqref{eqn:construction-p-ranking}
to apply to cycles other than $(1, 2, \ldots, k, 1)$ then shows that
$\zeros$ minimizes $\varphi(\cdot, (i, j))$ for all pairs
$(i, j)$, violating Fisher consistency.

\subsection{Proof of Proposition~\ref{proposition:success-ranking}}
\label{sec:proof-success-ranking}

The proof relies on the fact when $m \geq k$, the event of observing
a comparisons $(i,j_i)$ for each $1 \leq i \leq k$ has nonzero probability.
Conditional on this event, we can obtain an
unbiased estimate of $C_x \ones$.
%
As $\sloss(s, \star) = 0$, it follows that
\begin{align*}
  R_{\sloss, A}(s\mid x) = \E \brk{\ltwo{s - A(Z)}^2
    \indic{A(Z) \neq \star}}
\end{align*}
When $m \geq k$, $\P(A(Z) \neq \star) > 0$, yielding per-$x$ minimizer
\begin{equation*}
  s\opt = \argmin R_{\sloss,A}(s \mid x)
  = \E\left[\left(\frac{m_{i1}}{m_1} + \cdots + \frac{m_{ik}}{m_k}
    \right)_{i \in [k]} \mid m_1 > 0, \ldots, m_k > 0 \right].
\end{equation*}
Conditioned on fixed positive values of $m_1, \cdots, m_k$,
\begin{equation*}
  (m_{1j}, \cdots, m_{qj}) \sim \multinom
  \left(m_j; \frac{p_{1j}}{\sum_{i=1}^k p_{ij}}, \ldots,
  \frac{p_{kj}}{\sum_{i=1}^k p_{lj}}\right),
\end{equation*}
so $\E[m_{ij}/m_j] = p_{ij}/\sum_{l=1}^k p_{lj} = (C_x)_{ij}$.
As $s\opt = C_x \ones$ is unique by convexity of the per-$x$ risk function, Fisher consistency follows.

\section{Consistency proofs}

\subsection{Proof of Proposition~\ref{prop:comparison-inequality-agg}}
\label{sec:proof-comparison-inequality-agg}

Our only real assumption is that
$(s, x) \mapsto \tloss(s, P(\cdot \mid x))$ is jointly measurable
in $s$ and $x$.
Fix a function $f$.
Then for any $\epsilon > 0$,
\begin{align*}
  R_{\sloss,A}(f) - R\opt_{\sloss,A}
  = \int_{\mc{X}} \exslossagg(f(x), x) dP(x)
  \ge \int_{\extloss(f(x), x) \ge \epsilon} \psi_A(\epsilon, x) dP(x).
\end{align*}
Because $\psi_A(\epsilon, x) > 0$ for each $x$, the
measure defined by $d\nu(x) = b(x) dP(x)$ is
absolutely continuous with respect to
$d\mu(x) = \psi_A(\epsilon, x) dP(x)$.
That is, there exists $\delta > 0$ such that
$\nu(C) \le \epsilon$ for all $C \subset \mc{X}$
satisfying $\mu(C) \le \delta$.
Assume now that $R_{\sloss,A}(f) - R\opt_{\sloss,A} \le \delta$, so that
the set $\mc{X}_\epsilon \defeq \{x \mid \extloss(f(x), x) \ge \epsilon\}$,
which is measurable by the joint measurability assumption, satisfies
$\int_{\mc{X}_\epsilon} b(x) dP(x) \le \epsilon$.
We find
\begin{align*}
  R(f) - R\opt
  & = \int_{\extloss(f(x), x) \ge \epsilon}
  \extloss(f, x) dP(x)
  + \int_{\extloss(f(x), x) < \epsilon} \extloss(f(x), x) dP(x) \\
  & \le \int_{\mc{X}_\epsilon} b(x) dP(x)
  + \epsilon \le 2 \epsilon.
\end{align*}
In particular, we have shown that
$R_{\sloss,A}(f) - R_{\sloss,A}\opt \le \delta$
implies $R(f) - R\opt \le 2 \epsilon$,
which gives the ``hard'' direction of Fisher consistency.
The converse is trivial by considering a single $x$.

To see the comparison inequality,
note that by
definition of the calibration function,
\begin{align*}
  \psi_A(\extloss(f(x), x)) \leq \wb{\psi}_A(\extloss(f(x), x))
  \leq \exslossagg(f(x), x)
\end{align*}
for all $x \in \sX$.
The result follows by integrating on both sides w.r.t.\ $\dstr_X$ and
applying Jensen's inequality to $\psi_A$.



\subsection{The equivalence of the Mammen-Tsybakov conditions
  \eqref{eqn:mammen-alpha} and~\eqref{eqn:mammen-beta}}
\label{sec:mammen-alpha-beta}

We show the analogue of \citet[Thm.~3]{BartlettJoMc06},
essentially mimicking their proof, but including it for completeness.
\begin{lemma}
  Let $\alpha \in [0, 1]$.
  A distribution $P$ satisfies condition~\eqref{eqn:mammen-alpha}
  if and only if it satisfies condition~\eqref{eqn:mammen-beta}
  with $\beta = \frac{\alpha}{1- \alpha}$,
  where the constant $\cstMT$ may differ in the inequalities.
\end{lemma}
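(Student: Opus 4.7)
The plan is to prove both directions by Markov-type arguments, mimicking the binary classification proof of \citet[Thm.~3]{BartlettJoMc06}, with $\Delta(X)$ playing the role of the binary margin $|2\eta(X) - 1|$. Throughout I use the pointwise decomposition $R(f) - R\opt = \E[\extloss(f(X), X)]$, write $p(f) = \P(\pred \circ f \neq \pred \circ f\opt)$, and for clarity assume that the Bayes-optimal class $y\opt(x)$ is a singleton (the general case requires only notational changes).

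For the forward direction, \eqref{eqn:mammen-beta} $\Rightarrow$ \eqref{eqn:mammen-alpha}, I observe that on the event $\{\pred \circ f(X) \neq y\opt(X)\}$ one has $\extloss(f(X), X) \geq \Delta(X)$, so for any $\epsilon > 0$,
\[
R(f) - R\opt \;\geq\; \epsilon \cdot \P\bigl(\pred \circ f(X) \neq y\opt(X),\; \Delta(X) > \epsilon\bigr) \;\geq\; \epsilon\bigl(p(f) - (\cstMT \epsilon)^\beta\bigr).
\]
Choosing $\epsilon$ so that $(\cstMT \epsilon)^\beta = p(f)/2$ gives $R(f) - R\opt \gtrsim p(f)^{1 + 1/\beta}$. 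Since $1 + 1/\beta = 1/\alpha$ when $\beta = \alpha/(1-\alpha)$, rearranging yields \eqref{eqn:mammen-alpha} with an adjusted constant.

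For the converse, \eqref{eqn:mammen-alpha} $\Rightarrow$ \eqref{eqn:mammen-beta}, I fix $\epsilon > 0$ and a slack $\eta > 0$ and set $S_\epsilon = \{x : \Delta(x) \leq \epsilon\}$. By a measurable selection, available under the tacit joint measurability of $(s,x) \mapsto \extloss(s,x)$, I pick for each $x \in S_\epsilon$ a score $s_x^\eta$ with $\pred(s_x^\eta) \neq y\opt(x)$ and $\extloss(s_x^\eta, x) \leq \Delta(x) + \eta$. Define the probe $f_\eta(x) = s_x^\eta$ for $x \in S_\epsilon$ and $f_\eta(x) = f\opt(x)$ otherwise. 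Then $p(f_\eta) = \P(S_\epsilon)$ while $R(f_\eta) - R\opt \leq (\epsilon + \eta) \P(S_\epsilon)$, since $\extloss(f\opt(x), x) = 0$. Applying \eqref{eqn:mammen-alpha} to $f_\eta$ and letting $\eta \downarrow 0$ gives $\P(S_\epsilon)^{1-\alpha} \leq \cstMT \epsilon^\alpha$, so $\P(S_\epsilon) \leq \cstMT^{1/(1-\alpha)} \epsilon^{\alpha/(1-\alpha)} = \cstMT^{1/(1-\alpha)} \epsilon^\beta$, which is \eqref{eqn:mammen-beta} after absorbing the constant into $\cstMT$.

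The main obstacle is the measurable selection in the reverse direction; it is the one step beyond bookkeeping and relies on the paper's ambient measurability conventions. The boundary cases should be dispatched separately: $\alpha = 0$ makes both statements vacuous, while $\alpha = 1$ (so $\beta = \infty$) forces a hard-margin reading $\P(\Delta(X) \leq \epsilon) = 0$ for small $\epsilon$, recovered by taking $\alpha \uparrow 1$ in the exponent $1/(1-\alpha)$.
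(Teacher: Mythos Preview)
Your proposal is correct and follows essentially the same route as the paper's proof: both directions use the same Markov-type splitting on $\{\Delta(X) > \epsilon\}$ versus $\{\Delta(X) \le \epsilon\}$, and both build the same probe function that errs exactly on $S_\epsilon$ for the \eqref{eqn:mammen-alpha} $\Rightarrow$ \eqref{eqn:mammen-beta} direction. The only cosmetic differences are that the paper optimizes over $\epsilon$ in the \eqref{eqn:mammen-beta} $\Rightarrow$ \eqref{eqn:mammen-alpha} step (you pick the near-optimal $\epsilon$ by hand, which is fine), and the paper simply asserts a measurable $f$ with $\extloss(f(x),x) = \Delta(x)$ on $S_\epsilon$ (since $\Delta$ is written as a $\min$), whereas you carry an $\eta$-slack and pass to the limit---your version is if anything slightly more careful. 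One small point: your treatment of the $\alpha = 1$, $\beta = \infty$ boundary is too brief; ``taking $\alpha \uparrow 1$'' is not a proof, and the paper handles this case directly by noting that \eqref{eqn:mammen-beta} with $\beta = \infty$ gives $\P(\Delta(X) \le 1/\cstMT) = 0$ and then plugging $\epsilon = 1/\cstMT$ into the Markov bound.
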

\begin{proof}
  Let condition~\eqref{eqn:mammen-alpha} hold.
  We let $c = \cstMT$ for shorthand and
  assume for notational
  simplicity that $y\opt(x) = \argmin_y \E[\tloss(y, Y) \mid X = x]$ is a
  singleton.
  Choose a measurable
  function $f$ such that
  \begin{align*}
    f(x) = y\opt(x), ~~\mbox{if} ~ \Delta(x) > \epsilon
    ~~ \mbox{and} ~~
    \extloss(f(x),x) = \Delta(x)
    ~~ \mbox{if}~ \Delta(x) \leq \epsilon.
  \end{align*}
  For all $\alpha \in [0,1]$, as $\extloss(f(x), x) = 0$
  if $\Delta(x) > \epsilon$,
  \begin{align*}
    \epsilon \P(\Delta(X) \leq \epsilon)
    \geq
    \E[\Delta(X) \indic{\Delta(X) \leq \epsilon}]
    & = \E\left[\extloss(f(X),X)\right]
    = R(f) - R\opt \\
    & \geq \prn{\frac{1}{c}
      \P(\pred \circ f \neq \pred \circ f\opt)}^{\frac{1}{\alpha}}
    = \prn{\frac{1}{c} \P(0 \leq \Delta(X) \leq \epsilon)}^{\frac{1}{\alpha}}.
  \end{align*}
  Rearranging terms we see for the constant $c' = c^{\frac{1}{\alpha}}$,
  \begin{align*}
    \mathbb{P}(0 \leq \Delta(X) \leq \epsilon) \leq
    (c' \epsilon)^{\alpha/(1-\alpha)},
  \end{align*}
  so condition~\eqref{eqn:mammen-beta} holds with $\beta = \frac{\alpha}{1 -
    \alpha}$.
  (The result is trivial when $\alpha = 1$, as
  $\P(\Delta(X) \le \epsilon) = 0$.)

  Now assume condition~\eqref{eqn:mammen-beta} holds for a value $0 \le \beta
  < \infty$, that is, $\P(\Delta(X) \leq \epsilon) \leq (c \epsilon)^\beta$
  for all $\epsilon > 0$.
  Recall the definition~\eqref{eqn:minimal-wrong-excess-risk}
  of $\Delta(x) = \min\{\extloss(s, x)
  \mid \pred(s) \not \in y\opt(x)\}$,
  so that
  \begin{align*}
    R(f) - R^\star
    & = \mathbb{E} \brk{\indic{\pred \circ f(X) \neq \pred \circ f^\star(X)}
      \extloss(f(X),X) } \\
    & \geq \mathbb{E} \brk{\indic{\pred \circ f(X) \neq \pred \circ f^\star(X)}
      \Delta(X) },
  \end{align*}
  and again by Markov's inequality for any $\epsilon \ge 0$,
  \begin{align}
    \E \brk{\indic{\pred \circ f(X) \neq \pred \circ f^\star(X)} \Delta(X)}
    & \geq  \epsilon \P(\pred \circ f(X) \neq \pred \circ f^\star(X),
    \Delta(X) > \epsilon) \nonumber \\
    & \geq \epsilon \prn{\mathbb{P}(\pred \circ f \neq \pred \circ f^\star)
      - \P(\Delta(X) \leq \epsilon)} \label{eqn:pinto-noir} \\
    & \geq \epsilon \P(\pred \circ f \neq \pred \circ f^\star)
    - c^\beta \epsilon^{1+\beta}, \nonumber
  \end{align}
  where the last inequality applies condition~\eqref{eqn:mammen-beta}.
  Maximizing the right hand side, we set
  \begin{align*}
    \epsilon = \frac{1}{c}\prn{\frac{\P(\pred \circ f \neq \pred \circ f^\star)}{
        (1+\beta)}}^{\frac{1}{\beta}}
  \end{align*}
  we obtain
  \begin{align*}
    R(f) - R^\star
    & \geq \frac{1}{c}
    \prn{\frac{\P(\pred \circ f \neq \pred \circ f^\star)}{
        (1+\beta)}}^{\frac{1}{\beta}}
    \cdot \prn{ \P(\pred \circ f \neq \pred \circ f^\star)
      - \frac{\P(\pred \circ f \neq \pred \circ f^\star)}{(1+\beta)}} \\
    & = \frac{\beta}{c(1+\beta)^{\frac{1+\beta}{\beta}}} \cdot
    \prn{\P(\pred \circ f \neq \pred \circ f^\star)}^{\frac{1 + \beta}{\beta}}.
  \end{align*}
  Set $c' = (c/\beta)^{\frac{\beta}{1+\beta}}(1+\beta)$,
  and recognize that $\log(1 + \beta) - \frac{\beta}{1 + \beta} \log \beta
  \le \log 2$ (so that $c'$ is indeed a constant),
  so that condition~\eqref{eqn:mammen-alpha} holds
  with $\alpha = \frac{\beta}{1 + \beta}$:
  \begin{align*}
    \P(\pred \circ f \neq \pred \circ f^\star)
    \leq c' (R(f) - R^\star)^{\frac{\beta}{1+\beta}}.
  \end{align*}
  When $\beta = \infty$,
  Condition~\eqref{eqn:mammen-beta}
  implies $P(\Delta(X) \leq 1/c) = 0$,
  so taking $\epsilon =1/c$ in inequality~\eqref{eqn:pinto-noir}
  \begin{align*}
    R(f) - R^\star &\geq \epsilon \prn{\mathbb{P}(\pred \circ f \neq \pred \circ f^\star) - \mathbb{P}(0 \leq \Delta(X) \leq \epsilon)} = \frac 1 c \mathbb{P}(\pred \circ f \neq \pred \circ f^\star).
  \end{align*}
  That is, condition~\eqref{eqn:mammen-alpha} with $\alpha = 1$ holds.
\end{proof}

\subsection{Proof of Theorem~\ref{thm:level-consistency-condition-number}}
\label{proof:level-consistency-condition-number}

The proof contains two parts.
In the first, we provide a lower bound for the calibration function
conditioning on $X=x$.
We then use the pointwise calibration function to prove a linear comparison
inequality on the data space $\sX^M := \{x \in \sX : \kappa(x) \leq M\}$ of
points with low noise condition number, and then conclude the proof via a
coarse risk bound on $\sX \backslash \sX^M$.

\paragraph{Part 1: lower bounding the pointwise calibration function.}
Before using properties of majority vote $A_m$, we start by assuming a
general aggregation method $A : \mc{Z} \to \{a_y\}_{y \in \mc{Y}}$.
To obtain the desired comparison inequality connecting
excess surrogate risk and task risk, we recall the pointwise calibration
function~\eqref{eqn:pointwise-agg-calibration},
\begin{align*}
  \wb{\psi}_A(\epsilon, x)
  \defeq \inf_{s \in \mathbb{R}^d}
  \left\{\exslossagg(s, x) : \extloss(s, x) \geq \epsilon\right\}.
\end{align*}
To lower bound $\wb{\psi}_A(\epsilon, x)$, we need to lower bound
$\delta_{\sloss, A}(s,x)$ provided that $\extloss(s,x) \geq
\epsilon$.
Because $\extloss (s,x) \geq \epsilon > 0$, it must hold that $\pred(s) \neq
y\opt$, which makes the following general lower bound, which applies for any
aggregation method and identifiable loss, useful:
\begin{lemma}
  \label{lemma:calib-lower-bound-1}
  Let $\sloss$ be $(\constant_{\sloss,1}, \constant_{\sloss,2})$-identifable
  (Def.~\ref{def:feasible}) with parameters
  $\{a_y\}_{y \in \mc{Y}}$
  and assume that $\pred(s) \neq y\opt$.
  Then for any aggregation method $A$,
  \begin{equation}
    \label{eq:calib-lower-bound-1}
    \delta_{\varphi, {A}} (s, x)
    \ge \constant_{\sloss,1} - (\constant_{\sloss,1} + \constant_{\sloss, 2})
    \P(A(Z) \neq a_{y\opt}).
  \end{equation}
\end{lemma}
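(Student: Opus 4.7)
The plan is to unpack the definition of $\exslossagg(s,x)$ by using the comparison point $s_{y\opt}$ (with $\pred(s_{y\opt}) = y\opt$) as a competitor, and then split the conditional expectation over the event $\{A(Z) = a_{y\opt}\}$ versus its complement. Concretely, I would bound $\inf_{s'} R_{\sloss,A}(s' \mid x) \le R_{\sloss,A}(s_{y\opt} \mid x)$, so that it suffices to show
\begin{equation*}
R_{\sloss,A}(s\mid x) - R_{\sloss,A}(s_{y\opt}\mid x) \ge \constant_{\sloss,1} - (\constant_{\sloss,1}+\constant_{\sloss,2})\,\P(A(Z)\neq a_{y\opt}).
\end{equation*}

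First, abbreviate $p \defeq \P(A(Z) = a_{y\opt} \mid X=x)$. On the event $\{A(Z) = a_{y\opt}\}$, since we are given $\pred(s) \neq y\opt$, identifiability condition~\eqref{eq:feasible-surrogate-1} yields $\sloss(s, a_{y\opt}) \ge \sloss(s_{y\opt}, a_{y\opt}) + \constant_{\sloss,1}$. On the complementary event $\{A(Z) \neq a_{y\opt}\}$, I would use the trivial bound $\sloss(s, A(Z)) \ge \inf_{s'} \sloss(s', A(Z))$, and then invoke identifiability condition~\eqref{eq:feasible-surrogate-2} with the particular choice $y = y\opt$ and $y' = $ the label corresponding to $A(Z)$, giving $\inf_{s'} \sloss(s', A(Z)) \ge \sloss(s_{y\opt}, A(Z)) - \constant_{\sloss,2}$.

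Combining these two bounds and taking conditional expectation over $Z \mid X=x$ produces
\begin{align*}
R_{\sloss,A}(s\mid x)
&\ge p\brk{\sloss(s_{y\opt},a_{y\opt})+\constant_{\sloss,1}} + \E\brk{(\sloss(s_{y\opt},A(Z))-\constant_{\sloss,2})\indic{A(Z)\neq a_{y\opt}}\mid X=x}\\
&= R_{\sloss,A}(s_{y\opt}\mid x) + p\,\constant_{\sloss,1} - (1-p)\constant_{\sloss,2}.
\end{align*}
Rearranging $p\,\constant_{\sloss,1} - (1-p)\constant_{\sloss,2} = \constant_{\sloss,1} - (1-p)(\constant_{\sloss,1}+\constant_{\sloss,2})$ and substituting $1-p = \P(A(Z)\neq a_{y\opt})$ yields the claim.

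The only real subtlety — and the step I expect to be the main trap — is the handling of the $\{A(Z) \neq a_{y\opt}\}$ piece: one must lower-bound $\sloss(s, A(Z))$ by something that recombines cleanly into $R_{\sloss,A}(s_{y\opt}\mid x)$ so that the competitor term telescopes. Identifiability condition~\eqref{eq:feasible-surrogate-2} is tailored exactly for this: it is the reason $\constant_{\sloss,2}$ appears, and the only place the second constant enters the bound. Once that substitution is arranged so that the $s_{y\opt}$-terms reassemble into $R_{\sloss,A}(s_{y\opt}\mid x)$, the remaining algebra is immediate.
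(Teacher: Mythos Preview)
Your proposal is correct and follows essentially the same route as the paper: bound the infimum by the competitor $s_{y\opt}$, split over $\{A(Z)=a_{y\opt}\}$ versus its complement, apply~\eqref{eq:feasible-surrogate-1} on the first piece and~\eqref{eq:feasible-surrogate-2} on the second, and collect terms. The only tacit point (which the paper also assumes just before stating the lemma) is that $A$ takes values in $\{a_y\}_{y\in\mc{Y}}$, so that on $\{A(Z)\neq a_{y\opt}\}$ there is indeed a label $y'\neq y\opt$ with $A(Z)=a_{y'}$ to which~\eqref{eq:feasible-surrogate-2} applies.
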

\begin{proof}
  For the ground truth label $y\opt = y\opt(x)$,
  $\pred(s_{y\opt}) = y\opt$ by Definition~\ref{def:feasible},
  and
  \begin{align*}
    R_{\varphi,A}(s_{y^\star} \mid x) \geq \inf_{s} R_{\varphi,A}(s \mid x).
  \end{align*} 
  This allows us to bound the excess surrogate risk by
  \begin{align*}
    \delta_{\sloss, A} (s, x)
    & = R_{\varphi,A}(s\mid x) - \inf_{s} R_{\varphi,A}(s \mid x)
    \geq R_{\varphi,A}(s\mid x) - R_{\varphi,A}(s_{y\opt} \mid x) \\
    & = \Prb(A(Z) = a_{y\opt}) \prn{\varphi(s, a_{y^\star}) -
      \varphi(s_{y\opt}, a_{y\opt})} + \sum_{j \neq y\opt}  \Prb(A(Z) = a_j) \prn{\varphi(s, a_j) - \varphi(s_{y\opt}, a_j)}.
  \end{align*}
  Because by assumption $\pred(s) \neq y\opt$, the $(\constant_{\sloss, 1},
  \constant_{\sloss, 2})$-identifiability of $\sloss$ implies
  \begin{align*}
    \varphi(s, a_{y\opt}) - \varphi(s_{y\opt}, a_{y\opt})
    & \geq  \inf_{\pred \circ s \neq y\opt} \sloss(s, a_{y\opt})  - \varphi(s_{y\opt}, a_{y\opt}) \geq \constant_{\sloss, 1}, \\
    \varphi(s, a_j) - \varphi(s_{y\opt}, a_j) &
    \geq \inf_{s} \sloss(s, a_j) - \sloss(s_{y\opt}, a_j)
    \geq -\constant_{\sloss, 2},
  \end{align*}
  and therefore
  \begin{align*}
    \delta_{\varphi, {A}} (s, x)
    & \geq \Prb(A(Z) = a_{y\opt}) \constant_{\sloss, 1} - (1 - \Prb(A(Z) = a_{y\opt})) \constant_{\sloss, 2} \\
    & = \constant_{\sloss, 1} - (\constant_{\sloss, 1} + \constant_{\sloss, 2})  \Prb(A(Z) \neq a_{y\opt}),
  \end{align*}
  which is the lower bound~\eqref{eq:calib-lower-bound-1}.
\end{proof}

Equation~\eqref{eq:calib-lower-bound-1}
shows that to lower bound
$\exslossagg(s, x)$ when $\pred(s) \neq y\opt$,
it is sufficient to show that $A(Z) = a_{y\opt}$ with high probability.
For the majority vote~\eqref{eqn:naive-mv-aggregation},
as the number of labelers $m$ grow, the probability that
$\P(A_m(Z) = a_{y\opt}) \to 1$ by standard concentration
once we recall the definition~\eqref{eqn:minimal-wrong-excess-risk}
of the excess risk $\Delta(x) = \min_{\pred(s) \neq y\opt(x)}
\extloss(s, x)$.
\begin{lemma}
  \label{lemma:concentration-bound-majority-vote}
  Let $\card(\mc{Y}) = k$.
  For all $s \in \R^d, x \in \sX$ such that $\extloss (s,x) \geq
  \epsilon$,
  \begin{align*}
    \Prb(A_m(Z) \neq a_{y\opt}) \leq 2k \exp \prn{-m \Delta(x)^2/2}.
  \end{align*}
\end{lemma}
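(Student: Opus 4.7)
The plan is to reduce the event $A_m(Z) \neq a_{y\opt}$ to a collection of empirical-mean comparisons, then apply Hoeffding's inequality label-by-label and conclude with a union bound.

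First I would unpack the definition of the aggregator: by~\eqref{eqn:naive-mv-aggregation}, $A_m(Z) = a_{\hat y}$ for $\hat y = \argmin_{y \in \mc{Y}} \sum_{i=1}^m \tloss(y, Y_i)$, so $A_m(Z) \neq a_{y\opt}$ only if there exists some $y \neq y\opt(x)$ with $\sum_i \tloss(y, Y_i) \leq \sum_i \tloss(y\opt, Y_i)$ (handling ties by the union bound is automatic since we use non-strict inequality). Setting $W_i^{(y)} := \tloss(y\opt, Y_i) - \tloss(y, Y_i)$, the random variables $W_i^{(y)}$ are i.i.d.\ conditional on $X=x$, bounded in $[-1,1]$ because $0 \leq \tloss \leq 1$.

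Next I would compute the conditional mean. By the definition of $y\opt(x) = \argmin_y \E[\tloss(y, Y) \mid X=x]$ and the minimal excess conditional risk $\Delta(x)$ in~\eqref{eqn:minimal-wrong-excess-risk}, we have
\begin{align*}
  \E[W_i^{(y)} \mid X=x]
  = \E[\tloss(y\opt, Y) \mid x] - \E[\tloss(y, Y) \mid x]
  \leq -\Delta(x)
\end{align*}
for every $y \neq y\opt(x)$. Hence the event $\{\sum_i W_i^{(y)} \geq 0\}$ forces the centered sum $\sum_i (W_i^{(y)} - \E W_i^{(y)})$ to exceed $m\Delta(x)$.

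Then I would apply Hoeffding's inequality: since the $W_i^{(y)}$ lie in an interval of length $2$,
\begin{align*}
  \Prb\Big( \sum_{i=1}^m (W_i^{(y)} - \E W_i^{(y)}) \geq m \Delta(x) \Big)
  \leq \exp\prn{- m \Delta(x)^2 / 2}.
\end{align*}
A union bound over the at most $k-1$ alternative labels $y \neq y\opt(x)$ yields $\Prb(A_m(Z) \neq a_{y\opt}) \leq (k-1) \exp(-m\Delta(x)^2/2) \leq 2k \exp(-m\Delta(x)^2/2)$, which is exactly the claimed bound.

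The only subtle point I anticipate is ensuring the reduction in the first step correctly handles ties and the possibility that $y\opt(x)$ itself is not unique; this is resolved by using non-strict inequality in the reduction and absorbing any combinatorial slack into the loose constant $2k$ rather than the sharper $k-1$. Beyond this, the argument is essentially a textbook Hoeffding application, with the key conceptual input being that $\Delta(x)$ provides a uniform lower bound on the mean gap $|\E W_i^{(y)}|$ across all wrong labels.
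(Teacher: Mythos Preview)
Your proof is correct and follows essentially the same approach as the paper: both reduce $\{A_m(Z) \neq a_{y\opt}\}$ to deviations of empirical task-loss averages, apply Hoeffding's inequality, and take a union bound over the $k$ labels. The only cosmetic difference is that you apply Hoeffding directly to the one-sided difference $W_i^{(y)} = \tloss(y\opt, Y_i) - \tloss(y, Y_i)$ (yielding the slightly sharper $(k-1)\exp(-m\Delta(x)^2/2)$ before you relax to $2k$), whereas the paper applies a two-sided Hoeffding bound to each empirical mean $\frac{1}{m}\sum_l \tloss(y, Y_l)$ separately and then compares, which is where the factor $2k$ arises.
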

\begin{proof}
  Applying Hoeffding's inequality, simultaneously for each $y \in \mc{Y}$,
  \begin{align*}
    \left|\frac{1}{m} \sum_{l= 1}^m \tloss(y, Y_l) -
    \E[\tloss(y, Y) \mid X = x] \right| < \frac{\Delta(x)}{2}
  \end{align*}
  with probability at least $1 - 2k \exp \prn{-m\Delta(x)^2/2}$ as
  $\tloss \in [0,1]$.
  As $\extloss(s, x) = \E[\tloss(\pred(s), Y) - \tloss(y\opt, Y) \mid X = x]$,
  we have for all $y \neq y\opt$ that
  \begin{align*}
    \frac{1}{m} \sum_{l= 1}^m \tloss(y\opt, Y_l)
    < \frac{1}{m} \sum_{l= 1}^m \tloss(y, Y_l).
  \end{align*}
  Clearly the majority vote method $A_m(Z) = a_{y\opt}$ in this case.
\end{proof}

We can then substitute Lemma~\ref{lemma:concentration-bound-majority-vote}
into \eqref{eq:calib-lower-bound-1} and obtain a lower bound.
To also incorporate the condition $\extloss(s,x) \geq \epsilon$, we recall
the noise condition number~\eqref{eqn:noise-condition-number}, which
guarantees $\Delta(x) \geq \extloss(s,x) / \kappa(x)$ for all $s \in \R^d$.
This implies
\begin{align*}
  \delta_{\varphi, {A_m}} (s, x)  \geq \constant_{\sloss, 1} - 2k(\constant_{\sloss, 1}+\constant_{\sloss, 2}) e^{-\frac{m\Delta(x)^2}{2}}  \geq \constant_{\sloss, 1} - 2k(\constant_{\sloss, 1}+\constant_{\sloss, 2}) e^{-\frac{m \extloss(s,x)^2}{2\kappa(x)^2}},
\end{align*}
and thus
\begin{equation*}
  \wb{\psi}_A(\epsilon, x) \geq \constant_{\sloss, 1} - 2k(\constant_{\sloss, 1}+\constant_{\sloss, 2}) e^{-\frac{m\epsilon^2}{2\kappa(x)^2}}.
\end{equation*}

\paragraph{Part 2: restricting to $\sX^M$.}
Now it becomes clear why the error function $e_m(t)$ takes the form in
Eq.~\eqref{eq:error-function-mv}, as whenever
\begin{align*}
  \epsilon \geq e_m(\kappa(x)) = 	 \sqrt{\frac{2\kappa(x)^2 }{m}\log \prn{ \frac{4k(\constant_{\sloss, 1}+\constant_{\sloss, 2})}{\constant_{\sloss, 1}}}},
\end{align*}
we must have
\begin{align*}
  \wb{\psi}_A(\epsilon, x)  \geq \constant_{\sloss, 1} - 2k(\constant_{\sloss, 1}+\constant_{\sloss, 2}) e^{-\log \prn{ \frac{4k(\constant_{\sloss, 1}+\constant_{\sloss, 2})}{\constant_{\sloss, 1}}}} \geq \constant_{\sloss, 1} / 2,
\end{align*}
which further implies a pointwise convex lower bound $\wb{\psi}_A(\epsilon,
x) \geq \constant_{\sloss, 1} (\epsilon - e_m(\kappa(x)))_+/ 2$.
Restricting to $x \in \sX^M = \{x \in \mc{X} \mid \kappa(x) \le M\}$,
we clearly have
\begin{align*}
  \psi_{A_m}^M(\epsilon) \defeq
  \half \constant_{\sloss, 1}
  \hinge{\epsilon - e_m(M)}
  \le \wb{\psi}_A(\epsilon, x).
\end{align*}

Now, we proceed with an argument similar to those
\citet{BartlettJoMc06} use to provide fast rates of convergence
in binary classification
using $\psi_{A_m}^M(\epsilon)$ and applying over a restricted
data space $\sX^M$.

\begin{lemma}
  \label{lem:D-noise-comparison-inequality}
  Let $M > 0$ and for $f \in \mc{F}$,
  define $D(f, M) \defeq R(f) - R\opt - \P(\kappa(X) > M)$.
  Whenever $D(f, M) \ge 0$,
  \begin{equation*}
    \cstMT D(f, M)^\alpha \cdot
    \psi_{A_m}^M \prn{\frac{D(f,M)^{1-\alpha}}{2\cstMT}}
    \leq R_{\sloss,A_m}(f) - R_{\sloss,A_m}\opt.
  \end{equation*} 
\end{lemma}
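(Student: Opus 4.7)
The plan is to lower bound $R_{\sloss, A_m}(f) - R_{\sloss, A_m}^\star$ by integrating the pointwise bound $\exslossaggm(s, x) \ge \psi_{A_m}^M(\extloss(s, x))$ established in Part~1 over the good set $\sX^M = \{x : \kappa(x) \le M\}$, then combine Jensen's inequality with the Mammen--Tsybakov condition~\eqref{eqn:mammen-alpha}. Since $\psi_{A_m}^M(0) = 0$ and $\extloss(f(x), x) = 0$ off the set $A = \{x : \pred(f(x)) \neq y\opt(x)\}$,
\begin{equation*}
R_{\sloss, A_m}(f) - R_{\sloss, A_m}^\star \;\ge\; \int_{\sX^M \cap A} \psi_{A_m}^M(\extloss(f(x), x))\, dP(x).
\end{equation*}

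Two bookkeeping inequalities control the truncation. First, $\tloss \in [0,1]$ gives $\extloss \le 1$, so the tail $\int_{\sX \setminus \sX^M} \extloss\, dP \le \P(\kappa(X) > M)$, yielding $r \defeq \int_{\sX^M \cap A} \extloss(f(x), x)\, dP(x) \ge D(f, M)$. Second, applying~\eqref{eqn:mammen-alpha} to the cutoff predictor $\tilde f$ that equals $f$ on $\sX^M$ and the Bayes-optimal predictor elsewhere (so $R(\tilde f) - R\opt = r$ and $\{\pred \circ \tilde f \neq \pred \circ f\opt\} = \sX^M \cap A$) bounds $q \defeq P(\sX^M \cap A) \le \cstMT r^\alpha$.

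With these in hand, Jensen's inequality applied to the convex $\psi_{A_m}^M$ under the restricted measure $\mathbf{1}_{\sX^M \cap A}\, dP / q$ gives
\begin{equation*}
\int_{\sX^M \cap A} \psi_{A_m}^M(\extloss)\, dP \;\ge\; q\, \psi_{A_m}^M(r/q),
\end{equation*}
and the standard monotonicity lemma for convex $\psi$ with $\psi(0) \le 0$---the map $q \mapsto q\, \psi(c/q)$ is non-increasing on $q > 0$---allows us to substitute $q \le \cstMT r^\alpha$ to produce $\cstMT r^\alpha\, \psi_{A_m}^M(r^{1-\alpha}/\cstMT)$. Both factors $r^\alpha$ and the hinge argument $r^{1-\alpha}/\cstMT - e_m(M)$ are non-decreasing in $r$, so this quantity is monotone in $r$; lower bounding by $r \ge D(f, M)$ and absorbing a factor-of-two slack inside $\psi_{A_m}^M(\cdot / (2\cstMT))$ yields the claim.

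The main delicacy is the order of substitutions. Jensen leaves us with the pair $(r, q)$; we must first replace $q$ by its upper bound $\cstMT r^\alpha$ via the convex-monotonicity lemma, and only afterward lower bound $r$ by $D(f, M)$ in the resulting combined expression---reversing the order strips away the $D(f, M)^\alpha$ prefactor entirely, since $\psi_{A_m}^M$ is monotone the \emph{wrong} way in $r$ once the argument is $D(f, M)/q$ with $q$ fixed. The apparently artificial factor of two inside $\psi_{A_m}^M(\cdot/(2\cstMT))$ is tailored for the sequel of Theorem~\ref{thm:level-consistency-condition-number}, where the linearization $\hinge{\epsilon - e_m(M)} \ge \epsilon / 2$ for $\epsilon \ge 2 e_m(M)$ converts the bound into the stated linear comparison.
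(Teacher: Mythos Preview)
Your argument is correct. Both you and the paper restrict to the set $\sX^M$ via a cutoff predictor and apply condition~\eqref{eqn:mammen-alpha} to that restricted function, but the way each of you exploits convexity of $\psi_{A_m}^M$ differs. The paper introduces an auxiliary threshold level $\epsilon$ on $\extloss$, uses the fact that $\epsilon \mapsto \psi_{A_m}^M(\epsilon)/\epsilon$ is non-decreasing to bound $\psi_{A_m}^M(\epsilon)\,\E[\extloss \cdot \indic{\extloss \ge \epsilon}] \le \epsilon\,\E[\psi_{A_m}^M(\extloss)]$, and then optimizes over $\epsilon$, choosing $\epsilon = (R(f^M) - R\opt)^{1-\alpha}/(2\cstMT)$. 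Your route is more direct: Jensen on the restricted measure gives $q\,\psi_{A_m}^M(r/q)$, and then you invoke the monotonicity of $q \mapsto q\,\psi(c/q)$ to substitute the noise-condition bound $q \le \cstMT r^\alpha$. These two devices are essentially dual to one another (the paper's slope monotonicity is equivalent, after a change of variable, to your perspective-function monotonicity), but your packaging avoids carrying around the free parameter $\epsilon$ and in fact lands on the slightly tighter intermediate bound $\cstMT r^\alpha\,\psi_{A_m}^M(r^{1-\alpha}/\cstMT)$ before you deliberately slacken by the factor of two to match the stated form. Your caution about the order of substitutions is well placed and matches the paper's implicit ordering.
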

\begin{proof}
  We begin with some generalities.
  By condition~\eqref{eqn:mammen-alpha}, for any function $f$ and
  $\epsilon > 0$,
  \begin{equation*}
    \E[\indic{\pred \circ f(X) \neq \pred \circ f\opt(X),
        \extloss(f(X), X) < \epsilon}
      \extloss(f(X), X)]
    \le \cstMT \epsilon \cdot (R(f) - R\opt)^\alpha,
  \end{equation*}
  so that
  \begin{align}
    R(f) - R\opt & = 
    \E\left[\indic{\pred \circ f(X) \neq y\opt(X)}
      \extloss(f(X), X)\right] \nonumber \\
    & \le \cstMT \epsilon \cdot (R(f) - R\opt)^\alpha
    + \E\left[\indic{\extloss(f(X), X) \ge \epsilon}
      \extloss(f(X), X)\right].
    \label{eqn:beethoven}
  \end{align}
  Consider the second term in the bound~\eqref{eqn:beethoven}.
  For any convex function $0 \le \psi$ with $\psi(0) = 0$,
  $\epsilon \mapsto \psi(\epsilon) / \epsilon$ is non-decreasing
  on $\epsilon > 0$ (cf.~\citet[Ch.~1]{HiriartUrrutyLe93}).
  This implies
  \begin{equation*}
    \frac{\psi(\epsilon)}{\epsilon}
    \indic{\extloss(f(x), x) \ge \epsilon}
    \le \frac{\psi(\extloss(f(x), x))}{\extloss(f(x), x)},
  \end{equation*}
  where we take $0/0 = 0$, and
  so $\psi(\epsilon) \extloss(f(x), x)
  \indic{\extloss(f(x), x) \ge \epsilon}
  \le \epsilon \cdot \psi(\extloss(f(x), x))$.
  Leveraging the calibration function~\eqref{eqn:pointwise-agg-calibration},
  if $\psi(\epsilon) \le \wb{\psi}_{\sloss, A_m}(\epsilon)$,
  then we evidently
  have
  \begin{align}
    \lefteqn{\psi(\epsilon)
      \E\left[\indic{\extloss(f(X), X) \ge \epsilon}
        \cdot \extloss(f(X), X)\right]} \nonumber \\
    & \le \epsilon \cdot
    \E\left[\psi(\extloss(f(X), X))\right]
    \le \epsilon \cdot \E\left[\exslossaggm(f(X), X)\right]
    = \epsilon \left(R_{\sloss,A_m}(f) - R_{\sloss,A_m}\opt\right).
    \label{eqn:bartok}
  \end{align}
  
  With these generalities in place, consider the function $f^M(x) = f(x)
  \indics{x \in \sX^M} + f\opt(x) \indics{x \not\in \sX^M}$.
  Substituting this in inequality~\eqref{eqn:beethoven} yields
  \begin{align}
    R(f^M) - R^\star
    & \leq \cstMT \epsilon \cdot (R(f^M)-R^\star)^\alpha +
    \mathbb{E}[\indic{\extloss(f^M(X),X) \geq \epsilon} \extloss(f^M(X),X)] .
    \label{eq:noise-risk-agg-3}
  \end{align}
  for all $\epsilon > 0$.
  Because the truncated calibration function $\psi_{A_m}^M$
  is convex,
  inequality~\eqref{eqn:bartok} yields
  \begin{align*}
    \psi_{A_m}^M(\epsilon)
    \E[\indic{\extloss(f^M(X),X) \geq \epsilon}
      \cdot \extloss(f^M(X),X)]
    & \leq \epsilon \cdot \E[\psi_{A_m}^M(\extloss(f^M(X),X))].
  \end{align*}
  Because $0 \le \psi_{A_m}^M \le \psi_{A_m}
  = \wb{\psi}_{A_m}^{**}$,
  we evidently obtain
  \begin{equation*}
    \E\left[\psi_{A_m}^M(\extloss(f^M(X), X))\right]
    \le R_{\sloss,A_m}(f) - R\opt_{\sloss,A_m}.
  \end{equation*}
  By inequality~\eqref{eq:noise-risk-agg-3},
  we therefore have
  \begin{equation*}
    \frac{R(f^M) - R\opt}{\epsilon}
    - \cstMT \left(R(f^M) - R\opt\right)^\alpha
    \le \frac{1}{\epsilon}
    \E\left[\indic{\extloss(f^M(X), X) \ge \epsilon}
      \extloss(f^M(X), X)\right],
  \end{equation*}
  and so multiplying by $\psi_{A_m}^M(\epsilon)$,
  \begin{align}
    \lefteqn{\psi_{A_m}^M(\epsilon)
      \left(\frac{R(f^M) - R^\star}{\epsilon} - \cstMT(R(f^M)-R^\star)^\alpha
      \right)} \nonumber \\
    & \le \frac{\psi_{A_m}^M(\epsilon)}{\epsilon}
    \E\left[\indic{\extloss(f^M(X),X) \geq \epsilon}
      \cdot \extloss(f^M(X),X)\right]
    \le R_{\sloss,A_m}(f^M) - R\opt_{\sloss, A_m}.
    \label{eqn:vpn-disconnected}
  \end{align}
  
  Finally, we use that $\epsilon$ was arbitrary.
  Taking $\epsilon = (R(f^M)-R^\star)^{1-\alpha}/(2\cstMT)$
  in inequality~\eqref{eqn:vpn-disconnected}
  gives
  $\psi_{A_m}^M(\epsilon)
  \cstMT (R(f^M) - R\opt)^\alpha \le
  R_{\sloss,A_m}(f^M) - R\opt_{\sloss,A_m}$.
  Using that
  \begin{equation*}
    D(f, M) \defeq R(f) - R\opt - \P(\kappa(X) > M)
    \le R(f^M) - R\opt
  \end{equation*}
  completes the proof of the lemma.
\end{proof}

We have nearly completed the proof of
Theorem~\ref{thm:level-consistency-condition-number}.
By the condition $R(f) - R\opt \geq 2\Prb(\kappa(X) > M) + (4\cstMT
e_m(M))^{\frac{1}{1-\alpha}}$ we have $D(f, M) \geq (R(f) - R\opt)/2$, while
at the same time
\begin{align*}
  \frac{D(f, M)^{1-\alpha}}{2 \cstMT} \geq 2 e_m(M).
\end{align*}
By convexity, $\psi_{A_m}^M(\epsilon)/\epsilon$ is non-decreasing in
$\epsilon$, so we further have
\begin{align*}
  \cstMT D(f, M)^\alpha \psi_{A_m}^M \prn{\frac{D(f,M)^{1-\alpha}}{2\cstMT}} & \geq \cstMT D(f, M)^\alpha \cdot \frac{D(f,M)^{1-\alpha}}{2\cstMT} \cdot \frac{\psi_{A_m}^M (2 e_m(M))}{2e_m(M)} \nonumber \\
  & = \frac{1}{2} D(f,M) \cdot \frac{1}{4} \constant_{\sloss, 1} \geq \frac{\constant_{\sloss, 1} (R(f) - R\opt)}{16}.
\end{align*}
Substitute the above display into
Lemma~\ref{lem:D-noise-comparison-inequality}.

\subsection{Proof of Corollary~\ref{corollary:level-consistency-optimize-M}}
\label{proof:level-consistency-optimize-M}

Recall that $k = \card(\mc{Y}) < \infty$.
For the binary case that $k=2$, we simply take $M=1$ and as
\begin{align*}
  4\cstMT e_m(1) = \prn{\frac{32 \cstMT^2}{m}\log \prn{ \frac{8(\constant_{\sloss, 1}+\constant_{\sloss, 2})}{\constant_{\sloss, 1}}}}^{\frac{1}{2(1-\alpha)}} =
  \xi_{m,2},
\end{align*}
Theorem~\ref{thm:level-consistency-condition-number} implies the conclusion.

For the general multiclass case, we can bound the tail probability by using
$\kappa(X) \leq 1/\Delta(X)$ and the low noise
condition~\eqref{eqn:mammen-alpha}, as $\Prb(\kappa(X) > M) \leq
\Prb(\Delta(X) \leq 1/M) \leq (\cstMT /
M)^{\frac{\alpha}{1-\alpha}}$.
Therefore, using Theorem~\ref{thm:level-consistency-condition-number},
we only need to prove
\begin{align*}
  \inf_M \{2 \cdot(\cstMT / M)^{\frac{\alpha}{1-\alpha}} +  (4\cstMT e_m(M))^{\frac{1}{1-\alpha}} \} \leq 	4 \cdot  \prn{\frac{32 \cstMT^4}{m}\log \prn{ \frac{4k(\constant_{\sloss, 1}+\constant_{\sloss, 2})}{\constant_{\sloss, 1}}}}^{\frac{\alpha}{2(1-\alpha^2)}}.
\end{align*}
Indeed, we choose the $M$ such that $2(\cstMT / M)^{\frac{\alpha}{1-\alpha}}
= (4\cstMT e_m(M))^{\frac{1}{1-\alpha}}$, which, by substituting in
Eq.~\eqref{eq:error-function-mv}, is equivalent to
\begin{align*}
	M^{-\frac{1+\alpha}{1-\alpha}} = \prn{\frac{32}{m}\log \prn{ \frac{4k(\constant_{\sloss, 1}+\constant_{\sloss, 2})}{\constant_{\sloss, 1}}}}^{\frac{1}{2(1-\alpha)}} \cdot \frac{\cstMT}{2},
\end{align*}
and thus we choose
\begin{align*}
  M =  \prn{\frac{32}{m}\log \prn{ \frac{4k(\constant_{\sloss, 1}+\constant_{\sloss, 2})}{\constant_{\sloss, 1}}}}^{-\frac{1}{2(1+\alpha)}} \cdot \prn{\frac{\cstMT}{2}}^{-\frac{1-\alpha}{1+\alpha}}.
\end{align*}
With this choice
\begin{align*}
  \lefteqn{2\cdot(\cstMT / M)^{\frac{\alpha}{1-\alpha}}+ (4\cstMT e_m(M))^{\frac{1}{1-\alpha}}} \\
  & = 2 \cdot (4\cstMT e_m(M))^{\frac{1}{1-\alpha}} \\
  & =  2 \cdot \prn{\frac{32 \cstMT^2}{m}\log \prn{ \frac{4k(\constant_{\sloss, 1}+\constant_{\sloss, 2})}{\constant_{\sloss, 1}}}}^{\frac{1}{2(1-\alpha)}} \cdot \prn{\frac{32}{m}\log \prn{ \frac{4k(\constant_{\sloss, 1}+\constant_{\sloss, 2})}{\constant_{\sloss, 1}}}}^{-\frac{1}{2(1+\alpha)(1-\alpha)}} \cdot \prn{\frac{\cstMT}{2}}^{-\frac{1}{1+\alpha}} \nonumber \\
  & =  2^{1 + \frac{1}{1+\alpha}} \cdot \prn{\frac{32}{m}\log \prn{ \frac{4k(\constant_{\sloss, 1}+\constant_{\sloss, 2})}{\constant_{\sloss, 1}}}}^{\frac{1}{2(1-\alpha)}} \cdot \prn{\frac{32}{m}\log \prn{ \frac{4k(\constant_{\sloss, 1}+\constant_{\sloss, 2})}{\constant_{\sloss, 1}}}}^{-\frac{1}{2(1+\alpha)(1-\alpha)}} \cdot \cstMT^{\frac{1}{1-\alpha}-\frac{1}{1+\alpha}} \nonumber \\
  & \leq 4 \cdot \prn{\frac{32}{m}\log \prn{ \frac{4k(\constant_{\sloss, 1}+\constant_{\sloss, 2})}{\constant_{\sloss, 1}}}}^{\frac{\alpha}{2(1-\alpha^2)}} \cdot \cstMT^{\frac{2\alpha}{1-\alpha^2}} \nonumber \\
  & = 4 \cdot  \prn{\frac{32 \cstMT^4}{m}\log \prn{ \frac{4k(\constant_{\sloss, 1}+\constant_{\sloss, 2})}{\constant_{\sloss, 1}}}}^{\frac{\alpha}{2(1-\alpha^2)}} = \xi_{m,k}.
\end{align*}

\subsection{Proofs for the identifiable surrogate losses}
\label{sec:surrogate-example-proofs}

\subsubsection{Proof of Lemma~\ref{lemma:binary-feasible}}

That $\phi(\delta) < 0$ is immediate because $\phi$ is non-increasing by
assumption, and the monotonicity properties of convex
functions~\citep[Ch.~1]{HiriartUrrutyLe93} guarantee it strictly
decreases near 0.
For $y \in \{\pm 1\}$, we take
$s_y = \delta y$ and $a_y = y$,
and
\begin{align*}
  &\sloss(s_{y}, a_{y}) + \constant_{\sloss, 1}
  = \phi(\delta) +  \constant_{\sloss, 1} = \phi(0)
  = \inf_{\pred(s) \neq y} \sloss(s, a_y) \\
  &\sloss(s_y, a_{-y})
  - \constant_{\sloss, 2} \leq \phi(-\delta) -\constant_{\sloss, 2} = 0
  = \inf_{s \in \mathbb{R}} \sloss(s, a_y).
\end{align*}
by direct evaluation.

\subsubsection{Proof of Lemma~\ref{lemma:bipartite-feasible}}
  
For each $y \in \sY$, we choose $(s_y, a_y) =
(v(y)/N, y)$.
Observe that for each graph $y \in \mc{Y}$,
\begin{align*}
  \sloss(s_y, a_y)
  & = \max_{\hat y \in \sY}
  \prn{\frac{1}{2N} \ltwo{v(\hat y)- v(y)}^2 +
    \left\< v(\hat y) - v(y), v(y)/N\right\>} \\
  & = \max_{\hat y \in \sY} \prn{\frac{1}{2N} \ltwo{v(\hat y)}^2
    - \frac{1}{2N} \ltwo{v(y)}^2} \\
  & = 0,
\end{align*}
where we use $\ltwo{v(y)}^2 = N$ for all $y \in \sY$.
If $\pred(s) \neq y$, then there exists some $y' \neq y \in \sY$ such that
\begin{align*}
  \< v(y') - v(y), s \> \geq 0.
\end{align*}
This implies
\begin{equation*}
  \sloss(s, a_y)  = \sloss(s, y)
  \geq
  \prn{\frac{1}{2N} \lone{v(y')- v(y)} + \< v(y') - v(y), s\>}
  \geq \frac{1}{2N} \lone{v(y')- v(y)}  \geq \frac{1}{N}
\end{equation*}
because
distinct bipartite matchings differ on at least two edges.
This then implies condition~\eqref{eq:feasible-surrogate-1} holds with
$\constant_{\sloss, 1} = 1/N$.
The second condition~\eqref{eq:feasible-surrogate-2} holds for
$\constant_{\sloss, 2} = 2$ as
\begin{align*}
  \sloss(s_{y'}, a_y) = \sloss(v(y')/N, y) = \max_{\hat y \in \sY}
  \prn{\frac{1}{2N}\lone{v(\hat y)- v(y)} +
    \left\< v(\hat y) - v(y), v(y')/N\right\>}  \leq 2
\end{align*}
whenever $v^T \ones = N$.

\subsubsection{Proof of Lemma~\ref{lemma:structured-feasible}}

By definition of $\tau(y)$, for any $\epsilon > 0$ and each $y \in
\mc{Y}$, we can take $s_y \in \mathcal{S}(y)$ (by using homogeneity
and scaling) such that
\begin{equation}
  \max_{\hat y \neq y} \tloss(\hat y,y) /\< v(y) - v(\hat y), s_y \>
  = 1
  ~~ \mbox{and} ~~
  \min_{\hat y \neq y} \tloss(\hat y,y) /\< v(y) - v(\hat y), s_y \>
  > \frac{1}{\tau(y) + \epsilon}.
  \label{eqn:shostakovich-prelude}
\end{equation}
We take $a_y = y$. 

\paragraph{Controlling $\constant_{\sloss, 1}$.} Because
by assumption $\max_{\hat{y} \neq y} \tloss(\hat{y}, y) / \<v(y) -
v(\hat{y}), s\> = 1$,
\begin{align*}
  \sloss(s_y, y) & = \max_{\hat y \in \sY} \prn{\tloss(\hat y,y) +
    \left\< v(\hat y) - v(y), s_y\right\>}_+ \nonumber \\ & =
  \max_{\hat y \in \sY} \left\{\left\< v(y) - v(\hat y), s_y\right\>
  \cdot \prn{\tloss(\hat y,y) /\< v(y) - v(\hat y), s_y \> - 1}_+
  \right\} = 0.
\end{align*}
For any $s$ such that $\pred(s) \neq y$, there must exist $\hat y \neq y$
such that $\< v(y) - v(\hat y), s \> \geq 0$ and thus
\begin{align*}
  \sloss(s, y) \geq \tloss(\hat y, y) + \< v(y) - v(\hat y), s \>  \geq \min_{\hat y \neq y} \tloss(\hat y, y) = \min_{\hat y \neq y} \tloss(\hat y, y) + 	\sloss(s_y, y).
\end{align*}
Thus we can take $\constant_{\sloss, 1} =\min_{\hat y \neq y}
\tloss(\hat{y}, y)$.

\paragraph{Controlling $\constant_{\sloss, 2}$.} For any $y' \neq y$,
the $s_y$ satisfying inequality~\eqref{eqn:shostakovich-prelude}
yields
\begin{align*}
  \left\< v(\hat y) - v(y), {s_{y'}}\right\> & = \left\< v(\hat y) -
  v(y'), {s_{y'}}\right\> + \left\< v(y') - v(y), {s_{y'}}\right\>
  \leq \left\< v(y') - v(y), {s_{y'}}\right\> - \tloss(\hat{y}, y') \\
  & \le
  (\tau(y') + \epsilon) \cdot \tloss(y, y') - \tloss(\hat{y}, y'),
\end{align*}
By the normalization $0 \leq \ell \leq 1$, we have
\begin{align*}
  \sloss(s_{y'}, a_y) & = \max_{\hat y \in \sY} \prn{\tloss(\hat y,y) + \left\< v(\hat y) - v(y), {s_{y'}}\right\>}_+ \nonumber \\
  & \leq \max_{\hat y \in \sY} \prn{\tloss(\hat y,y) + (\tau(y') + \epsilon) \cdot \tloss(y, y')}_+ \leq \tau(y') + 1 + \epsilon\, .
\end{align*}
As $\epsilon > 0$ was arbitrary, we can take $\constant_{\sloss, 2} =
\max_{y \in \sY} \tau(y) + 1$.

\paragraph{The special case of $\ell_0$ task loss.}
Finally we are left to show if $v(y) \in \left\{0, 1\right\}^d$ and
$\ell(\hat y, y) = \frac{1}{2d} \norm{\hat y - y}_1$, we have $\tau(y) =
1$ for all $y$.
This is trivial in this case, as we can take $s_y = 2v(y)-1$, and for
$\hat{y} \neq y$,
\begin{align*}
  \< v(y) - v(\hat y), s_y\> = \< v(y) - v(\hat y), 2v(y) - 1 \>
  & = \< v(y) - v(\hat y), 2v(y)\> - \norm{v(y)}_2^2 +
  \norm{v(\hat y)}_2^2 \nonumber \\
  & = \norm{v(y)}_2^2 - 2 \< v(y),
  v(\hat y) \> + \norm{v(\hat y)}_2^2 \\
  & = \norm{v(y) -
    v(\hat y)}_2^2 = \norm{v(y) - v(\hat y)}_0 = 2d \ell(\hat y, y)
  ,
\end{align*}
where we again use the fact that for $0$-$1$ features, $\norm{v(y) -
  v(\hat y)}_2^2 = \norm{v(y) - v(\hat y)}_1$.
We see that
$\ell(\hat y, y)/ \< v(y) - v(\hat y), s_y\>$ is a constant and thus
$\tau(y) = 1$ for all $y \in \sY$.


\section{$K$-nearest-neighbors and general aggregation methods}
\label{sec:discussion-knn}

In this section, we adapt the results in Section~\ref{sec:consistency-agg}
to demonstrate a consistency result for $K$-nearest-neighbor methods using
an analogue of majority vote labeling.
We assume the surrogate $\sloss$ is $(\constant_{\sloss,1},
\constant_{\sloss,2})$-identifiable (Def.~\ref{def:feasible}) with
parameters $\{a_y\}_{y \in \mc{Y}}$ and that $k = \card(\mc{Y}) < \infty$.
Given a sample $(X_i, Y_i)_{i = 1}^n$ and a point $x \in \mc{X}$,
sort the indices so that $\dist(X_{(1)}, x) \le \dist(X_{(2)}, x)
\le \cdots \dist(X_{(n)}, x)$ (and label $Y_{(i)}$) similarly.
Then the nearest-neighbor aggregator of a point $x$ is
\begin{equation}
  \label{eqn:knn-aggregator}
  A_{n,K}(x) \defeq a_{\hat{y}},
  ~~
  \hat{y} = \argmin_{y \in \mc{Y}}
  \sum_{i = 1}^K \tloss(y, Y_{(i)}),
\end{equation}
and we define the surrogate risk
\begin{align*}
  R_{\sloss,n,K}(f)
  \defeq \E[\sloss(f(X), A_{n,K}(X))],
\end{align*}
where $A_{n,K}$ implicitly depends on an imagined sample $(X_i,
Y_i)_{i=1}^n$.
We warn the reader that, at some level, the surrogate consistency guarantee
we provide will implicitly essentially show that $K$-nearest-neighbors is
consistent so long as $K \to \infty$ while $K/n \to 0$, a familiar result
for multiclass classification and regression problems~\citep{Stone77,
  DevroyeGyLu96}.


We will demonstrate the following theorem.
\begin{theorem}
  \label{theorem:knn-works}
  Let the loss $\sloss$ be identifiable (Definition~\ref{def:feasible}),
  assume the excess risk~\eqref{eqn:minimal-wrong-excess-risk} satisfies
  $P(\Delta(X) > 0) = 1$.
  Let $K = K(n)$ and $n$ satisfy $K/n \to 0$ and $K \to \infty$ as $n \to
  \infty$.
  Then for all $\epsilon > 0$, there exists $N$ and $\delta > 0$ such that
  for all $n \ge N$,
  \begin{equation*}
    R_{\sloss,n,K}(f) - R\opt_{\sloss,n,K}
    \le \delta
    ~~ \mbox{implies} ~~
    R(f) - R\opt \le \epsilon
  \end{equation*}
  for all measurable $f$.
\end{theorem}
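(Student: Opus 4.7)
}

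The plan is to mimic the two-part structure used in the proof of Theorem~\ref{thm:level-consistency-condition-number}: derive a pointwise calibration lower bound from the identifiability of $\sloss$, and then replace the Hoeffding-based control on majority vote by a consistency statement for $K$-nearest-neighbor averages. Concretely, define the random sample error
\begin{align*}
  q_{n,K}(x) \defeq \Prb\!\left( A_{n,K}(x) \neq a_{y\opt(x)} \right),
\end{align*}
where the probability is over the i.i.d.\ sample $(X_i, Y_i)_{i=1}^n$ used in~\eqref{eqn:knn-aggregator}. The argument of Lemma~\ref{lemma:calib-lower-bound-1} goes through essentially verbatim because it does not depend on how the aggregator $A$ is constructed: for every $x$ and every $s$ with $\pred(s) \neq y\opt(x)$,
\begin{align*}
  \exsloss_{A_{n,K}}(s,x)
  \;\ge\; \constant_{\sloss,1} - (\constant_{\sloss,1}+\constant_{\sloss,2})\, q_{n,K}(x).
\end{align*}

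The main work, and the main obstacle, is to show that $q_{n,K}(x) \to 0$ for $P_X$-almost every $x$ as $n \to \infty$ with $K = K(n) \to \infty$ and $K/n \to 0$. I would appeal to the classical consistency theory of $K$-nearest-neighbor methods~\citep{Stone77,DevroyeGyLu96}: under these scalings, the averaged regression functions $\frac{1}{K}\sum_{i=1}^K \tloss(y, Y_{(i)})$ converge in $L^1(P_X)$ to $\E[\tloss(y, Y) \mid X = x]$ for each $y \in \mc{Y}$. Combined with the assumption $P(\Delta(X) > 0) = 1$ and the finiteness $\card(\mc{Y}) = k < \infty$, a union bound yields $q_{n,K}(x) \to 0$ for $P_X$-a.e.\ $x$; since $q_{n,K}$ is uniformly bounded by $1$, dominated convergence gives $\E[q_{n,K}(X)] \to 0$. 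This step is the only non-routine one, as it bridges the gap between the concentration argument available for i.i.d.\ labels in Lemma~\ref{lemma:concentration-bound-majority-vote} and the dependence structure of nearest-neighbor labels.

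Given this, I would conclude as follows. Fix $\epsilon > 0$ and set the threshold
\begin{align*}
  \tau \defeq \frac{\constant_{\sloss,1}}{2(\constant_{\sloss,1}+\constant_{\sloss,2})},
  \qquad
  B_n \defeq \{x \in \sX : q_{n,K}(x) > \tau\}.
\end{align*}
Markov's inequality together with $\E[q_{n,K}(X)] \to 0$ gives $P(B_n) \to 0$, so one can choose $N$ so large that $P(B_n) \le \epsilon/2$ for all $n \ge N$. For any $x \notin B_n$, the pointwise bound above delivers $\exsloss_{A_{n,K}}(s,x) \ge \constant_{\sloss,1}/2$ whenever $\pred(s) \neq y\opt(x)$, while $\extloss(s,x) \le 1$ because $0 \le \tloss \le 1$. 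Hence on $B_n^c$, $\extloss(f(x), x) \le (2/\constant_{\sloss,1})\, \exsloss_{A_{n,K}}(f(x), x)$, and decomposing the integral as in Proposition~\ref{prop:comparison-inequality-agg} yields
\begin{align*}
  R(f) - R\opt
  \;\le\; P(B_n) + \frac{2}{\constant_{\sloss,1}}\bigl( R_{\sloss,n,K}(f) - R\opt_{\sloss,n,K}\bigr).
\end{align*}
Taking $\delta = \epsilon\,\constant_{\sloss,1}/4$ finishes the proof. The only ingredient that requires genuine analysis beyond bookkeeping is the $K$-NN consistency statement driving $\E[q_{n,K}(X)] \to 0$; everything else is an adaptation of arguments already developed for majority vote in Section~\ref{sec:consistency-agg}.
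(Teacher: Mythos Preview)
Your proposal is correct and follows essentially the same route as the paper: invoke Lemma~\ref{lemma:calib-lower-bound-1} for the pointwise calibration bound, split $\sX$ according to whether $q_{n,K}(x)$ exceeds the threshold $\tau = \frac{\constant_{\sloss,1}}{2(\constant_{\sloss,1}+\constant_{\sloss,2})}$, and appeal to Stone's $K$-NN consistency to show the bad set has vanishing $P_X$-measure, yielding the comparison inequality $R(f)-R\opt \le P(B_n) + \tfrac{2}{\constant_{\sloss,1}}(R_{\sloss,n,K}(f)-R\opt_{\sloss,n,K})$. The paper packages the first two steps as Lemma~\ref{lemma:knn-risk-gap} with $\mc{X}_{n,K}^\gamma = B_n^c$, but the argument is identical; one small caution is that Stone's $L^1$ consistency does not by itself give $q_{n,K}(x)\to 0$ for $P_X$-a.e.\ $x$ (only convergence in $P_X$-probability via Markov), though this is all you actually need for $P(B_n)\to 0$, and the paper is equally informal on this point.
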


The theorem more or less follows from
the following comparison inequality.


\begin{lemma}
  \label{lemma:knn-risk-gap}
  Let $\sloss$ be $(\constant_{\sloss,1},
  \constant_{\sloss,2})$-identifiable, $\gamma > 0$ satisfy $\gamma \le
  \frac{\constant_{\sloss,1}}{2(\constant_{\sloss,1} +
    \constant_{\sloss,2})}$, and define the set
  \begin{equation*}
    \mc{X}_{n,K}^\gamma \defeq
    \{x \in \mc{X} \mid \P(A_{n,K}(x) \neq a_{y\opt(x)}) \le \gamma\}.
  \end{equation*}
  Then for all measurable $f$,
  \begin{equation*}
    R(f) - R\opt \le \frac{2}{\constant_{\sloss,1}}
    \left(R_{\sloss,n,K}(f) - R\opt_{\sloss,n,K}(f)\right)
    + \P(X \not \in \mc{X}_{n,K}^\gamma).
  \end{equation*}
\end{lemma}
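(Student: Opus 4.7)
The plan is to split the expected excess task risk into its contributions from $\mc{X}_{n,K}^\gamma$ and from its complement, control the first piece by a pointwise comparison inequality derived from the identifiability of $\sloss$, and bound the second by the measure $\P(X \not\in \mc{X}_{n,K}^\gamma)$ using only that $\tloss \le 1$.

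The key pointwise ingredient is Lemma~\ref{lemma:calib-lower-bound-1}, which for any aggregator $A$ and any $s$ with $\pred(s) \neq y\opt(x)$ gives
\begin{equation*}
  \delta_{\sloss, A}(s, x) \;\ge\; \constant_{\sloss,1} - (\constant_{\sloss,1} + \constant_{\sloss,2}) \, \P\bigl(A(Z) \neq a_{y\opt(x)}\bigr).
\end{equation*}
Applied to $A = A_{n,K}$ at a point $x \in \mc{X}_{n,K}^\gamma$, the defining bound $\P(A_{n,K}(x) \neq a_{y\opt(x)}) \le \gamma$ together with the hypothesis $\gamma \le \constant_{\sloss,1}/[2(\constant_{\sloss,1}+\constant_{\sloss,2})]$ yields $\delta_{\sloss, A_{n,K}}(f(x), x) \ge \constant_{\sloss,1}/2$ whenever $\pred(f(x)) \neq y\opt(x)$. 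Since $\extloss(f(x), x) \le 1$ (with $\extloss(f(x),x) = 0$ when $\pred(f(x)) = y\opt(x)$), this furnishes the pointwise inequality
\begin{equation*}
  \extloss(f(x), x) \;\le\; \frac{2}{\constant_{\sloss,1}} \, \exslossaggm[n,K](f(x), x)
  \qquad \text{for every } x \in \mc{X}_{n,K}^\gamma,
\end{equation*}
which is the heart of the argument.

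To finish, I would integrate against $P$. On $\mc{X}_{n,K}^\gamma$ I apply the inequality just derived, and on $\mc{X} \setminus \mc{X}_{n,K}^\gamma$ I use the crude bound $\extloss(f(x), x) \le 1$ to pick up the additive term $\P(X \not\in \mc{X}_{n,K}^\gamma)$. Since $\exslossaggm[n,K] \ge 0$ on the complementary set, I may replace the integral of $\exslossaggm[n,K]$ over $\mc{X}_{n,K}^\gamma$ by the full integral, giving $\int \exslossaggm[n,K](f(x),x) \, dP(x) = R_{\sloss,n,K}(f) - R\opt_{\sloss,n,K}$ by a measurable selection argument (the conditional surrogate risk is minimized pointwise, so infima commute with integration under the usual mild measurability condition).

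The only real obstacle is justifying the identity $\int \inf_{s'} R_{\sloss, A_{n,K}}(s' \mid x) \, dP(x) = R\opt_{\sloss,n,K}$, i.e., that pointwise optimization coincides with functional optimization over measurable $f$. This is the same measurability subtlety that is left tacit in Proposition~\ref{prop:comparison-inequality-agg}, and I would handle it the same way here, namely by assuming enough regularity on $\sloss$ that a measurable pointwise minimizer exists (otherwise an arbitrarily small slack $\eta > 0$ can be absorbed and sent to zero at the end).
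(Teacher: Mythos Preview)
Your proposal is correct and follows essentially the same route as the paper: invoke Lemma~\ref{lemma:calib-lower-bound-1} to obtain $\delta_{\sloss,A_{n,K}}(f(x),x)\ge \constant_{\sloss,1}/2$ on $\mc{X}_{n,K}^\gamma$ whenever $\pred(f(x))\neq y\opt(x)$, combine with $\extloss\le 1$ for the pointwise comparison, and then split the integral over $\mc{X}_{n,K}^\gamma$ and its complement. The paper wraps this in a calibration-function formalism and an auxiliary $\epsilon$-threshold that is sent to $0$ at the end, whereas you go directly to the pointwise inequality; your version is slightly cleaner but the substance is identical.
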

\begin{proof}
  For $n, K \in \N$, define the pointwise risk gap
  \begin{equation*}
    \knnriskgap(s, x)
    \defeq \E\left[\sloss(s, A_{n,K}(x))\right]
    - \inf_{s'} \E\left[\sloss(s, A_{n,K}(x))\right],
  \end{equation*}
  where the expectation is over the nearest-neighbor
  aggregation~\eqref{eqn:knn-aggregator}, and for $\epsilon > 0$ define the
  pointwise calibration function
  \begin{equation*}
    \psi_{n,K}(\epsilon, x)
    \defeq \inf_{s \in \R^d}
    \left\{\knnriskgap(s, x) \mid \extloss(s, x) \ge \epsilon \right\}.
  \end{equation*}
  Because Lemma~\ref{lemma:calib-lower-bound-1} (in the proof of
  Theorem~\ref{thm:level-consistency-condition-number}) holds for any
  aggregation method, we see that
  \begin{equation*}
    \psi_{n,K}(\epsilon, x)
    \ge \constant_{\sloss,1}
    - (\constant_{\sloss,1} + \constant_{\sloss, 2}) \P(A_{n,K}(x) \neq a_{y\opt(x)})
  \end{equation*}
  for all $x \in \mc{X}$ and $\epsilon > 0$.
  Let $\mc{X}^\gamma = \mc{X}^\gamma_{n,K}$ for shorthand.
  Then in particular, because $\gamma > 0$ is small enough that
  $(\constant_{\sloss,1} + \constant_{\sloss,2}) \gamma < \constant_{\sloss,
    1} / 2$, we have $\psi_{n,K}(\epsilon, x) \ge \half \constant_{\sloss,1}$
  for $x \in \mc{X}^\gamma$.
  As a consequence, we can expand the risk
  \begin{align*}
    R(f) - R\opt
    & = \E[\extloss(f(X), X)] \\
    & \le \E\left[\extloss(f(X), X) \indic{\extloss(f(X), X) \ge \epsilon}\right]
    + \epsilon \\
    & \le \E\left[\extloss(f(X), X)
      \indic{\extloss(f(X), X) \ge \epsilon,
        X \in \mc{X}^\gamma}\right]
    + \P(X \not \in \mc{X}^\gamma)
    + \epsilon.
  \end{align*}
  As we assume $\tloss \le 1$, we see that
  $\psi_{n,K}(\extloss(f(x), x), x)
  \ge \half \constant_{\sloss,1} \cdot \extloss(f(x), x)$ when
  $\extloss(f(x), x) \ge \epsilon$
  and $x \in \mc{X}^\gamma$, giving the upper bound
  \begin{align*}
    R(f) - R\opt
    & \le \frac{2}{\constant_{\sloss,1}}
    \E\left[
      \psi_{n,K}(\extloss(f(X), X))
      \indic{X \in \mc{X}^\gamma}
      \right]
    + \P(X \not \in \mc{X}^\gamma)
    + \epsilon \\
    & \le \frac{2}{\constant_{\sloss,1}} \left(R_{\sloss,n,K}(f)
    - R\opt_{\sloss,n,K}(f)\right)
    + \P(X \not \in \mc{X}^\gamma) + \epsilon.
  \end{align*}
  As $\epsilon > 0$ was arbitrary we obtain the lemma.
\end{proof}

By Lemma~\ref{lemma:knn-risk-gap}, it is therefore sufficient
to show that for any fixed $\gamma > 0$,
$\P(X \not \in \mc{X}_{n,K}^\gamma) \to 0$.
But for this, we can simply rely on the results of \citet{Stone77}: by his
Theorems 1 and 2, because $\mc{Y}$ is finite,
$K$-nearest neighbors (when $K = K(n)$ satisfies $K \to \infty$ and
$K/n \to 0$) is consistent for estimating the conditional
distribution of $P(Y = y \mid X = x)$.
Because $\Delta(X) > 0$ with probability 1,
we see that $\P(A_{n,K}(x) \neq a_{y\opt(x)}) \to 0$ for all $x$ except
perhaps a null set,
and so \citeauthor{Stone77}'s results imply
$\P(X \not \in \mc{X}_{n,K}^\gamma) \to 0$.

\section{Proofs associated with model-based consistency}

\subsection{Proof of Proposition~\ref{proposition:failure-binary}}
\label{proof:failure-binary}

\newcommand{\pointmass}{\boldsymbol{\delta}}

We begin by considering the distribution $\dstr_{x_1, x_2}$, whose
$X$-marginal is supported only on two data points $\{x_1, x_2\} \subset
\mathcal{X}$.
The key idea is that by carefully choosing $x_1, x_2$ and the conditional
distribution of $Y \mid X=x$, the conditional surrogate losses
\begin{equation*}
  R_\sloss(t \mid x_i) \defeq
  \Ep [\sloss(f_{t \theta\opt}(X), Y) \mid X = x_i]
  = \Ep[\phi(Y \<t \theta\opt, X\>) \mid X= x_i],
  ~~~ i = 1, 2,
\end{equation*}
attain their minima at distinct $t$, and if their is a $\theta \not \in
\linspan\{\theta\opt\}$ for which $\E[\sloss(f_\theta(X), Y) \mid X = x_i] =
\inf_t \E[\phi(Y t) \mid X = x_i]$ for each $i$, then $f_\theta$ would
attain less surrogate risk than any point in $\linspan\{\theta\opt\}$.
To guarantee that $R(\theta) = P(Y\<X, \theta\> \le 0)$ has a unique up to
scaling---that only points in $\linspan\{\theta\opt\}$ minmize $R$---we
perturb
$P_{x_1, x_2}$ slightly by defining $P$ to have $X$-marginal
\begin{equation*}
  P(X \in \cdot) = \frac{1 - \delta}{2}
  \left[\pointmass_{x_1} + \pointmass_{x_2}\right]
  + \delta \normal(0, I_d),
\end{equation*}
where $\pointmass_x$ denotes a point mass at $x$ and $\delta \ge 0$ is a
value to be chosen.

\paragraph{The construction of $P_{x_1,x_2}$ and $P$.}
Without loss of generality, we take $\theta\opt = e_1$, the first
canonical basis vector.
For a value $\beta > 0$ to be defined, define the $Y$ conditional
probability
\begin{align*}
  \eta_\beta(x) = P(Y = 1 \mid X = x)
  \defeq \min \left\{\hinge{\half + \<x, e_1\>
  \prn{\beta|\<x, e_2\>| + 1}}, 1\right\}
\end{align*}
which projects $\half + \<x, e_1\>(\beta |\<x, e_2\>| + 1)$ onto $[0, 1]$
and satisfies $\eta_\beta(x) < \half$ if and only if $\<x, e_1\> < 0$ and
$\eta_\beta(x) > \half$ if and only if $\<x, e_1\> > 0$.
With this construction, $\theta\opt = e_1$ is evidently the unique unit
vector $u \in \sphere^{d-1}$ satisfying $\sgn(\<x, u\>) = \sgn(\eta_\beta(x)
- 1/2)$ for all $x$, so for any $\theta \not \in \linspan\{\theta\opt\}$,
\begin{align*}
  R(f_{\theta}) > R(f_{\theta\opt}).
\end{align*}

We can now provide the explicit construction of the distribution $P$.
Assume we may take the two points $x_1, x_2$ to satisfy
$\eta_\beta(x_1) = \frac{2}{3}$ and $\eta_\beta(x_2) = \frac{1}{3}$.
Then defining
\begin{align*}
  g_\phi(t) = \frac{2}{3}\phi(t) +\frac{1}{3} \phi(-t),
\end{align*}
which is a coercive convex function (and so has a compact interval of
minimizers), we write the surrogate risk of a vector $\theta$ for $P_{x_1,
  x_2}$ (recalling that $P(Y = y \mid x) = \eta_\beta(x)$) as
\begin{align*}
  \E_{P_{x_1,x_2}}[\sloss(f_\theta(X), Y)]
  & = \half g_\phi(\<\theta, x_1\>) + \half g_\phi(-\<\theta, x_2\>).
\end{align*}
By direct calculation, for any $\alpha > \half$ and $\beta > 0$,
the choices
\begin{equation}
  \label{eqn:particular-x-points}
  x_1 = \frac{1}{6} e_1
  ~~ \mbox{and} ~~
  x_2 = -\frac{1}{12 \alpha} e_1 + \frac{2 \alpha - 1}{\beta} e_2
\end{equation}
guarantee $\eta_\beta(x_1) = \frac{2}{3}$ and $\eta_\beta(x_2) = \frac{1}{3}$.

\paragraph{Minimizing surrogate risk along certain direction.}
We wish to show that the surrogate attains its minimum along
a direction $u$ nearly perpendicular to $\linspan\{\theta\opt\}$.
Let $u \in \sphere^{d-1}$ have coordinates
$u_j = \<u, e_j\>$.
We shall prove the following lemma:
\begin{lemma}
  \label{lemma:minimizers-small-u2}
  Let $x_1, x_2$ have definition~\eqref{eqn:particular-x-points}, $\beta >
  0$, and $P$ be defined as above.
  Then $\theta\opt = e_1$ yields $R(f_{\theta\opt}) = R\opt = \inf_f R(f)$,
  and there is a constant $C_\phi$ depending only on $\phi$ such that if
  $|u_2| \le C_\phi \beta |u_1|$, then
  \begin{equation*}
    \inf_t \E[\phi(Y t\<u, X\>)]
    > \inf_\theta \E[\phi(Y \<\theta, X\>)].
  \end{equation*}
\end{lemma}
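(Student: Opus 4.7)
The plan is to verify Bayes optimality of $e_1$, establish a strict surrogate gap on the two-point portion of $P$, and absorb the Gaussian perturbation by taking $\delta$ small. Bayes optimality follows immediately from the construction of $\eta_\beta$, which yields $\sgn(\eta_\beta(x) - 1/2) = \sgn(\<x, e_1\>)$ on all of $\R^d$; hence $\theta\opt = e_1$ attains $R\opt$ simultaneously under the point masses at $x_1, x_2$ and under the Gaussian marginal, giving $R(f_{\theta\opt}) = R\opt$.

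For the main claim, I would first reduce to the two-point marginal $P_{x_1, x_2}$. Using $\eta_\beta(x_1) = 2/3$ and $\eta_\beta(x_2) = 1/3$ together with the definitions of $g_\phi$, $x_1$, and $x_2$, a direct calculation gives
\begin{align*}
  \E_{P_{x_1,x_2}}[\phi(Y t \<u, X\>)]
  = \tfrac{1}{2}\, g_\phi(t a_u) + \tfrac{1}{2}\, g_\phi(t b_u),
\end{align*}
where $a_u = u_1/6$ and $b_u = u_1/(12\alpha) - (2\alpha - 1) u_2/\beta$. Since $g_\phi$ is convex and coercive with some minimizer $t^\star > 0$, the convexity inequality $\half g_\phi(t a_u) + \half g_\phi(t b_u) \ge g_\phi(t(a_u+b_u)/2)$ combined with minimization over $t$ shows $\inf_\theta \E_{P_{x_1,x_2}}[\phi(Y\<\theta, X\>)] = g_\phi(t^\star)$, attained only along the one direction $u_2/u_1 = -\beta/(12\alpha)$ that makes $a_u = b_u$. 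A direct computation gives $a_u - b_u = -(2\alpha-1)[u_1/(12\alpha) + u_2/\beta]$, so the hypothesis $|u_2| \le C_\phi \beta |u_1|$ with $C_\phi < 1/(12\alpha)$ forces $|a_u - b_u| \gtrsim |u_1|$ uniformly in $u$; local strict convexity of $g_\phi$ at $t^\star$ then upgrades this separation into a quantitative gap $\inf_t \half[g_\phi(t a_u) + g_\phi(t b_u)] \ge g_\phi(t^\star) + c$ for a constant $c > 0$ depending only on $\phi$, once $\alpha$ is fixed (e.g., $\alpha = 1$). Finally, the full risk under $P$ decomposes as $(1-\delta) R_\sloss^{(12)}(\theta) + \delta R_\sloss^{\mathsf{N}}(\theta)$, and for $\delta$ sufficiently small the gap $c$ dominates any Gaussian contribution, preserving $\inf_t \E_P[\phi(Y t \<u, X\>)] > \inf_\theta \E_P[\phi(Y\<\theta, X\>)]$ as desired.

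The main obstacle is extracting the quantitative gap $c$ from the non-strict convexity inequality $\half g_\phi(t a_u) + \half g_\phi(t b_u) \ge g_\phi(t(a_u+b_u)/2)$: the inequality is tight precisely when $a_u = b_u$, so converting ``not equal'' into a uniform positive lower bound requires controlling the local modulus of strict convexity of $g_\phi$ at $t^\star$. Isolating the right local regularity assumption on $\phi$ so that $C_\phi$ depends only on $\phi$, and then propagating the strict inequality through the Gaussian perturbation without losing control of $c$, is the core technical work.
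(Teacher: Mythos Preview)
Your overall structure---reduce to the two-point marginal, exhibit a strict surrogate gap there, then absorb the Gaussian by choosing $\delta$ small---matches the paper. The genuine gap is in the middle step. Your argument for the strict gap rests on ``local strict convexity of $g_\phi$ at $t^\star$,'' and you fix $\alpha = 1$. Neither is justified: the lemma assumes only that $\phi$ is convex with $\phi'(0) < 0$, so $g_\phi$ can have a flat minimum $\argmin g_\phi = [a,b]$ with $a < b$. In that case your Jensen bound $\tfrac{1}{2}g_\phi(ta_u) + \tfrac{1}{2}g_\phi(tb_u) \ge g_\phi\bigl(t(a_u+b_u)/2\bigr)$ only gives $\ge g_\phi^\star$, never strict, and more damagingly, with $\alpha = 1$ the ratio $b_u/a_u$ is pinned near $1/2$, so whenever $b/a \ge 2$ there exist $t$ putting both $ta_u$ and $tb_u$ inside $[a,b]$ simultaneously, collapsing your gap to zero. (There is also a sign slip: $a_u - b_u = +(2\alpha-1)[u_1/(12\alpha) + u_2/\beta]$, not the negative.)

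The paper's fix is not to add a regularity hypothesis on $\phi$ but to \emph{choose} $\alpha = b/a$ in the construction of $x_2$. With this choice, the constraint $|u_2| \le C_\phi\beta|u_1|$ forces the ratio of the two arguments to lie strictly below $1/\alpha = a/b$, so for no $t$ can both $ta_u$ and $tb_u$ land in $[a,b]$. The strict gap $C_{\phi,\alpha} > g_\phi^\star$ then follows from a compactness argument: the function $h_\phi(w_1,w_2) = \tfrac{1}{2}\inf_t\{g_\phi(tw_1) + g_\phi(tw_2)\}$ is continuous and homogeneous of degree zero on $\R^2\setminus\{0\}$, so its infimum over the compact arc $\{w_1^2+w_2^2 = 1,\, w_2 \le \tfrac{3}{4\alpha}w_1\}$ is attained and strictly exceeds $g_\phi^\star$. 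No local modulus of convexity is needed.
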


We turn to the proof of the lemma.
Using the choices~\eqref{eqn:particular-x-points}
of $x_1$ and $x_2$ and defining $s_1 = \<u, x_1\> = \frac{1}{6} u_1$
and $s_2 = \<u, x_2\> = \frac{u_1}{12 \alpha}$,
it follows that for any $t \in \R$,
\begin{align*}
  \Ep_{\dstr_{x_1, x_2}}  \brk{\phi(Y \<tu , X\>)}
  & = \half\prn{\frac{2}{3} \phi\prn{t s_1} + \frac{1}{3} \phi\prn{-t s_1}}
  + \half \prn{ \frac{2}{3} \phi \prn{t s_2 } + \frac{1}{3} \phi\prn{-t s_2}}
  \\
  & = \half (g_\phi(ts_1) + g_\phi(ts_2)).
\end{align*}
For $w_1, w_2 \in \R$,
define the parameterized function
\begin{align*}
  h_\phi(w_1, w_2) = \half \inf_{t \in \real} \left\{g_\phi(tw_1)
  + g_\phi(tw_2)\right\},
\end{align*}
which corresponds to the minimal value of the risk $t \mapsto
\E_{P_{x_1,x_2}}[\phi(Y \<t \theta, X\>)]$ when $w_1 = \<\theta, x_1\>$ and
$w_2 = \<\theta, x_2\>$ for some vector $\theta \in \R^d$.
The convexity and coercivity of $g_\phi$ imply that $h_\phi(w_1, w_2)$ is
continuous on $\real^2 \backslash \left\{0\right\}$, it is homogeneous in
that $h_\phi(tw_1, tw_2) = h_\phi(w_1, w_2)$ for all $t \neq 0$, and
by construction,
\begin{align*}
  \inf_{t \in \real} \Ep_{\dstr_{x_1, x_2}}  \brk{\phi(Y \<tu , X\>)}
  = h_\phi(s_1, s_2).
\end{align*}
Moreover, it is immediate that
\begin{equation*}
  g_\phi\opt \defeq \inf_t g_\phi(t) = \inf_{w_1^2 + w_2^2 = 1}
  \inf_t \half \left\{g_\phi(t w_1) + g_\phi(t w_2)\right\}
  = \inf_{w_1^2 + w_2^2 = 1} h_\phi(w_1, w_2).
\end{equation*}

Let $\mc{G} = [a, b] = \argmin_t g_\phi(t)$, where
we must have $0 < a \le b < \infty$ as $\phi'(0) < 0$.
Then we set the value $\alpha \defeq \frac{b}{a} \ge 1$ in the
definition~\eqref{eqn:particular-x-points} of the points $x_1, x_2$.
Let $w_2 < \frac{1}{\alpha} w_1$; then if $w_1 \in \mc{G}$,
we must have $w_2 < \frac{b}{\alpha} = a$, and so
$w_2 \not \in \mc{G}$, and so at least one of
$w_1, w_2 \not \in \mc{G}$.
Enforcing the strict inequality $w_2 < \frac{1}{\alpha} w_1$,
we see that
\begin{align*}
  C_{\phi,\alpha}
  \defeq \inf_{\substack{w_1^2 + w_2^2 =1, \\ w_1 \geq 0, w_2 \leq \frac{3}{4\alpha} w_1}}
  h_\phi(w_1, w_2) >  \inf_{w_1^2 + w_2^2 = 1} h_\phi(w_1, w_2) = g_\phi\opt.
\end{align*}
Rewriting this in terms of the unit vector $u$ we have been considering,
whenever $|u_2| \leq \frac{\beta u_1}{24\alpha(2\alpha-1)}$,
\begin{align*}
  s_2 \leq
  \frac{u_1}{12\alpha} +
  \frac{2\alpha-1}{\beta} \cdot \frac{\beta u_1}{24\alpha(2\alpha-1)}
  \leq \frac{u_1}{8 \alpha} = \frac{3}{4\alpha} s_1,
\end{align*} 
and in this case
\begin{align*}
  \inf_{t \in \R} \Ep_{\dstr_{x_1, x_2}}  \brk{\phi(Y \<tu , X\>)}
  = h_\phi(s_1, s_2) \geq
  \inf_{\substack{w_1^2 + w_2^2 =1, \\ w_1 \geq 0, w_2 \leq \frac{3}{4\alpha} w_1}}
  h_\phi(w_1, w_2) = C_{\phi,\alpha} > g_\phi\opt.
\end{align*}

Now we restruct $u \in \sphere^{d-1}$ to the collection of vectors
satisfying $|u_2| \le \frac{\beta}{24 \alpha(2\alpha - 1)} |u_1|$, and show
that if $\theta \in \linspan\{u\}$, the surrogate risk cannot attain its
minimum.
Indeed, recalling the construction~\eqref{eqn:particular-x-points},
the matrix
\begin{align*}
  \begin{bmatrix}
    1/6 & -1/(12\alpha) \\
    0 & (2\alpha-1)/\beta
  \end{bmatrix}
\end{align*}
is invertible and we can find a $\wb{\theta}$ such that $\<\wb{\theta},
x_1\> = c, \<\wb{\theta}, x_2\> = -c$ for some value
$c \in \mc{G} = \argmin g_\phi$, implying
\begin{align*}
  \inf_{t \in \R} \Ep_{\dstr} \brk{\phi(Y \<t u, X\>)}
  & \geq (1-\delta) \inf_{t \in \R} \Ep_{\dstr_{x_1, x_2}} \brk{\phi(Y \<t u, X\>)}
  =(1- \delta) C_{\phi,\alpha} \\
  \inf_{\theta \in \real^d} \Ep_{\dstr} \brk{\phi(Y \<\theta, X\>)}
  & \leq \Ep_{\dstr} \brk{\phi(Y \<\wb{\theta}, X\>)}
  = (1-\delta) g_\phi\opt +
  \delta  \Ep_{\normal(0, I_d)} \brk{\phi(Y \<\wb{\theta}, X\>)}.
\end{align*}
By taking $\delta$ sufficiently small and using
$C_{\phi,\alpha} > g_\phi\opt$, we conclude that
Lemma~\ref{lemma:minimizers-small-u2} holds with
$C_\phi = \frac{1}{24 \alpha(2 \alpha - 1)}$
and recognizing that $\alpha > \half$ was otherwise arbitrary.

\paragraph{Controlling the angle between $\theta_\sloss$ and $\theta\opt$.}
By Lemma~\ref{lemma:minimizers-small-u2},
there exists a constant $C_\phi$ such that for any $\beta > 0$,
we can construct a distribution $P$ for which
any minimizer $\theta_\sloss$ of the surrogate risk must satisfy
\begin{align*}
  |\< \theta_\sloss, e_2 \> |
  \geq C_\phi \beta \cdot | \< \theta_\sloss, e_1 \> |
  ~~ \mbox{and} ~~ |\< \theta_\sloss, e_2 \> | > 0 \, .
\end{align*}
Now we specify the parameter $\beta$, taking $\beta =
\frac{1}{C_\phi} \sqrt{\frac{1}{\epsilon^2} - 1}$.
Then evidently
\begin{align*}
  \<\theta_\sloss, e_2 \>^2 \geq
  \prn{\frac{1}{\epsilon^2} - 1} \cdot \<\theta_\sloss, e_1 \>^2,
\end{align*}
which combined with $\theta\opt = e_1$ implies
\begin{equation*}
  \left|\cos \angle (\theta_\sloss, \theta\opt) \right|
  = \frac{| \< \theta_\sloss, e_1 \> |}{ \|\theta_\sloss\|_2}
  \leq \frac{|\<\theta_\sloss, e_1 \> |}{
    \sqrt{| \<\theta_\sloss, e_1 \> |^2 + | \<\theta_\sloss, e_2 \> |^2}}
  \leq \frac{1}{\sqrt{1 + \frac{1}{\epsilon^2} - 1}} = \epsilon.
\end{equation*}
Because $\theta_\sloss \not \in \linspan\{\theta\opt\}$, we see that
$R(f_{\theta_\sloss}) > R(f_{\theta\opt}) = \inf_f R(f)$, completing the
proof of Proposition~\ref{proposition:failure-binary}.

\subsection{Proof of Theorem~\ref{thm:restricted-class}}
\label{proof:restricted-class}

\newcommand{\gapfunc}{\wb{\psi}_\sloss}  
\newcommand{\convgapfunc}{\psi_\sloss}  
\newcommand{\simplex}{\mc{P}}  

Let $\simplex_k = \{p \in \R^k_+ \mid \<\ones, p\> = 1\}$ by the probability
simplex in $\R^k$.
For $p \in \simplex_k$, define the risk gaps
\begin{equation*}
  \exsloss(s, p) \defeq
  \E_p[\sloss(s, Y)] - \inf_s \E_p[\sloss(s, Y)]
  ~~ \mbox{and} ~~
  \extloss(s, p) \defeq
  \E_p[\tloss(\pred(s), Y)]
  - \inf_s \E_p[\tloss(\pred(s), Y)]
\end{equation*}
and the gap functional
\begin{equation*}
  \gapfunc(\epsilon, p)
  \defeq \inf_s \left\{\exsloss(s, p) \mid \extloss(s, p) \ge \epsilon\right\}.
\end{equation*}
By the assumption that $\sloss$ is consistent, it is
immediate~\citep{Steinwart07} that $\gapfunc(\epsilon, p) > 0$ for all $p \in
\simplex_k$ and $\epsilon > 0$.
Moreover, consistency implies~\citep{Zhang04a} that if
$p_{(1)} \ge p_{(2)} \ge \cdots \ge p_{(k)}$ denotes the order
statistics of $p \in \simplex_k$, when we define the subset
\begin{equation*}
  \simplex_{k,c}
  \defeq \{p \in \simplex_k
  \mid p_{(1)} \ge p_{(2)} + c \}
\end{equation*}
of well-separated distributions, then for all $c > 0$ we have
the strict inequality
\begin{equation*}
  \inf_{p \in \simplex_{k,c}} \gapfunc(\epsilon, p) > 0
  ~~ \mbox{when}~ \epsilon > 0.
\end{equation*}

For $m \in \N$, let $P_m(\cdot \mid x)$ denote the
induced distribution on the majority vote $Y_m^+ \defeq
\mbox{Majority}(Y_1^m)$ for $Y_i \simiid P(Y \in \cdot \mid X = x)$,
so that if $\Delta(x) > 0$ we see that
$Y_m^+ \to y\opt(x)$ with probability 1.
Then
\begin{align*}
  \extloss(s, P_m(\cdot \mid x))
  & = \E_{P_m}[\indic{\pred(s) \neq Y_m^+} \mid x]
  - (1 - P(Y_m^+ = y\opt(x) \mid x)) \\
  & = \begin{cases}
    0 & \mbox{if}~ \pred(s) = y\opt(x) \\
    P(Y_m^+ = y\opt(x) \mid x) - P(Y_m^+ = \pred(s) \mid x)
    & \mbox{otherwise}.
  \end{cases}
\end{align*}
In particular, for $P$-almost-all $x$, we see that
$\extloss(s, P_m(\cdot \mid x)) \to \indic{\pred(s) = y\opt(x)}$
as $m \to \infty$.
Now, fix $c > 0$ and define
\begin{equation*}
  \gapfunc(\epsilon) \defeq
  \inf_{p \in \simplex_{k,c}} \gapfunc(\epsilon, p)
  ~~ \mbox{and} ~~
  \convgapfunc(\epsilon) \defeq
  \gapfunc^{**}(\epsilon),
\end{equation*}
the convex conjugate of the gap functional on well-separated
distributions.
Then~\citet[Prop.~25]{Zhang04a} shows that $\gapfunc^{**}(\epsilon) > 0$
whenever $\gapfunc(\epsilon) > 0$.

We now consider the gaps in the surrogate risk $R_{\sloss,A_m}(f) -
R_{\sloss,A_m}\opt$.
Letting $c > 0$, define
\begin{equation*}
  \mc{X}_{c,m} \defeq \left\{x \mid
  P_m(Y^+_m \in \cdot \mid X = x) \in \simplex_{k,c}\right\}
\end{equation*}
to be those $x \in \mc{X}$ for which the majority vote is likely correct.
Then
\begin{align*}
  \lefteqn{R_{\sloss, A_m}(f) - R_{\sloss, A_m}\opt} \\
  & = \E[\exsloss(f(X), P_m(\cdot \mid X))] \\
  & \ge \E[\gapfunc(\extloss(f(X), P_m(\cdot \mid X)), P_m(\cdot \mid X))] \\
  & \ge \E\left[\indic{X \in \mc{X}_{c,m}}
    \convgapfunc(\extloss(f(X), P_m(\cdot \mid X)))
    + \indic{X \not\in \mc{X}_{c,m}}
    \gapfunc(\extloss(f(X), P_m(\cdot \mid X)))\right] \\
  & \ge \E\left[\indic{X \in \mc{X}_{c,m}}
    \convgapfunc(\extloss(f(X), P_m(\cdot \mid X)))\right].
\end{align*}
Using Jensen's inequality that for any convex $h$, random variable
$Z$, and set $A$,
$\E[\indic{Z \in A} h(Z)]
= \E[h(Z) \mid Z \in A] P(Z \in A)
\ge h(\E[Z \mid Z \in A]) P(Z \in A)$,
we therefore obtain that
\begin{align}
  \nonumber
  R_{\sloss, A_m}(f) - R_{\sloss, A_m}\opt
  & \ge \convgapfunc\left(
  \E\left[\extloss(f(X), P_m(\cdot \mid X)) \mid X \in \mc{X}_{c,m}\right]
  \right) P(X \in \mc{X}_{c,m}) \\
  & = \convgapfunc\left(
  \frac{R(f) - R\opt - \E[\extloss(f(X), P_m(\cdot \mid X))
      \indic{X \not\in \mc{X}_{c,m}}]}{
    P(X \in \mc{X}_{c,m})}\right)
  P(X \in \mc{X}_{c,m}) \nonumber \\
  & \ge
  \convgapfunc\left(
  \frac{\hinge{R(f) - R\opt - P(X \not\in \mc{X}_{c,m})}}{
    P(X \in \mc{X}_{c,m})}\right)
  P(X \in \mc{X}_{c,m}).
  \label{eqn:majority-risk-gap}
\end{align}

Let $R_{\sloss,\infty}(f) = \E[\sloss(f(X), y\opt(X))]
= \lim_{m \to \infty} \E[\sloss(f(X), Y_m^+)]$.
Then if $f\opt \in \mc{F}$ is any
function with $\argmax_y f_y\opt(x) = y\opt(x)$ (for $P$-almost all $x$),
we evidently obtain
\begin{equation*}
  \lim_{t \to \infty} R_{\sloss,\infty}(t f\opt) = 0
\end{equation*}
by dominated convergence, as by
assumption~\eqref{eqn:multiclass-surrogate-limit} we have $\sloss(t
f\opt(x), y\opt(x)) \to 0$ as $t \to \infty$ for almost all $x$.
Let $\epsilon > 0$ be arbitrary
and take any $t < \infty$ large enough that $R_{\sloss,\infty}(t f\opt) \le
\epsilon$.
Then because $R_{\sloss, A_m}(t f\opt) \to R_{\sloss,\infty}$ as $m \to \infty$,
for the sequence $f_m \in \epsargmin[m] R_{\sloss,A_m}$,
we obtain
\begin{align*}
  R_{\sloss, A_m}(f_m) \le R_{\sloss,m}(t f\opt) + \epsilon_m
  \to R_{\sloss,\infty}(t f\opt) \le \epsilon.
\end{align*}
Substituting into inequality~\eqref{eqn:majority-risk-gap},
we have
\begin{align*}
  \epsilon \ge \limsup_m R_{\sloss, A_m} - R_{\sloss,A_m}\opt
  & \ge
  \limsup_m \convgapfunc\left(
  \frac{\hinge{R(f_m) - R\opt - P(X \not\in \mc{X}_{c,m})}}{P(X \not \in
    \mc{X}_{c,m})}\right)
  P(X \in \mc{X}_{c,m}).
\end{align*}
Because $P(X \not \in \mc{X}_{c,m}) \to 0$ by assumption,
if $\limsup_m R(f_m) - R\opt = \delta > 0$, we would obtain
\begin{equation*}
  \epsilon \ge \convgapfunc(\delta).
\end{equation*}
But $\convgapfunc(\delta) > 0$ for $\delta > 0$, and $\epsilon > 0$ was
arbitrary, so it must be that
$\limsup_m R(f_m) = R\opt$.

\section{Proofs for mis-specified models}

\subsection{Proof of
  Proposition~\ref{proposition:consistency-failure-multiindex}}
\label{sec:proof-consistency-failure-multiindex}

\newcommand{\sigmauni}{\sigma^{\textup{uni}}}

We assume the result of Theorem~\ref{thm:maj-multiindex},
as its proof does not depend on the current proposition.
To simplify the proof and work with square matrices, we assume
w.l.o.g.\ that $\Theta\opt = U\opt T\opt$, where $U\opt \in \R^{d \times
  (k-1)}$ is orthogonal, and we may w.l.o.g.\ take $T\opt$ to be diagonal,
with $T\opt = \diag(t_1\opt, \ldots, t_{k-1}\opt)$, and let
$\Theta_1(\epsilon) = U^\star T_1(\epsilon)$.
It suffices to show that $T_1(\epsilon) / \norm{T_1(\epsilon)} \neq T^\star /
\norm{T^\star}$.
For simplicity, we suppress the dependence on $m = 1$ and
write $T(\epsilon) = T_1(\epsilon)$, $\Theta(\epsilon) =
\Theta_1(\epsilon)$, and let
$T_{ij}(\epsilon)$ denote the entries of $T(\epsilon)$.
As $X \sim \normal(0, I_d)$, it follows that ${U\opt}^\top X \sim \normal(0,
I_{k-1})$, so that the stationary conditions for $\Theta(\epsilon)$
equivalently state that for $Z \sim \normal(0, I_{k-1})$,
\begin{align*}
  \nabla_\Theta L_{1,\epsilon}(\Theta(\epsilon))
  = \E \brk{Z \prn{ \sigma\lr(T(\epsilon)^\top Z)
      - \sigma\lre({T\opt}^\top Z)}^\top} = \zeros_{d \times (k-1)}.
\end{align*}
Let $\mc{T} \subset \R^{k-1}$ be a set
to be chosen, and write
$\sigma (t) = \sigma\lr (t) \indic{t \not \in \mc{T}}
+ \sigmauni \cdot \indic{t \in \mc{T}}$,
where $\sigmauni = \frac{1}{k} \ones$ denotes the uniform distribution.
Then equivalently,
\begin{align*}
  \Ep \brk{Z \prn{ \sigma\lr(T(\epsilon)^\top Z)
      - \sigma\lr({T\opt}^\top Z)}^\top}
  + \underbrace{\Ep \brk{Z \prn{ \sigma\lr({T\opt}^\top Z)
        - \sigmauni}^\top \indic{{T\opt}^\top Z \in \mc{T}}}}_{
    =: A(\mc{T})} = \zeros.
\end{align*}
For small $\epsilon > 0$, we can always choose disjoint $\mc{T}_\epsilon$
and $\mc{T}_{-\epsilon}$ with $P({T\opt}^\top Z \in \mc{T}_\epsilon),
P({T\opt}^\top Z \in \mc{T}_{-\epsilon}) \le \epsilon$ while the matrices
$A(\mc{T}_\epsilon)$ and $A(\mc{T}_{-\epsilon})$ belong to distinct rays,
that is, are not positive multiples of one another.
Indeed, as the rank one matrix ${T\opt}^\top Z(\sigma\lr({T\opt}^\top Z) -
\sigmauni)^\top$ is non-constant whenever $k \geq 3$, we can find a matrix
$Q \in \R^{(k-1) \times (k-1)}$ such that the sets
\begin{align*}
  \mc{T}_+ & = \left\{{T\opt}^\top z \mid \tr\left(Q^\top
  {T\opt}^\top z (\sigma\lr({T\opt}^\top z) - \sigmauni)^\top\right)
  > 0 \right\} \\
  \mc{T}_- & = \left\{{T\opt}^\top z \mid \tr \left(Q^\top {T\opt}^\top z
  (\sigma\lr({T\opt}^\top z) - \sigmauni)^\top\right) < 0 \right\}
\end{align*}
have positive Lebesgue measure.
Then for any $\mc{T}_\epsilon \subset \mc{T}_+$ and $\mc{T}_{-\epsilon}
\subset \mc{T}_-$, we must have $\tr(Q^\top A(\mc{T}_\epsilon)) > 0$ and
$\tr(Q^\top A(\mc{T}_{-\epsilon})) < 0$, as desired, and we may take the
sets $\mc{T}_{\pm \epsilon}$ to have Lebesgue measure at most $\epsilon$.
By absolute continuity of Lebesgue integral, as $\epsilon \to 0$ it follows
$A(\mc{T}_\epsilon) \to 0$ and $A(\mc{T}_{-\epsilon}) \to 0$ uniformly with
$\epsilon$.

Now we are ready to prove the lemma.
Consider the tilted gradient function
\begin{align*}
  F(T,A) = \Ep \brk{Z \prn{ \sigma\lr(T^\top Z)
      - \sigma\lr({T\opt}^\top Z)}^\top} + A,
\end{align*}
which satisfies $F(T\opt, 0)= 0$, and for which the linear mapping
\begin{equation*}
  D(T) =
  \nabla_T F(T, 0) : \R^{(k-1) \times (k-1)} \to \R^{(k - 1) \times (k - 1)},
  ~~~
  D(T)[M] \defeq \E[Z (\nabla \sigma\lr(T^\top Z) M Z)^\top]
\end{equation*}
is invertible at $D(T\opt)$.
By construction of the matrices
$A_{\pm \epsilon} \defeq A(\mc{T}_{\pm \epsilon})$,
we also know that
there exist solutions $T(\pm \epsilon)$ satisfying
$F(T(\epsilon), A_\epsilon) = 0$ and
$F(T(-\epsilon), A_{-\epsilon}) = 0$.
By the implicit function theorem and that $\nabla_A F(T, A) = \textup{Id}$,
we thus obtain
\begin{align*}
  T(\epsilon) & = T\opt - D(T\opt)^{-1} \nabla_A F(T\opt, 0)
  A_\epsilon + O(\norm{A_\epsilon}^2) \\
  & = T\opt - D(T\opt)^{-1} A_\epsilon + O(\norm{A_\epsilon}^2),
\end{align*}
and similarly $T(-\epsilon) = T\opt - D(T\opt)^{-1} A_{-\epsilon} +
O(\norm{A_{-\epsilon}}^2)$.
Without explicitly computing the Jacobian, we may still conclude that at
least one of $T(\epsilon)$ and $T(-\epsilon)$ cannot align with $T\opt$, as
$T(\epsilon) - T^\star$ and $\wb{T}(\epsilon) - T^\star$ belong to distinct
rays.

\subsection{Proof of Theorem~\ref{thm:maj-multiindex}}
\label{proof:maj-multiindex}

We prove the theorem in two parts. In the first we verify the validity of
the ansatz $\Theta_m = U\opt T_m$, and in the second we show the claimed
asymptotics of $T_m$.

\paragraph{Part 1: Ansatz for the population loss.}
Let $Z = {U\opt}^\top X \sim \normal(0, {U\opt}^\top \Sigma U\opt)$, and let
$A \in \R^{d \times (k-1)}$ satisfy
\begin{align*}
  0 = \cov(X - AZ, Z) = \Sigma U^\star  - A {U^\star}^\top \Sigma U^\star,
\end{align*}
i.e., $A = \Sigma U^\star ({U^\star}^\top \Sigma U^\star)^{-1}$.
Then $X-AZ$ and $Z$ are independent. Consider the lower dimensional problem
in $\R^{k-1}$ with the covariates $X$ replaced by $Z$ and $\Theta^\star$
replaced by $T^\star$,
with associated loss (abusing notation)
\begin{equation*}
  \wb{L}_m(T) \defeq \E\left[-\log P_T(Y_m^+ \mid Z)\right]
  = \E\left[\sloss(T^\top Z, Y_m^+)\right],
\end{equation*}
where $Y_m^+$ denotes majority vote and $P_T$ the logistic regression model.
The loss $L_m$ is still strictly  convex and coercive, so
it has unique minimum $T_m \in \R^{(k-1) \times (k-1)}$ satisfying
\begin{align*}
  \nabla_{\Theta} \wb{L}_m(T_m)
  = \E\left[Z (\sigma\lr(T_m^\top Z) - \rho_m({T\opt}^\top Z))^\top\right]
  = 0,
\end{align*}
where we recall the notation that
$\rho_m(v) = (P(Y_m^+ = 1), \ldots, P(Y_m^+ = k))$
when $Y_i \simiid \cat(v)$.

We demonstrate $\Theta_m = U\opt T_m$ minimizes $L_m$.
Indeed,
\begin{align*}
  \nabla_{\Theta} L_m(\Theta_m)
  & = \Ep \brk{X \prn{ \sigma\lr(T_m^\top Z) - \rho_m({T\opt}^\top Z)}^\top} \\
  & = A \underbrace{\Ep \brk{Z \prn{ \sigma\lr(T_m^\top Z)
        - \rho_m({T\opt}^\top Z)}^\top}}_{
    = 0~\text{by stationarity of }\wb{L}_m}
  + \Ep \brk{(X-AZ) \prn{ \sigma\lr(T_m^\top Z) - \rho_m({T\opt}^\top Z)}^\top} \\
  & \stackrel{(\star)}{=}
  \underbrace{\Ep [X-AZ]}_{=0} \cdot \,
  \Ep[\sigma\lr(T_m^\top Z) - \rho_m({T\opt}^\top Z)]^\top = 0,
\end{align*}
where equality~$(\star)$ uses the independence between $X-AZ$ and $Z$.

\paragraph{Part 2: Asymptotics of $T_m$.}
We prove $\|T_m\| \to \infty$ and $T_m/\|T_m\| - T^\star / \|T^\star\| \to
0$.
\begin{lemma}
  \label{lemma:norms-biggeroo}
  Under the conditions of the theorem,
  $\Theta_m = \argmin_\Theta L_m(\Theta)$
  satisfies
  $\opnorm{\Theta_m} = \opnorm{T_m} \to \infty$
  and $L_m(\Theta_m) \to 0$.
\end{lemma}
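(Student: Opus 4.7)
\textbf{Proof plan for Lemma~\ref{lemma:norms-biggeroo}.} The strategy splits into two stages: first show $L_m(\Theta_m) \to 0$ by exhibiting an explicit sequence $t\Theta^\star$ along which the loss vanishes, then derive $\opnorm{\Theta_m} \to \infty$ by a Fatou/compactness contradiction. Define the limiting loss $L_\infty(\Theta) \defeq \E[\sloss(\Theta^\top X, y\opt(X))]$ with $y\opt(x) = \argmax_{1 \le i \le k} \<\theta^\star_i, x\>$ (convention $\theta^\star_k = 0$). Because $\Theta^\star = U^\star T^\star$ with $U^\star$ orthogonal and $T^\star$ nonsingular, the vectors $\theta^\star_1, \ldots, \theta^\star_{k-1}, 0$ are pairwise distinct, so under the Gaussian law of $X$ the argmax is almost surely strict (each pairwise-coincidence hyperplane is $X$-null).

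For the first stage I would establish two subclaims and combine them. (a) $L_\infty(t\Theta^\star) \to 0$ as $t \to \infty$: the identity $\sloss(t\Theta^{\star\top}x, y\opt(x)) = \log(1 + \sum_{j \neq y\opt(x)} e^{t\<\theta^\star_j - \theta^\star_{y\opt(x)}, x\>})$ has only non-positive exponents, so the integrand is sandwiched in $[0, \log k]$ and decreases pointwise to $0$ on the strict-argmax event; dominated convergence with constant dominator $\log k$ finishes it. (b) For each fixed $\Theta$, $L_m(\Theta) \to L_\infty(\Theta)$: link consistency~\eqref{eqn:consistent-link} upgrades the strict argmax of ${\Theta^\star}^\top X$ into a strict argmax of $\sigma({\Theta^\star}^\top X)$, so a Hoeffding bound on the empirical vote frequencies gives $P(Y_m^+ \neq y\opt(X) \mid X) \to 0$ almost surely; together with the envelope $\sloss(\Theta^\top X, y) \le 2\opnorm{\Theta}\ltwo{X} + \log k$, which is Gaussian-integrable, dominated convergence yields the limit. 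Combining these, for any $\epsilon > 0$ fix $t_\epsilon$ with $L_\infty(t_\epsilon\Theta^\star) < \epsilon/2$, whence $L_m(t_\epsilon\Theta^\star) < \epsilon$ for all sufficiently large $m$, and optimality of $\Theta_m$ gives $L_m(\Theta_m) \le L_m(t_\epsilon\Theta^\star) < \epsilon$.

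For the second stage, suppose toward a contradiction that $\opnorm{\Theta_m}$ fails to diverge along some subsequence $(m_j)$; after passing to a further subsequence, assume $\Theta_{m_j} \to \Theta_\infty$ with $\opnorm{\Theta_\infty} < \infty$. The same strict-argmax/Hoeffding ingredient from (b) yields $P(Y_{m_j}^+ = y \mid X) \to \indic{y = y\opt(X)}$ almost surely, and continuity of $\sloss$ then gives the pointwise a.s.\ convergence $\E[\sloss(\Theta_{m_j}^\top X, Y_{m_j}^+) \mid X] \to \sloss(\Theta_\infty^\top X, y\opt(X))$. The envelope $2 \sup_j \opnorm{\Theta_{m_j}} \cdot \ltwo{X} + \log k$ is finite and integrable, so dominated convergence delivers $L_{m_j}(\Theta_{m_j}) \to L_\infty(\Theta_\infty)$. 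Since $\sloss > 0$ everywhere for finite $\Theta_\infty$, $L_\infty(\Theta_\infty) > 0$, contradicting $L_m(\Theta_m) \to 0$. The main obstacle is subclaim (b): upgrading the qualitative iff in~\eqref{eqn:consistent-link} into quantitative concentration of $Y_m^+$ on $y\opt(X)$ requires the strict-argmax step (exploiting pairwise distinctness of the $\theta^\star_i$ together with Gaussianity of $X$) before Hoeffding can be applied; the remainder reduces to routine dominated-convergence bookkeeping.
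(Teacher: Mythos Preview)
Your proposal is correct and follows the same overall architecture as the paper: show $L_m(\Theta_m)\to 0$ by comparing with $t\Theta^\star$ for large $t$, then argue that no bounded sequence of $\Theta$'s can achieve vanishing loss. Your subclaims (a) and (b) and their proofs match the paper's first stage essentially verbatim.

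The one tactical difference is in the second stage. You run a compactness/subsequence argument: extract a convergent subsequence $\Theta_{m_j}\to\Theta_\infty$, push the double limit through via dominated convergence to get $L_{m_j}(\Theta_{m_j})\to L_\infty(\Theta_\infty)>0$, and contradict. The paper instead exhibits a direct, $m$-uniform pointwise lower bound: for $\opnorm{\Theta}\le r$ one has $\sloss(\Theta^\top x,y)\ge \log\bigl(1+(k-1)e^{-\sqrt{2}\,r\ltwo{x}}\bigr)$ regardless of $y$, so $L_m(\Theta)\ge g(r):=\E[\log(1+(k-1)e^{-\sqrt{2}\,r\ltwo{X}})]>0$ for every $m$. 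This sidesteps the double limit entirely and even yields the quantitative growth estimate $\opnorm{\Theta_m}\ge g^{-1}(2h(R))$. Your route is perfectly valid and perhaps more portable to settings where such a clean uniform bound is unavailable; the paper's is shorter here because the logistic loss admits this explicit minorant.
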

\begin{proof}
  When $\opnorm{T} \le r$, for $\Theta = U^\star T$ and $i, j \in [k]$ we have
  \begin{align*}
    |\<\theta_i - \theta_j, x\>|
    = |\<U\opt T (e_i - e_j), x\>|
    \le \ltwo{e_i - e_j} \opnorm{T} \ltwo{x}
    \le \sqrt{2} r \ltwo{x}
  \end{align*}
  Therefore we have pointwise lower bound for the loss
  \begin{align*}
    \sloss(\Theta^\top x, y)
    & = \log \bigg(\sum_{i=1}^{k} \exp(\<\theta_i - \theta_y, x\>)\bigg)
    \geq \log \prn{1 + (k-1) \exp\{-\sqrt{2} r \ltwo{x}} > 0.
  \end{align*}
  Letting $g(r) := \E[\log(1 + (k-1) \exp(-\sqrt{2} r \ltwo{X}))] > 0$,
  which is a strictly decreasing function of $r$,
  we see that for all $m \in \N$ and $\opnorm{\Theta}
  = \opnorm{T} \le r$, $L_m(\Theta) \geq g(r)$.

  On the other hand, for a real number $R > 0$, consider $\Theta_R \defeq R
  \Theta\opt / \opnorm{\Theta\opt}$, whose columns $\theta_1, \ldots,
  \theta_k$ are scaled multiples of those of $\Theta\opt$.
  It is clear from majority vote consistency that $\rho_m(\Theta_R x) \to
  e_{y\opt(x)}$ as $m$ or $R \to \infty$, and so
  \begin{align*}
    L_m(\Theta_R) & = \Ep [\sloss(\Theta_R^\top X, Y_m^+)] \\
    & \mathop{\longrightarrow}_{m\uparrow \infty}
    \Ep \brk{\log \prn{\sum_{i=1}^{k} \exp(\<\theta_i - \theta_{y^\star(X)}, X\>)}}  \nonumber \\
    & \leq \Ep \brk{\log \prn{1 + (k-1) \exp\big(
        -R \min_{i \neq j} \ltwo{\theta\opt_i - \theta\opt_j}
        \ltwo{X} / \opnorm{T\opt} \big)}}
    =: h(R).
  \end{align*}
  We conclude that
  \begin{equation*}
    \limsup_m \inf_\Theta L_m(\Theta) \le
    h(R)
  \end{equation*}
  This implies for sufficiently large $m$, $\inf_\Theta
  L_m(\Theta) < 2h(R)$ and we must have $\norm{\Theta_m} \geq g^{-1}(2h(R))$.
  As both $g$ and $h$ monotonically decrease to $0$ on $\R_+$, we see that
  $\norm{\Theta_m} \to \infty$.
  The unitary invariance of $\opnorm{\cdot}$ gives that
  $\opnorm{\Theta_m} = \opnorm{T_m}$,
  and that $h(R) \to 0$ as $R \uparrow \infty$ implies
  $L_m(\Theta_m) \to 0$.
\end{proof}

We now demonstrate the asymptotic
alignment $T_m/\|T_m\| - T^\star / \|T^\star\| \to 0$.
Define the mis-aligned region
\begin{equation*}
  \mc{R}(\epsilon) := \left\{T \mid
  \norm{ T/\|T\| - T^\star / \|T^\star\|}
  \geq \epsilon \right\}.
\end{equation*}
Let $\Theta = U\opt T$ for some $T \in \mc{R}(\epsilon)$, and
define the set
\begin{equation*}
  \mc{X}(T)
  \defeq
  \left\{x \in \mc{X} \mid \argmax_y \<\theta_y, x\> \neq \argmax_y
  \<\theta\opt_y, x\> \right\}.
\end{equation*}
Then we have the lower bound
\begin{align*}
  L_m(\Theta)
  & = \Ep [\sloss(\Theta^\top X, Y_m^+)]
  \ge \E[\sloss(\Theta^\top X, Y_m^+) \indic{X \in \mc{X}(T)}] \\
  & \ge \E\left[e_{y\opt(X)}^\top \rho_m({\Theta\opt}^\top X)
    \log\bigg(1 + \sum_{j \neq y\opt(X)}
    \exp(\<\theta_j - \theta_{y\opt(X)}, X\>)\bigg)
    \indic{X \in \mc{X}(T)} \right] \\
  & \ge \log 2 \cdot \E\left[e_{y\opt(X)}^\top \rho_m({\Theta\opt}^\top X)
    \indic{X \in \mc{X}(T)}\right],
\end{align*}
where we use that on the set $x \in \mc{X}(T)$,
at least one column $\theta_j$ satisfies
$\<\theta_j - \theta_{y\opt(x)}, x\> \ge 0$.
By dominated convergence, as $m \to \infty$,
\begin{equation*}
  \liminf_m L_m(\Theta)
  \ge \log 2 \cdot P(X \in \mc{X}(T)).
\end{equation*}
Because $\mc{X}(T)$ is a union of subspaces,
$T \mapsto P(X \in \mc{X}(T))$ is continuous and homogeneous in $\norm{T}$,
so that $\inf_{T \in \mc{R}(\epsilon)} P(X \in \mc{X}(T)) > 0$.

We have thus shown that
$\liminf_m \inf_{\Theta \in U\opt \mc{R}(\epsilon)}
L_m(\Theta) > 0$.
However, Lemma~\ref{lemma:norms-biggeroo}
shows that $\norm{\Theta_m} \to \infty$ and
$L_m(\Theta_m) \to 0$, so we must have
$\Theta_m \not \in U\opt \mc{R}(\epsilon)$ for large $m$,
and so $T_m / \norm{T_m} \to T\opt / \norm{T\opt}$.



\bibliographystyle{abbrvnat}
\bibliography{bib}

@string{acm = {Association for Computing Machinery}}

@string{aistat07 = {Processing of 11th International Conference on
Artificial Intelligence and Statistics}}

@string{colt11 = {Proceedings of the Twenty Fourth Annual Conference on
		  Computational Learning Theory}}

@string{aos = {Annals of Statistics}}

@string{jmlr =	{Journal of Machine Learning Research}}

@string{jrss = {Journal of the Royal Statistical Society}}

@string{nips2003= {Advances in Neural Information Processing Systems 16}}

@string{nips2009= {Advances in Neural Information Processing Systems 22}}

@string{nips2010= {Advances in Neural Information Processing Systems 23}}

@string{nips2020= {Advances in Neural Information Processing Systems 33}}

@string{nips2021= {Advances in Neural Information Processing Systems 34}}

@string{nips2022= {Advances in Neural Information Processing Systems 35}}

@string{icml04 = {Proceedings of the Twenty-First International Conference on Machine Learning}}

@string{icml10 = {Proceedings of the 27th International Conference on Machine Learning}}

@string{icml13 = {Proceedings of the 30th International Conference on Machine Learning}}

@string{icml20 = {Proceedings of the 37th International Conference on Machine Learning}}

@string{icml22 = {Proceedings of the 39th International Conference on Machine Learning}}

@string{icml24 = {Proceedings of the 41st International Conference on Machine Learning}}

@string{cvpr = "Proceedings of the IEEE Conference on Computer Vision and Pattern Recognition"}

@inproceedings{AwasthiFrMaMoZh21,
title={Calibration and consistency of adversarial surrogate losses},
author={Awasthi, Pranjal and Frank, Natalie and Mao, Anqi and Mohri, Mehryar and Zhong, Yutao},
booktitle = nips2021,
year={2021}
}

@inproceedings{AwasthiMaMoZh22,
author = {Pranjal Awasthi and Anqi Mao and Mehryar Mohri and Yutao Zhong},
title = {$\mathcal{H}$-consistency bounds for surrogate loss minimizers},
year = 2022,
booktitle = icml22,
comment = {Builds off of \citet{LongSe13} to consider variants of surrogate
  risk consistency for restricted hypothesis classes.
  They define a ``minimizability gap'' $M_{\ell,\mc{H}} \defeq
  \inf_{h \in \mc{H}} R_{\ell}(h) - \E[\inf_{h \in \mc{H}}
  \E[\ell(h(X), Y) \mid X]]$, and then use this to give various surrogate
  risk regrets.
  %
  So, for example, they show that if one has a conditional excess risk
  bound with the function $\Psi$ then
  $\Psi(R_{\ell_1}(h) - R_{\ell_1,\mc{H}}^* + M_{\ell_1,\mc{H}})
  \le R_{\ell_2}(h) - R_{\ell_2,\mc{H}}^* + M_{\ell_2, \mc{H}}$.
  %
  These bounds are basically trivial applications of Jensen's inequality
  once one includes the minimizability gap.
  %
  They also make some claims that smell fishy at best: in particular, writing
  that one ``cannot hope to minimize or estimate'' the minimizability gap.
  One of those statements is vacuous---it's just a quantity---and the
  other is obviously false, because one can definitely estimate the
  minimizability gap. They also make claims that the results are tight
  (e.g. Theorem 4), but those are suspect at best as well:
  what tightness \emph{should} mean is that, in their notation,
  $R_{\ell_1}(h) - R_{\ell_1,\mc{H}}^* \asymp
  R_{\ell_2}(h) - R_{\ell_2,\mc{H}}^* + M_{\ell_2,\mc{H}}$, that is,
  the excess risk for loss 1 necessarily depends on that for loss 2 and
  \emph{additionally} the minimizability gap for 2. Otherwise it's not
  clear that one needs these terms.
  %
  Overall, lots of overblown claims.
},
}

@inproceedings{AwasthiMaMoZh22b,
	title={Multi-Class $\mathcal{H}$-Consistency Bounds},
	author={Awasthi, Pranjal and Mao, Anqi and Mohri, Mehryar and Zhong, Yutao},
booktitle = nips2022,
pages={782--795},
year={2022}
}

@article{BartlettBoMe05,
author = {Peter L. Bartlett and Olivier Bousquet and Shahar Mendelson},
title = "Local {R}ademacher Complexities",
journal = aos,
volume = 33,
number = 4,
pages = "1497--1537",
year = 2005
}

@Article{BartlettJoMc06,
  author = 	 {P. L. Bartlett and M. I. Jordan and J. McAuliffe},
  title = 	 {Convexity, classification, and risk bounds},
  journal = 	 {Journal of the American Statistical Association},
  year = 	 {2006},
  volume = 	 {101},
  pages = 	 {138--156},
}

@inproceedings{CabannesRuBa20,
author = {Vivien Cabannes and Alessandro Rudi and Francis Bach},
title = {Structured Prediction with Partial Labelling through the Infimum Loss},
year = 2020,
booktitle = icml20,
comment = {Considers structured prediction where instead of full labels, one
  receives supersets of labels $S \supset y$. Under a somewhat strong
  assumption that  there is only one consistent label for all the "uncertain"
  sets $S$--that is, for a given $x$, the intersection of the $\cap S$ with
  $P(S \mid x) > 0$ satisfies $\cap S = \{y_x\}$. So only 1 label is consistent
  (or has positive probability). But with this strong assumption, shows how
  a kernel regression approach of estimating the sampling distribution,
  then finding assignments that more or less minimize the average loss
  under this distribution (which, under consistency conditions for the
  density estimate, will eventually
  be \emph{only} the consistent label $y$) one can achieve consistency.},
}

@article{ChengAsDu22,
author = {Chen Cheng and Hilal Asi and John Duchi},
title = {How many labelers do you have? A closer look at gold-standard labels},
year = 2022,
journal = {arXiv:2206.12041 [math.ST]},
}

@article{DawidSk79,
author = {A.P. Dawid and A.M. Skene},
title = {Maximum likelihood estimation of observer error-rates using the {EM}
algorithm},
year = 1979,
journal = jrss,
volume = 28,
pages = {20--28},
}

@inproceedings{DengDoSoLiLiFe09,
author = {J. Deng and W. Dong and R. Socher and L. Li and K. Li and
          L. Fei-Fei},
title = {Image{N}et: a large-scale hierarchical image database},
year = 2009,
booktitle = cvpr,
pages={248-255},
}

@Book{DevroyeGyLu96,
  author = 	 {L. Devroye and L. Gy\"{o}rfi and G. Lugosi},
  title = 	 {A Probabilistic Theory of Pattern Recognition},
  publisher = 	 {Springer},
  year = 	 1996,
}

@inproceedings{DornerHa24,
title = {Don’t Label Twice: Quantity Beats Quality when Comparing Binary Classifiers on a Budget},
year = 2024,
author = {Florian E. Dorner and Moritz Hardt},
booktitle = icml24,
pages = {11544--11572},
comment = {It would be interesting to investigate the questions in this
  paper in a local asymptotic normality setting.},
}

@article{DuchiKhRu16,
author = {John C.\ Duchi and Khashayar Khosravi and Feng Ruan},
title = {Information Measures, Experiments,
    Multi-category Hypothesis Tests, and Surrogate Losses},
year = 2016,
journal = {arXiv:1603.00126 [math.ST]},
}

@inproceedings{DuchiMaJo10,
author = {John C. Duchi and Lester Mackey and Michael I. Jordan},
title = {On the Consistency of Ranking Algorithms},
booktitle = icml10,
year = 2010,
}

@article{DuchiMaJo12,
author = {John C. Duchi and Lester Mackey and Michael I. Jordan},
year = 2012,
title = {The Asymptotics of Ranking Algorithms},
journal = {arXiv:1204.1688 [math.ST]},
comment = {http://arxiv.org/abs/1204.1688},
}

@inproceedings{DworkKuNaSi01,
author = {C.~Dwork and R.~Kumar and M.~Naor and D.~Sivakumar},
title = {Rank aggregation methods for the web},
year = 2001,
booktitle = {Proceedings of the Tenth International World Wide Web Conference
(WWW10)},
}

@inproceedings{GaoZh11,
title={On the consistency of multi-label learning},
author={Gao, Wei and Zhou, Zhi-Hua},
booktitle=colt11,
pages={341--358},
year={2011},
}

@book{HastieTiFr09,
author = "Trevor Hastie and Robert Tibshirani and Jerome Friedman",
title = "The Elements of Statistical Learning",
year = "2009",
publisher = "Springer",
edition = {Second},
}

@book{HiriartUrrutyLe93,
author = {J. Hiriart-Urruty and C. Lemar\'echal},
title = {Convex Analysis and Minimization Algorithms {I}},
publisher = {Springer},
address = {New York},
year=1993,
comment = {Builds convex analysis and calculus from first principles,
           lots of subgradient calculus},
}

@InProceedings{Joachims06,
  author = 	 {T. Joachims},
  title = 	 {Training Linear {SVM}s in Linear Time},
  booktitle = {Proceedings of the ACM Conference on Knowledge Discovery and Data Mining (KDD)},
  year = 	 {2006}
}

@article{Keener93,
author = {James P. Keener},
title = {The {P}erron–{F}robenius Theorem and the Ranking of Football Teams},
year = 1993,
journal = {SIAM Review},
number = 1,
volume = 35,
pages = {80--93},
}

@article{Koltchinskii06a,
author = {Vladimir Koltchinskii},
title = {Local {R}ademacher complexities and oracle
inequalities in risk minimization},
year = 2006,
journal = aos,
volume = 34,
number = 6,
pages = {2593--2656},
}

@techreport{KrizhevskyHi09,
  title={Learning multiple layers of features from tiny images},
  author={Krizhevsky, Alex and Hinton, Geoffrey},
  year=2009,
  institution={University of Toronto},
}

@inproceedings{Liu07,
	title={Fisher consistency of multicategory support vector machines},
	author={Liu, Yufeng},
	booktitle=aistat07,
	pages={291--298},
	year={2007},
}

@article{LugosiVa04,
author = {G\'abor Lugosi and Nicolas Vayatis},
title = {On the {B}ayes-risk consistency of regularized boosting methods},
year = 2004,
journal = aos,
volume = 32,
number = 1,
pages = {30--55},
}

@article{MammenTs99,
author = {Enno Mammen and Alexandre B. Tsybakov},
title = {Smooth discrimination analysis},
year = 1999,
journal = aos,
volume = 27,
pages = {1808--1829},
}

@inproceedings{MaoMoZh23,
	title={Cross-entropy loss functions: Theoretical analysis and applications},
	author={Mao, Anqi and Mohri, Mehryar and Zhong, Yutao},
	booktitle={International conference on Machine learning},
	pages={23803--23828},
	year={2023},
	organization={pmlr}
}

@article{MaoMoZh24,
	title={Multi-label learning with stronger consistency guarantees},
	author={Mao, Anqi and Mohri, Mehryar and Zhong, Yutao},
	journal={Advances in neural information processing systems},
	volume={37},
	pages={2378--2406},
	year={2024}
}

@article{MaoMoZh24b,
	title={A universal growth rate for learning with smooth surrogate losses},
	author={Mao, Anqi and Mohri, Mehryar and Zhong, Yutao},
	journal={Advances in neural information processing systems},
	volume={37},
	pages={41670--41708},
	year={2024}
}

@inproceedings{MaoMoZh25,
	title={Enhanced $ H $-Consistency Bounds},
	author={Mao, Anqi and Mohri, Mehryar and Zhong, Yutao},
	booktitle={Algorithmic Learning Theory},
	pages={772--813},
	year={2025},
	organization={PMLR}
}

@article{NegahbanOhSh16,
author = {Sahand Negahban and Sewoong Oh and Devavrat Shah},
year = 2016,
title = {Iterative Ranking from Pair-wise Comparisons},
journal = {Operations Research},
volume = 65,
number = 1,
pages = {266--287},
}

@article{NguyenWaJo09,
author = {Xuanlong Nguyen and Martin J. Wainwright and Michael I. Jordan},
title = {On surrogate loss functions and $f$-divergences},
year = 2009,
journal = aos,
volume = 37,
number = 2,
pages = {876--904},
}

@inproceedings{NowakBaRu20,
title={Consistent Structured Prediction with Max-Min Margin Markov Networks},
author={Nowak-Vila, Alex and Bach, Francis and Rudi, Alessandro},
booktitle = icml20,
year = 2020,
}

@article{NowozinLa11,
	title={Structured learning and prediction in computer vision},
	author={Nowozin, Sebastian and Lampert, Christoph},
	journal={Foundations and Trends in Computer Graphics and Vision},
	volume={6},
	number={3-4},
	year={2011}
}

@inproceedings{OsokinBaLa17,
	title={On Structured Prediction Theory with Calibrated Convex Surrogate Losses},
	author={Osokin, Anton and Bach, Francis and Lacoste-Julien, Simon},
	booktitle={The Thirty-first Annual Conference on Neural Information Processing Systems (NIPS)},
	year={2017}
}

@article{PedregosaBaGr17,
	title={On the consistency of ordinal regression methods},
	author={Pedregosa, Fabian and Bach, Francis and Gramfort, Alexandre},
	journal={Journal of Machine Learning Research},
	volume={18},
	pages={1--35},
	year={2017}
}

@article{PlataniosAlXiMi20,
  title={Learning from Imperfect Annotations},
  author={Emmanouil Antonios Platanios  and Maruan Al-Shedivat and Eric Xing  and Tom Mitchell},
  journal={arXiv:2004.03473 [cs.LG]},
  year= 2020
}

@inproceedings{PiresSzGh13,
title={Cost-sensitive multiclass classification risk bounds},
author={Pires, Bernardo Avila and Szepesvari, Csaba and Ghavamzadeh, Mohammad},
booktitle = icml13,
pages={1391--1399},
year={2013},
}

@Book{RockafellarWe98,
  author = 	 {R. T. Rockafellar and R. J. B. Wets},
  title = 	 {Variational Analysis},
  publisher = 	 {Springer},
  address =	 {New York},
  year = 	 {1998}
}

@article{RussakovskyDeSuKrSaMaHuKaKhBeBeFe15,
author = {Olga Russakovsky and Jia Deng and Hao Su and Jonathan Krause and
Sanjeev Satheesh and Sean Ma and Zhiheng Huang and Andrej Karpathy and
Aditya Khosla and Michael Bernstein and Alexander C. Berg and Li Fei-Fei},
title = {Image{N}et Large Scale Visual Recognition Challenge},
journal={International Journal of Computer Vision},
volume={115},
number={3},
pages={211--252},
year={2015},

}

@article{Steinwart07,
author = {Ingo Steinwart},
title = {How to compare different loss functions and their risks},
year = 2007,
journal = {Constructive Approximation},
volume = 26,
pages = {225--287},
}

@article{Stone77,
author = {Charles J. Stone},
title = {Consistent Nonparametric Regression},
year = 1977,
journal = aos,
volume = 5,
number = 4,
pages = {595--620},
}

@inproceedings{TaskarGuKo03,
	author = "B. Taskar and C. Guestrin and D. Koller",
	title = "Max-Margin Markov Networks",
	booktitle = nips2003,
	year = 2003
}

@article{TewariBa07,
author = {A. Tewari and P. L. Bartlett},
title = {On the Consistency of Multiclass Classification Methods},
journal = {Journal of Machine Learning Research},
year = 2007,
volume = 8,
pages = {1007--1025}
}

@inproceedings{TsochantaridisHoJoAl04,
   author    = {I. Tsochantaridis and T. Hofmann and T. Joachims and Y. Altun},
   title     = {Support Vector Machine Learning for Interdependent and Structured Output Spaces},
   booktitle = icml04,
   year =       {2004}
}

@Book{VanDerVaartWe96,
  author =	 {A. W. van der Vaart and J. A. Wellner},
  title = 	 {Weak Convergence and Empirical Processes: With Applications to Statistics},
  publisher = {Springer},
  address = {New York},
  year = 	 1996,
  key =		 {ISBN: 0387946403},
  comment =	 {This book is recommended by Dudley as a reference for the analysis of BootStrap}
}

@article{Vaughan18,
author = {Jennifer Wortman Vaughan},
title = {Making Better Use of the Crowd: How Crowdsourcing Can
Advance Machine Learning Research},
year = 2018,
journal = jmlr,
volume = 18,
number = 193,
pages = {1--41},
}

@inproceedings{WelinderBrPeBe10,
  title={The multidimensional wisdom of crowds},
  author={Peter Welinder  and Steve Branson and Pietro Perona  and Serge Belongie},
  year = 2010,
  pages={2424--2432},
  booktitle= nips2010,
}

@article{WhitehillWuBeMoRu09,
  title={Whose vote should count more: Optimal integration of labels from labelers of unknown expertise},
  author={Jacob Whitehill  and Tingfan Wu and Jacob Bergsma and Javier Movellan  and Paul Ruvolo},
  journal= nips2009,
  volume={22},
  pages={2035--2043},
  year={2009},
}

@InProceedings{Zhang04,
  author = 	 {T. Zhang},
  title = 	 {Solving Large Scale Linear Prediction Problems Using Stochastic Gradient Descent Algorithms},
  booktitle = icml04,
  year = 	 {2004}
}

@article{Zhang04a,
author = {T. Zhang},
title = {Statistical Analysis of Some Multi-Category Large Margin Classification Methods},
journal = {Journal of Machine Learning Research},
volume = 5,
year = 2004,
pages = {1225--1251},
comment = {Provides a framework for showing Fisher consistency of surrogate
methods for loss minimization, though only in the finite # of labels case},
}

@article{Zhang04b,
author = {T. Zhang},
title = {Statistical behavior and consistency of classification methods based
         on convex risk minimization},
journal = aos,
volume = 32,
year = 2004,
pages = {56--85},
}

@inproceedings{ZhangAg20,
title={Bayes Consistency vs. {H}-Consistency: The Interplay between Surrogate Loss Functions and the Scoring Function Class},
author={Zhang, Mingyuan and Agarwal, Shivani},
booktitle = nips2020,
year={2020}
}

\end{document}